\documentclass{article}

\usepackage[utf8]{inputenc}
\usepackage{textgreek}

\usepackage{microtype}
\usepackage{graphicx}
\usepackage{subcaption}
\usepackage{booktabs}
\usepackage{multirow}

\usepackage{hyperref}

\usepackage[preprint]{icml2026}

\usepackage{amsmath}
\usepackage{amssymb}
\usepackage{mathtools}
\usepackage{amsthm}
\usepackage{colonequals}

\usepackage[capitalize,noabbrev]{cleveref}

\theoremstyle{plain}
\newtheorem{theorem}{Theorem}[section]
\newtheorem{proposition}[theorem]{Proposition}
\newtheorem{lemma}[theorem]{Lemma}

\theoremstyle{definition}

\theoremstyle{remark}

\DeclareMathOperator*{\argmax}{arg\,max}

\DeclareMathOperator*{\Argmax}{Arg\,max}
\DeclareMathOperator*{\conv}{conv}

\newcommand{\Tau}{\mathcal{T}}

\newcommand{\eg}{\textit{e.g.}}
\newcommand{\ie}{\textit{i.e.}}

\icmltitlerunning{Tensorion: A Tensor-Aware Generalization of the Muon Optimizer}

\begin{document}

\twocolumn[
  \icmltitle{Tensorion: A Tensor-Aware Generalization of the Muon Optimizer}

  \icmlsetsymbol{equal}{*}

  \begin{icmlauthorlist}
    \icmlauthor{Vladimir Bogachev}{hse}
    \icmlauthor{Vladimir Aletov}{hse,brain}
    \icmlauthor{Alexander Molozhavenko}{hse}
    \icmlauthor{Sergei Kudriashov}{hse}
    \icmlauthor{Maxim Rakhuba}{hse}
  \end{icmlauthorlist}

  \icmlaffiliation{hse}{HSE University}
  \icmlaffiliation{brain}{Basic Research of Artificial 
  Intelligence Laboratory (BRAIn Lab)}

  \icmlcorrespondingauthor{Vladimir Bogachev}{vabogachev@hse.ru}

  \icmlkeywords{Machine Learning, ICML}

  \vskip 0.3in
]

\printAffiliationsAndNotice{} 

\begin{abstract}
  Common first-order optimizers, such as Adam, implicitly treat each parameter block as an unstructured vector, which disregards the multilinear weight structure present in many modern machine learning models. Recent work has shown that exploiting matrix structure can improve optimization dynamics. A notable example is Muon, which performs steepest descent under the spectral norm constraint. We take the next step and introduce Tensorion\footnote{https://github.com/MTML-LAB/Tensorion}, a tensor-aware optimizer that extends Muon’s constrained optimization perspective from matrices to higher-order tensors. Tensorion is built around a linear minimization oracle (LMO) over a tensor norm ball. The norm is carefully chosen to balance two objectives: tightly bounding the tensor spectral norm, while still keeping the LMO tractable. This LMO becomes computable because it reduces to operations on adaptively selected unfolding matrices. Notably, when restricted to order‑2 tensors (i.e., matrices), Tensorion recovers Muon exactly. Experiments on tensor-based computer vision problems suggest that Tensorion can offer improved convergence behavior and more stable gradient updates compared with Adam-based and existing tensor-aware baselines in the evaluated settings.

\end{abstract}

\section{Introduction}\label{sec:Introduction} 

The field of matrix optimization is emerging as a promising paradigm for the development of next-generation, efficient training algorithms for deep neural networks. In contrast to well-established methods like Adam~\citep{Kingma2014AdamAM}, which treat weights as a flat collection of numbers (parameter vectors), matrix-based approaches preserve and leverage the natural two-dimensional structure of the weight matrices.
Among these, Muon~\citep{jordan2024muon} has proven to be a particularly efficient method, achieving record-breaking performance on smaller models such as NanoGPT, and has recently been shown to scale to larger architectures~\citep{liu2025muon, kimiteam2026kimik2openagentic}.

However, many modern machine learning models are inherently tensor-valued. Flattening tensors into matrices destroys the multilinear structure and mixes independent modes, which omits information crucial for optimization. Convolutional kernels, Multi-Head Attention (MHA) \citep{devlin2019bert}, Mixture-of-Experts layers (MoE) \citep{oldfield2024multilinearmoe}, and other architectural components allow for a natural characterization as multilinear operators, \ie, higher-order tensors. 

In this work, we aim to extend the success of Muon to tensor-valued parameters. This setting naturally encompasses weights in neural networks with a convolution-based architecture and also arises in tensorization, where lower-order objects (such as vectors) are reshaped into higher-dimensional tensors \citep{novikov2015tensorizing}.

To illustrate our approach and its connection to prior work, we consider the Linear Minimization Oracle (LMO) framework~\citep{bernstein2025deriving}. Given an ascent direction, for instance the gradient $\nabla \mathcal{L}(W)$ of the loss function $\mathcal{L}(\cdot)$, one selects the search direction by solving
 \begin{equation}
    \max_{X \colon \|X\| \leq 1 }\langle \nabla \mathcal{L}\,(W), X \rangle.
    \label{eq:muon_lmo_minimization}
  \end{equation}
The specific norm  $\|\cdot\|$ chosen here essentially defines the geometry of the problem, and leads to completely different optimization trajectories \citep{pethick2025training}.
If we take the Euclidean (Frobenius) norm, the solution recovers the standard gradient descent. This is unsurprising, as the Frobenius norm treats a matrix as a flat vector and ignores its inherent structure.
In contrast, Muon employs the matrix spectral norm, which preserves the matrix structure and leads to an (approximate) orthogonalization of the gradient $\nabla \mathcal{L}(W)$ at each step. Other methods like signSGD~\citep{bernstein2018signsgd} over $\ell_\infty$-norm balls also admit this type of interpretation.

A natural step in extending this approach is to employ a norm specifically designed for tensors. 
However, unlike matrices, higher-order tensors exhibit fundamentally different properties, which significantly complicates such a generalization. 
In particular, one cannot straightforwardly use the well-known tensor generalization of the spectral norm. 

Our contributions are as follows:
\begin{enumerate}
    \item We show how to implicitly define a tensor norm that, on the one hand, serves as a tight upper bound to the spectral tensor norm (see Sec.~\ref{subsec:rel_ten_spec}), on the other hand,  ensures a tractable solution to the LMO problem (see Sec.~\ref{subsec:rel_lmo_sol}). In a nutshell, using our norm results in adaptively choosing unfolding matrices of a tensor depending on their nuclear matrix norms. We call the resulting optimizer ``Tensorion''.
    \item Based on theoretical results for randomly generated tensors (see Sec.~\ref{subsec:mot_choice_unf}) and an extensive evaluation of different unfolding strategies (see Sec.~\ref{subsec:ablation-tau}), we develop the best practical strategy for choosing the unfolding approach without requiring additional recomputation at each iteration.
    \item For convolutional layers, we show (see Sec.~\ref{subsec:tens_for_conv}) how the proposed method relates to the LMO problem for the actual Jacobian of the convolution operation (similarly to how $W$ is a Jacobian of $y = Wx + b$ for fully-connected layers).
    \item We evaluate the proposed optimizer on multiple CV classification datasets across CNN and transformer-based architectures (see Sec.~\ref{sec:experiments}) and observe consistent improvements over conventional SGD and Adam-based optimizers.
\end{enumerate}

\section{Related work}\label{sec:related_work}

Tensorion is related to matrix-aware and tensor-aware optimization methods. Muon introduced matrix orthogonalization into optimization~\citep{jordan2024muon}, with later work improving its training recipe ~\citep{liu2025muon}, connecting it to spectral constraints~\citep{chen2025muon}, and extending it through manifold constraints~\citep{bernstein2025manifolds}, fixed-rank fine-tuning~\citep{bogachev2026lora}, distributed orthonormalized updates~\citep{ahn2025dion}, and general norm-based LMOs~\citep{riabinin2025gluon}. A parallel line of structured optimizers uses matrix or tensor structure for preconditioning: K-FAC uses Kronecker-factored curvature~\citep{martens2015optimizing}, with extensions to convolutional layers~\citep{grosse2016kronecker} and transformers~\citep{eschenhagen2023kronecker}; Shampoo maintains mode-wise tensor preconditioners~\citep{gupta2018shampoo}, with further analysis in~\citep{morwani2024new}; SOAP combines Shampoo eigenspaces with Adam-like updates~\citep{vyas2024soap}; and~\cite{bernstein2024old} give an operator-norm view. In contrast to these preconditioning methods, Tensorion does not maintain auxiliary matrices and instead acts directly through tensor unfoldings.

Tensor methods have also been used to compress or structure neural network parameters, including tensor-structured filters~\citep{rigamonti2013learning}, Tensor Train layers~\citep{novikov2015tensorizing}, stable tensor decompositions~\citep{phan2020stable}, and Tucker-style parameterizations~\citep{molozhavenkosfett,peshekhonov2024training}. Related tensor structures have been applied to multi-head attention~\citep{gu2025tensorllm,li2026lestd}, mixture-of-experts models~\citep{yuebintd}, fixed-rank tensor optimization~\citep{mo2025parameter}. These approaches typically modify the model parameterization, whereas Tensorion preserves the original parameters and uses tensor structure only in the optimizer. Finally, unlike~\citep{zhang2026teontensorizedorthonormalizationlayerwise}, which stacks layerwise gradients into auxiliary tensors, Tensorion operates directly on each layer's tensor parameters.

\section{Preliminaries}\label{sec:Preliminaries}

\subsection{Norms, dual norms, and LMOs}

The LMO in Eq.~\eqref{eq:muon_lmo_minimization} is naturally described through dual norms.  
For a norm $\|\cdot\|$ on a finite-dimensional inner-product space, its dual norm is
\begin{equation}
    \|M\|^\dagger
    \colonequals
    \max_{\|X\|\leq 1} \langle M, X\rangle .
\end{equation}
Consequently, the maximizers of the linear oracle over the unit ball are exactly the subgradients of the dual norm:
\begin{equation}
    \begin{split}
    &\operatorname*{Argmax}_{\|X\|\leq 1}\langle M,X\rangle
     = 
    \partial \|M\|^\dagger
    =\\
    &\left\{
    Y \;\middle|\;
    \|Y\|\leq 1,\;
    \langle M,Y\rangle=\|M\|^\dagger
    \right\},
    \end{split}
    \label{eq:muon_lmo_dual_connection}
\end{equation}
a standard consequence of convex duality~\citep{rockafellar1997convex}.  
The corresponding minimization oracle is obtained by changing the sign.

For Muon, the relevant primal norm is the matrix spectral norm,
$
    \|M\|_2 = \sigma_1(M)$,
whose dual is the nuclear norm
\begin{equation}
    \|M\|_2^\dagger
    =
    \|M\|_*
    =
    \sum_{i=1}^{r}\sigma_i(M),
\end{equation}
where $r=\operatorname{rank}(M)$ and
$\sigma_1(M)\geq \dots \geq \sigma_r(M)>0$ are the nonzero singular values.

Let $M=U\Sigma V^\top$ be a compact SVD. Then $UV^\top$ is a maximizer since
$
    \langle M,UV^\top\rangle
    =
    \operatorname{Tr}(\Sigma)
    =
    \|M\|_* $.
Thus, $UV^\top \in \partial \|M\|_*$, while $-UV^\top$ solves the corresponding minimization oracle. This dual-norm viewpoint is the basic mechanism behind the Muon update and will be the starting point for our tensor extension. For completeness, for a convex function $f:V\to\mathbb{R}$, the subdifferential at $x$ is
\begin{equation}
\begin{split}
    \partial &f(x)
    \colonequals \\
    &\left\{
    g\in V
    \;\middle|\;
    f(y)\geq f(x)+\langle g,y-x\rangle,\ \forall y\in V
    \right\}.
\end{split}
\end{equation}

\subsection{The tensor case}

Let $X,Y\in\mathbb{R}^{n_1\times\dots\times n_d}$ be $d$-th order tensors.  
We use the Euclidean tensor inner product
\begin{equation}
    \langle X,Y\rangle
    \colonequals
    \sum_{i_1=1}^{n_1}
    \dots
    \sum_{i_d=1}^{n_d}
    X_{i_1,\dots,i_d}Y_{i_1,\dots,i_d},
    \label{eq:tensor_euclidean_inner_prod}
\end{equation}
and the associated Frobenius norm $\|X\|_F \colonequals \sqrt{\langle X,X\rangle} $. The matrix spectral norm admits the variational form:
\begin{equation}
    \|X\|_2
    =
    \max_{\|u\|_2=\|v\|_2=1}
    |u^\top Xv|
    =
    \max_{\|u\|_2=\|v\|_2=1}
    |\langle X,uv^\top\rangle|.
\end{equation}
Then its multilinear analogue is a direct generalization of a matrix case:
\begin{equation}
    \|X\|_\sigma
    \colonequals
    \sup_{\substack{
    u^{(k)}\in\mathbb{R}^{n_k},\ \|u^{(k)}\|_2=1\\
    k=1,\dots,d
    }}
    \left|
    \left\langle
    X,\,
    u^{(1)}\otimes\dots\otimes u^{(d)}
    \right\rangle
    \right|,
    \label{eq:tensor_spectral_norm}
\end{equation}
where
$\big(u^{(1)}\otimes\dots\otimes u^{(d)}\big)_{i_1,\dots,i_d}
=
u^{(1)}_{i_1}\cdots u^{(d)}_{i_d}$.
For $d=2$, this definition reduces to the usual matrix spectral norm. For $d\geq 3$, however, the tensor spectral norm and its dual are substantially harder to compute and analyze~\citep{friedland2018nuclear}. This motivates the tractable relaxation developed in Section~\ref{sec:Tensorion}.

We will frequently use tensor unfoldings.  
For $\tau\subset\{1,\dots,d\}$, let $\tau^c$ denote its complement and define
\begin{equation}
    N_\tau \colonequals \prod_{k\in\tau} n_k,
    \qquad
    N_{\tau^c} \colonequals \prod_{k\notin\tau} n_k,
\end{equation}
with the convention that an empty product equals one.  
The $\tau$-unfolding of $X$, denoted by $X_{(\tau)}$, is the matrix obtained by grouping the modes in $\tau$ into rows and the remaining modes into columns:
\begin{equation}
    \big(X_{(\tau)}\big)_{\mathbf{j}_\tau,\mathbf{j}_{\tau^c}}
    \colonequals
    X_{j_1,\dots,j_d},
    \qquad
    X_{(\tau)}
    \in
    \mathbb{R}^{N_\tau\times N_{\tau^c}},
    \label{eq:tau_unfolding}
\end{equation}
where $\mathbf{j}_\tau$ and $\mathbf{j}_{\tau^c}$ are the corresponding lexicographically ordered multi-indices. 
We denote by $\mathrm{fold}_\tau(\cdot)$ the reverse operation, which maps a $\tau$-unfolded matrix back to the corresponding tensor of shape $n_1\times\cdots\times n_d$.
The standard mode-$k$ matricization is the special case
\begin{equation}
    X_{(k)} \colonequals X_{(\{k\})}.
    \label{eq:matricization}
\end{equation}

\section{Tensorion}\label{sec:Tensorion} 

This section introduces \emph{Tensorion}, our tensor-valued extension of Muon. The central difficulty is that the natural LMO over the tensor spectral norm ball is computationally intractable. Tensorion addresses this obstacle by replacing the tensor spectral norm with a duality-based relaxation that is both tighter than any unfolding spectral norm and computable. 

\subsection{Relaxed tensor spectral norm}\label{subsec:rel_ten_spec}

The tensor spectral norm in Eq.~\eqref{eq:tensor_spectral_norm} is the natural analogue of the matrix spectral norm and is therefore the most direct candidate for extending Muon beyond matrices. However, computing the tensor spectral norm, and hence solving the corresponding exact LMO, is NP-hard in general~\citep{hillar2013most}. This makes the direct tensor-spectral formulation impractical for modern tensor-valued neural-network parameters.

A standard relaxation is obtained by upper-bounding the tensor spectral norm with the spectral norm of a matrix unfolding. Namely, for any $\tau \subset \{1,\dots,d\}$, it is well known~\citep{wang2017operator} that
\begin{equation}
    \|X\|_\sigma \leq \|X_{(\tau)}\|_2 .
\end{equation}
This observation motivates unfolding-based heuristics, including the empirical strategy used in~\cite{jordan2024muon} for convolutional layers. However, selecting a single unfolding hard-codes a particular tensor geometry. Different layers, and even different tensor shapes within the same architecture, may favor different groupings of modes. Thus, the choice of unfolding becomes an additional design decision rather than a property of the optimization problem itself.

A natural attempt to remove this ambiguity is to aggregate several candidate unfoldings. Given a family of mode subsets $\Tau$, for example $\Tau=\{\{1\},\{1,2\}\}$ or $\Tau = 2^{\{1, 2, \ldots, d\}}$, one may consider
\begin{equation}
    \min_{\tau\in\Tau}\|X_{(\tau)}\|_2
    \quad\text{or}\quad
    \max_{\tau\in\Tau}\|X_{(\tau)}\|_2 .
\end{equation}
The first construction is generally not a norm: the minimum of norms need not satisfy the triangle inequality. The second construction is a norm, but it is often too conservative. In particular, it can collapse to the Frobenius norm in the presence of small or singleton modes, which are common in convolutional kernels.

To see this, consider the extreme case $X\in\mathbb{R}^{1\times n_2\times\dots\times n_d}$ and suppose that $\{1\}\in\Tau$. Then
\begin{equation}
    \max_{\tau\in\Tau}\|X_{(\tau)}\|_2
    =
    \|X_{(\{1\})}\|_2
    =
    \|\operatorname{vec}(X)^\top\|_2
    =
    \|X\|_F ,
\end{equation}
where $\operatorname{vec}(\cdot)$ denotes column-major vectorization. Hence, the relaxation becomes the Frobenius norm, which may substantially overestimate the tensor spectral norm and lose sensitivity to the multilinear structure of $X$. Moreover, this behavior prevents the relaxation from recovering the matrix Muon update on tensors of size $1\times m\times n$, which should effectively behave as matrices.

Tensorion avoids this pathology by moving the relaxation to the dual side. Instead of aggregating primal unfolding spectral norms, we construct a lower bound on the dual tensor spectral norm using unfolding nuclear norms. Specifically, we define:
\begin{equation}
  \|X\|_\Sigma^\dagger
  \colonequals
  \max_{\tau\in\Tau}\|X_{(\tau)}\|_* .
  \label{eq:tensorion_norm}
\end{equation}
The relaxed tensor spectral norm is then defined by duality:
\begin{equation}
    \|X\|_\Sigma
    \colonequals
    \left(\|X\|_\Sigma^\dagger\right)^\dagger .
\end{equation}
This dual construction is the key relaxation underlying Tensorion. It preserves the desirable upper-bound property required for a spectral-norm relaxation, while avoiding the overly conservative behavior of the maximum over primal unfolding norms. Supplementary Material~\ref{sec:appendix_relaxation_tightness} provides an empirical analysis of relaxation tightness along optimization trajectories.

The following proposition formalizes the relationship between the proposed norm, the tensor spectral norm, and unfolding-based spectral relaxations. In particular, Tensorion always upper-bounds the tensor spectral norm, yet it is no looser than the best spectral norm among the selected unfoldings.

\begin{proposition}
\label{proposition:tensorion_and_spectral_norm_ineqs}
Let $X\in\mathbb{R}^{n_1\times\dots\times n_d}$ be any tensor. Then
\begin{equation}
    \|X\|_\Sigma^\dagger \leq \|X\|_\sigma^\dagger ,
\end{equation}
and consequently
\begin{equation}
    \|X\|_\sigma
    \leq
    \|X\|_\Sigma
    \leq
    \min_{\tau\in\Tau}\|X_{(\tau)}\|_2 .
    \label{eq:tensorion_dual_bounds_nuclear}
\end{equation}
\end{proposition}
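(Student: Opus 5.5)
The plan is to establish the dual inequality $\|X\|_\Sigma^\dagger \leq \|X\|_\sigma^\dagger$ first, and then obtain both bounds in \eqref{eq:tensorion_dual_bounds_nuclear} by duality.

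\textbf{Step 1: nuclear norms of unfoldings versus the tensor nuclear norm.} Fix $\tau\in\Tau$. The key observation is that unfolding is a linear isometry for the Euclidean inner product \eqref{eq:tensor_euclidean_inner_prod}, since it only reindexes entries. Take any $Y$ with $\|Y\|_\sigma\le 1$. Then $\|Y_{(\tau)}\|_2\le 1$ is exactly the reverse of what we need, so we argue from the other side. Pick $Z$ to be the folded polar factor of $X_{(\tau)}$, namely $Z=\mathrm{fold}_\tau(UV^\top)$ for a compact SVD $X_{(\tau)}=U\Sigma V^\top$. Then
\[
\langle X,Z\rangle=\|X_{(\tau)}\|_* .
\]
To compare with $\|X\|_\sigma^\dagger=\max_{\|Y\|_\sigma\le1}\langle X,Y\rangle$, we need $\|Z\|_\sigma\le 1$. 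This follows from the known bound $\|Z\|_\sigma\le\|Z_{(\tau)}\|_2=\|UV^\top\|_2\le 1$ quoted above from~\citep{wang2017operator}. (If one prefers a self-contained argument, note that $u^{(1)}\otimes\dots\otimes u^{(d)}$ unfolds to $ab^\top$ with $a=\otimes_{k\in\tau}u^{(k)}$ and $b=\otimes_{k\notin\tau}u^{(k)}$, both of unit norm, so $|\langle Z,\otimes_k u^{(k)}\rangle|=|a^\top Z_{(\tau)}b|\le\|Z_{(\tau)}\|_2$.) Hence $\|X_{(\tau)}\|_*\le\|X\|_\sigma^\dagger$ for every $\tau$. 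Taking the maximum over $\tau\in\Tau$ gives the first claim.

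\textbf{Step 2: the left inequality in \eqref{eq:tensorion_dual_bounds_nuclear}.} We use order reversal under duality: if $f\le g$ pointwise, then $g^\dagger\le f^\dagger$, because the unit ball of $g$ is contained in that of $f$. Since $\|\cdot\|_\Sigma^\dagger$ is a norm (a maximum of seminorms, definite as soon as $\Tau$ is nonempty), we get
\[
\|X\|_\Sigma=(\|\cdot\|_\Sigma^\dagger)^\dagger(X)\ \ge\ (\|\cdot\|_\sigma^\dagger)^\dagger(X)=\|X\|_\sigma ,
\]
where the last equality uses biduality of norms in finite dimensions.

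\textbf{Step 3: the right inequality.} Again fix $\tau$. By definition $\|X\|_\Sigma=\max\{\langle X,Y\rangle:\max_{\tau'}\|Y_{(\tau')}\|_*\le1\}$. Any feasible $Y$ has $\|Y_{(\tau)}\|_*\le 1$, so by matrix duality
\[
\langle X,Y\rangle=\langle X_{(\tau)},Y_{(\tau)}\rangle\le\|X_{(\tau)}\|_2\,\|Y_{(\tau)}\|_*\le\|X_{(\tau)}\|_2 .
\]
Minimizing over $\tau$ finishes the proof.

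\textbf{Main obstacle.} The only delicate points are bookkeeping. First, the isometry of unfolding must be applied with a consistent multi-index ordering. Second, the ordering convention in Step 2 must be handled correctly: a smaller dual norm yields a larger primal norm, and one must invoke biduality, which holds because all norms here are finite-dimensional.
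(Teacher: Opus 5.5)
Your proof is correct, and it partly follows a different route from the paper.

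Steps 1 and 2 match the paper in substance. The paper gets $\|M_{(\tau)}\|_*\le\|M\|_\sigma^\dagger$ by applying its ball-inclusion lemma (Lemma~\ref{lemma:dual_ineqs}) to $\|X\|_\sigma\le\|X_{(\tau)}\|_2$. It detours through the tensor nuclear norm and cites Lim--Comon for that norm's duality with $\|\cdot\|_\sigma$. You instead exhibit the explicit feasible point $\mathrm{fold}_\tau(UV^\top)$ and compare with $\|X\|_\sigma^\dagger$ directly. This makes the Lim--Comon citation unnecessary and also gives a self-contained proof of the unfolding bound of Wang et al. Your lower bound via order reversal plus biduality is the same as the paper's.

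The genuine difference is Step 3. The paper uses its primal representation $\|X\|_\Sigma=\min_{\sum_\tau\Xi^\tau=X}\sum_\tau\|\Xi^\tau_{(\tau)}\|_2$ (Theorem~\ref{theorem:primal_relaxed_spectral_norm}, proved via Lagrangian duality and Slater's condition). It then plugs in the one-term decomposition $\Xi^{\tau}=X$. You bound the defining maximum directly with matrix H\"older, which is just the order-reversal lemma applied to $\|Y_{(\tau)}\|_*\le\|Y\|_\Sigma^\dagger$.

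Your route is shorter and needs only weak duality. In fact the same H\"older computation gives $\|X\|_\Sigma\le\sum_\tau\|\Xi^\tau_{(\tau)}\|_2$ for every decomposition, so strong duality is never needed for upper bounds of this type. What the paper's route adds is the full infimal-convolution characterization, with the minimum attained. That formula shows how $\|\cdot\|_\Sigma$ can improve on the best single unfolding by splitting $X$ across several unfoldings, but the proposition does not use this extra strength.

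One cosmetic point: delete the false start at the beginning of Step 1. $\|Y\|_\sigma\le 1$ does not imply $\|Y_{(\tau)}\|_2\le 1$, since the inequality runs the other way. As written, the sentence reads like a claim rather than a discarded idea.
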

    Proof is in Supplementary Materials, Section~\ref{sec:appendix_primal_norm},
Proposition~\ref{proposition:tensorion_and_spectral_norm_ineqs_proof}.

\subsection{Relaxed LMO solution}\label{subsec:rel_lmo_sol}

Similar in spirit to the Muon optimizer, our method solves an LMO over the $\|\cdot\|_\Sigma$ at each iteration. 
Given a tensor $M$ (\eg, the gradient or a momentum estimate at some iteration), we compute
\begin{equation}
  X^{\text{opt}} \in \Argmax_{X \colon \|X\|_\Sigma \le 1} 
  \langle M, X \rangle.
  \label{eq:tensorion_norm_LMO}
\end{equation}

Proposition~\ref{proposition:tensorion_norm_LMO_solution} characterizes the solution set of Eq.~\eqref{eq:tensorion_norm_LMO}.

\begin{proposition}\label{proposition:tensorion_norm_LMO_solution}
  The solution set of the LMO in Eq.~\eqref{eq:tensorion_norm_LMO} is the subdifferential of the Tensorion dual norm at $M$:
  \begin{equation}
    X^{\text{opt}}\in \partial \|M\|_\Sigma^\dagger = \conv_{\tau \in I}\,\left( \mathrm{fold}_\tau \left( \partial \|M_{(\tau)} \|_* \right) \right), 
    \label{eq:subdiff_dual_norm_sol}
  \end{equation}
  where  $\conv{(\cdot)}$ denotes a convex hull and
  \begin{equation}
    I = \left\{ \tau \,\middle|\, 
    \|M_{(\tau)}\|_* = \|M\|_\Sigma^\dagger \right\} \subseteq 2^{\{1, \dotsc, d\}}
    \label{eq:maximizing_indices}
  \end{equation}
  is a nonempty set of unfolding indices attaining the maximal nuclear norm.
\end{proposition}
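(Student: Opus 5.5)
The argument has two parts. First, the LMO solution set equals $\partial\|M\|_\Sigma^\dagger$. Second, this subdifferential of a pointwise maximum of convex functions can be computed with the standard max-rule (Danskin/Dubovitskii--Milyutin).

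For the first part, I would use Eq.~\eqref{eq:muon_lmo_dual_connection}, which holds for any norm on a finite-dimensional inner-product space. Applied to $\|\cdot\|_\Sigma$, it gives
\[
\Argmax_{\|X\|_\Sigma\le1}\langle M,X\rangle=\partial\big(\|M\|_\Sigma\big)^\dagger .
\]
It remains to identify $(\|\cdot\|_\Sigma)^\dagger$ with $\|\cdot\|_\Sigma^\dagger$. By definition $\|\cdot\|_\Sigma=(\|\cdot\|_\Sigma^\dagger)^\dagger$, so this is bidual reflexivity: the double dual of a norm in finite dimensions is the norm itself. That holds provided $\|\cdot\|_\Sigma^\dagger$ defined in Eq.~\eqref{eq:tensorion_norm} is a genuine norm. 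Each $X\mapsto\|X_{(\tau)}\|_*$ is a seminorm, because unfolding is a linear isometry for the Frobenius inner product. The maximum of seminorms is a seminorm, and it is definite as soon as $\Tau$ contains some $\tau$, since then $\|X_{(\tau)}\|_*=0$ forces $X=0$.

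For the second part, write $f=\max_{\tau\in\Tau} f_\tau$ with $f_\tau(X)=\|X_{(\tau)}\|_*$. Each $f_\tau$ is finite, convex and continuous, and $\Tau$ is finite. The max-rule then gives
\[
\partial f(M)=\conv\bigcup_{\tau\in I}\partial f_\tau(M),
\]
where $I$ is the active set in Eq.~\eqref{eq:maximizing_indices}. The set $I$ is nonempty because a maximum over a finite set is attained. Since $f_\tau=\|\cdot\|_*\circ A_\tau$ with $A_\tau(X)=X_{(\tau)}$ a linear bijection, the chain rule yields $\partial f_\tau(M)=A_\tau^{*}\,\partial\|M_{(\tau)}\|_*$. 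The map $A_\tau$ is a permutation of entries preserving the inner product, so $A_\tau^*=A_\tau^{-1}=\mathrm{fold}_\tau$. This gives exactly Eq.~\eqref{eq:subdiff_dual_norm_sol}.

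The main obstacle is justifying the max-rule in this generality, in particular the inclusion ``$\subseteq$''. The inclusion ``$\supseteq$'' is elementary. For $\tau\in I$ and $G\in\partial f_\tau(M)$ we have $f(Y)\ge f_\tau(Y)\ge f_\tau(M)+\langle G,Y-M\rangle=f(M)+\langle G,Y-M\rangle$, and subdifferentials are convex, so convex combinations are also subgradients.

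For ``$\subseteq$'' I would argue through directional derivatives. Danskin gives $f'(M;H)=\max_{\tau\in I}f_\tau'(M;H)$, which is the support function of $C=\conv\bigcup_{\tau\in I}\partial f_\tau(M)$. The set $C$ is compact, being the convex hull of finitely many compact sets. Since $\partial f(M)$ is the closed convex set whose support function is $f'(M;\cdot)$, equal support functions of closed convex sets give $\partial f(M)=C$.

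Alternatively, I could cite Rockafellar's Theorem on the subdifferential of a pointwise maximum of finitely many convex functions, which the paper already references. I would also double-check the edge case $M=0$. There $I=\Tau$ and all subdifferentials are the unit balls of the dual norms, and the formula still holds.
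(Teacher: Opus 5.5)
Your proposal is correct and follows essentially the same route as the paper. Both combine the Dubovitskii--Milyutin max rule for the finite maximum $\max_{\tau\in\Tau}\|M_{(\tau)}\|_*$ with the standard identification of the LMO solution set with the subdifferential of the dual norm; the paper applies Rockafellar's Theorem~23.5 directly to $\|\cdot\|_\Sigma^\dagger$, whose dual is $\|\cdot\|_\Sigma$ by definition, and this absorbs the biduality step you make explicit. Your chain-rule argument giving $\partial f_\tau(M)=\mathrm{fold}_\tau(\partial\|M_{(\tau)}\|_*)$, your check that $\|\cdot\|_\Sigma^\dagger$ is a genuine norm, and your support-function proof of the ``$\subseteq$'' inclusion fill in details the paper leaves implicit.
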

  Proof is in Supplementary Materials Section~\ref{sec:appendix_primal_norm}, Proposition~\ref{proposition:tensorion_norm_LMO_solution_proof}.
  
Importantly,  Proposition~\ref{proposition:tensorion_norm_LMO_solution} gives us a practical way to solve the LMO in Eq.~\eqref{eq:tensorion_norm_LMO} by finding an unfolding with the largest nuclear norm.
In particular, for any $\tau \in I$, one admissible choice is obtained by folding $UV^\top$ back along the unfolding $\tau$, where $U$ and $V$ are the left and right singular vectors from the singular value decomposition $M_{(\tau)} = U\Sigma V^\top$.

The best unfolding can be different for different layers and even change during the optimization process.
We found that searching for optimal unfoldings during every iteration of the optimization process is excessive (see numerical experiments). 
Even though it is possible to do it efficiently, using, \eg, Newton-Schulz iteration run in parallel, there are simpler ways that lead to similar optimization trajectory in practice.

\subsection{Motivation for the choice of unfolding shape}\label{subsec:mot_choice_unf}

Observe that for any unfolding $M_{(\tau)} \in \mathbb{R}^{m_{\tau} \times n_{\tau}}$
\begin{equation}
  \|M\|_F^2 
  = \|M_{(\tau)}\|_F^2 
  = \sum_{j=1}^{r_\tau} \left(\sigma_j^{(\tau)}\right)^2,\; r_{\tau} = \mathrm{rank}\,(M_{(\tau)}),
  \label{eq:frobenius_norm_unfoldless}
\end{equation}
where $\{\sigma_j^{(\tau)}\}$ denote the singular values of $M_{(\tau)}$, $m_\tau = \prod_{i \in \tau} n_i, \, n_\tau=\prod_{i \in \tau^c} n_i$.
Since unfolding only permutes tensor entries, the Frobenius norm of the unfolding is independent of the choice of $\tau$. Hence, for a fixed Frobenius norm, maximizing the nuclear norm reduces to maximizing the sum of singular values subject to a fixed sum of their squares. By the Cauchy--Schwarz inequality, the maximum is attained when the norm budget is distributed uniformly across all available singular values. Consequently, the maximal nuclear norm grows with the number of potentially nonzero singular values, motivating the heuristic of choosing an unfolding with the largest possible number of nonzero singular values.

The following formal statement further supports the heuristic by providing a bound that justifies choosing the most balanced unfolding.

\begin{proposition}
\label{prop:randomized_bound}
Let the entries of $ M\in\mathbb R^{n_1\times\cdots\times n_d}$ be i.i.d. copies of a mean-zero random variable $Z$ 
with $\|Z\|_{\psi_1}\le K$, and let $N=\prod_{i=1}^{d} n_i$. 
Consider any unfolding $M_{(\tau)} \in \mathbb{R}^{m_\tau \times n_\tau}$ with $m_\tau \leq n_\tau$ 
such that $m_{\tau}n_{\tau}  = \prod_{i=1}^d n_i$. 
Then, there exists a constant $C > 0$ depending only on $K$ such that, 
for any $\delta\in(0,1)$, with probability at least $1-\delta$, 
simultaneously for every unfolding $\tau$,
\begin{equation}
\label{eq:prob_bound}
\|M_{(\tau)}\|_*\le C \Bigl(m_\tau \sqrt{n_\tau} + \sqrt{m_\tau} \Bigl(\sqrt{\log\tfrac2\delta} + \log \tfrac{2}{\delta}\Bigr)\Bigr) 
\end{equation}
Moreover, for a fixed total dimension $m_{\tau}n_{\tau}$, the value of this upper bound is maximized when the unfolding is square, i.e., $m_\tau = n_\tau = \sqrt{m_\tau n_\tau}$ if such unfolding exists, or with the most "balanced" one otherwise.
\end{proposition}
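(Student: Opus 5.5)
Since the unfolding $M_{(\tau)}$ has $m_\tau\le n_\tau$ rows, it has at most $m_\tau$ nonzero singular values, and Cauchy--Schwarz gives
\[
\|M_{(\tau)}\|_* \le \sqrt{m_\tau}\,\|M_{(\tau)}\|_F = \sqrt{m_\tau}\,\|M\|_F ,
\]
where the equality is Eq.~\eqref{eq:frobenius_norm_unfoldless}. This reduces the statement to bounding the single scalar $\|M\|_F^2=\sum_{i=1}^N Z_i^2$. That quantity is the same for every unfolding, so one high-probability event controls all unfoldings at once, and we need no union bound over $\tau$.

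\emph{Concentration of $\|M\|_F$.} Each $Z_i^2$ has $\|Z_i^2\|_{\psi_{1/2}}\le K^2$, which makes a direct Bernstein argument on $\sum_i Z_i^2$ awkward. A cleaner route is to treat $\|M\|_F=\|\operatorname{vec}(M)\|_2$ as a norm of a random vector with independent sub-exponential coordinates. First, $\mathbb E\|M\|_F\le(\mathbb E\|M\|_F^2)^{1/2}=\sqrt{N\,\mathbb E Z^2}\le CK\sqrt N$. For the deviation, the map $x\mapsto\|x\|_2$ is convex and $1$-Lipschitz. Talagrand-type concentration for vectors with independent $\psi_1$ coordinates (for example Bernstein applied through a decomposition, or the Latała/Adamczak inequality for Lipschitz functions of sub-exponential variables) then gives
\[
\|M\|_F\le C\bigl(\sqrt N + \sqrt{\log\tfrac2\delta}+\log\tfrac2\delta\bigr)
\]
with probability at least $1-\delta$. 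This is the exact mixed sub-Gaussian/sub-exponential tail shape that appears in Eq.~\eqref{eq:prob_bound}. I expect this to be the main obstacle: we must pick a concentration inequality that is valid for merely $\psi_1$ entries and yields the tail $\sqrt{t}+t$ with constants depending only on $K$. If the Lipschitz route is hard to cite cleanly, a fallback is to truncate $Z$ at level $K\log(N/\delta)$. That fallback costs a logarithmic factor, which we would then need to absorb or avoid.

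\emph{Assembling the bound.} Multiplying by $\sqrt{m_\tau}$ and using $\sqrt{m_\tau}\sqrt N=\sqrt{m_\tau}\sqrt{m_\tau n_\tau}=m_\tau\sqrt{n_\tau}$ gives exactly Eq.~\eqref{eq:prob_bound}. Because the event concerns only $\|M\|_F$, the bound holds simultaneously for all $\tau$.

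\emph{Maximizing the bound.} Fix $N=m_\tau n_\tau$ and write $m=m_\tau$. The bound becomes
\[
C\bigl(\sqrt{m}\,\sqrt N+\sqrt m\, c_\delta\bigr)=C\sqrt m\,(\sqrt N+c_\delta),
\]
which is increasing in $m$. Under the constraint $m\le n=N/m$, i.e.\ $m\le\sqrt N$, it is therefore maximized at the largest admissible $m$. That is the square unfolding $m=\sqrt N$ when one exists, and otherwise the most balanced one, the admissible $m_\tau$ closest to $\sqrt N$ from below.
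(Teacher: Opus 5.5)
Your Cauchy--Schwarz reduction $\|M_{(\tau)}\|_*\le\sqrt{m_\tau}\,\|\operatorname{vec}(M)\|_2$, the single event covering every $\tau$, the identity $\sqrt{m_\tau N}=m_\tau\sqrt{n_\tau}$, and the monotonicity-in-$m_\tau$ argument all match the paper's proof. The gap is in the one step that carries all the probabilistic content. You split the bound into $\mathbb E\|M\|_F\lesssim K\sqrt N$ plus a dimension-free deviation $C(K)(\sqrt L+L)$, with $L=\log(2/\delta)$, obtained from Lipschitz concentration. No such deviation inequality holds for general mean-zero $\psi_1$ entries. Take $Z=\pm a$ with probability $p/2$ each and $Z=0$ otherwise. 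Then $\|Z\|_{\psi_1}=a/\log(1+1/p)=:K$, and $\|\operatorname{vec}(M)\|_2=a\sqrt B$ with $B\sim\mathrm{Bin}(N,p)$. For $Np\to\infty$ this fluctuates around its mean like $\tfrac a2\,\mathcal N(0,1)$, i.e.\ on the scale $K\log(1+1/p)$. Letting $p\to0$ and $N\to\infty$ with $Np\to\infty$ and $K$ fixed, the probability that $\|\operatorname{vec}(M)\|_2\ge\mathbb E\|\operatorname{vec}(M)\|_2+C(K)(\sqrt L+L)$ tends to $\tfrac12$.

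None of the tools you name closes this. Talagrand's $\min(t^2,t)$ inequality needs the exponential law or a Poincar\'e inequality, which discrete laws like this one lack. The Adamczak-type inequalities for unbounded entries replace $K$ by $\|\max_i|Z_i|\|_{\psi_1}\asymp K\log N$, which produces a term $KL\log N$ that $K(\sqrt N+\sqrt L+L)$ cannot absorb when $L\gg\sqrt N$. The truncation fallback loses a logarithm, as you note yourself. So the plan as written does not reach the stated bound.

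The missing idea is to concentrate the square $S=\sum_i Z_i^2$ rather than the norm, and let the leading $\sqrt N$ term absorb a deviation that genuinely grows with $N$. The Bernstein route you called awkward is exactly what works, because sums of independent $\psi_{1/2}$ variables do obey a Bernstein-type tail. The paper gets it from the Hanson--Wright inequality for $\alpha$-sub-exponential vectors of G\"otze--Sambale--Sinulis with $\alpha=1$ and $A=I_N$:
\[
\mathbb P(S-N\sigma^2\ge u)\le 2\exp\bigl(-c\min(u^2/(K^4N),\sqrt u/K)\bigr),\qquad \sigma^2=\mathbb E Z^2 .
\]
Choosing $u\asymp K^2(\sqrt{NL}+L^2)$ and taking square roots gives $\|\operatorname{vec}(M)\|_2\le\sigma\sqrt N+CK\bigl((NL)^{1/4}+L\bigr)$. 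The AM--GM bound $(NL)^{1/4}\le\tfrac12(\sqrt N+\sqrt L)$, together with $\sigma\le 2K$, turns this into $CK(\sqrt N+\sqrt L+L)$. In the example above, the large fluctuation is harmless only because $\mathbb E\|\operatorname{vec}(M)\|_2\approx a\sqrt{Np}$ sits far below $K\sqrt N$. Your mean-plus-dimension-free-deviation split throws that slack away.
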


    Proof is in Supplementary Materials Section~\ref{sec:proof_randomized_bound}, Proposition~\ref{prop:randomized_bound_app}.

Therefore, given that all the assumptions hold, we expect for the gradient tensor unfoldings whose row and column dimensions are more balanced to yield larger nuclear norms. 
Motivated by this observation, we consider the following as our primary strategy for the static choice of unfolding prior to the optimization loop:
\begin{equation}
    \tau^{\text{opt}} \colonequals \argmax\limits_{\tau \in \Tau} \left(\min\left\{m_{\tau}, n_{\tau}\right\}\right),
    \label{eq:tau_opt_unfolding}
\end{equation}
and restrict the orthogonalization step to the chosen $\tau^{\text{opt}}$-unfolding.

\subsection{Tensorion algorithm}
The proposed theoretical framework leads to our main contribution, the
\emph{Tensorion Optimizer}. Algorithm~\ref{alg:one_step_of_tensorion} presents
the version of the method used in the experimental section.

The lines $3$ to $5$ contain the momentum initialization and the computation of
the $\tau^{\mathrm{opt}}$-unfolding from Eq.~\eqref{eq:tau_opt_unfolding}, which is
used to obtain a faster LMO solution. Within the main optimization loop,
Tensorion follows the Muon update scheme, with the key modification appearing in
line~$9$. 

First, the function $\mathrm{unfold}(\cdot,\tau)$ maps a $d$-dimensional tensor
to its $\tau$-unfolding, precalculated in line~$4$. Then, Newton--Schulz orthogonalization, denoted
$\mathrm{NS}(\cdot)$, is applied. Finally, $\mathrm{fold}(\cdot,\tau)$ maps the
result back to the original tensor shape. Changes with respect to the Muon
implementation of \cite{liu2025muon} are highlighted in blue.

\begin{algorithm}[tb]
    \caption{Tensorion}
    \label{alg:one_step_of_tensorion}
\begin{algorithmic}[1]
  \STATE {\bfseries Input:}
    Weight tensor $W \in \mathbb{R}^{n_1 \times \dotsm \times n_d}$, 
    step sizes $\{\eta_k\}$,
    momentum coefficient $\beta$.
  \STATE {\bfseries Output:} Weight tensor $W' \in \mathbb{R}^{n_1 \times \dotsm \times n_d}$.

  \STATE $M_0 = 0$;
  \STATE \textcolor{blue}{$\textstyle{\tau \colonequals
  \argmax_{\tau}
  \big(
  \min\big\{
  \prod_{i \in \tau} n_i,\,
  \prod_{i \not\in \tau} n_i
  \big\}
  \big)}$};

  \STATE \textcolor{blue}{$\textstyle{m_{\tau}, n_{\tau}
  \colonequals
  \prod_{i \in \tau} n_i,\,
  \prod_{i \not\in \tau} n_i}$};

  \FOR{$k \colonequals 1, \dotsc$}
    \STATE Compute gradient $G_k \in \mathbb{R}^{n_1 \times \dotsm \times n_d}$;
    \STATE $M_k \colonequals G_k + \beta M_{k - 1}$;
    \STATE $X_k \colonequals
    \textcolor{blue}{\mathrm{fold}}
    \left(
    \mathrm{NS}
    \left(
    \textcolor{blue}{\mathrm{unfold}}(M_k,\tau)
    \right),
    \tau
    \right)$;
    \STATE $\hat{\eta}_k \colonequals
    \eta_k \cdot 0.2
    \sqrt{
    \max(
    \textcolor{blue}{m_{\tau}},
    \textcolor{blue}{n_{\tau}}
    )
    }$;
    \STATE $W_{k + 1} \colonequals
    (1 - \gamma \eta_k) W_k - \hat{\eta}_k X_k$;
  \ENDFOR

  \STATE {\bfseries return} $W_{k + 1}$.
\end{algorithmic}
\end{algorithm}

\subsection{Tensorion for Convolution Layers}\label{subsec:tens_for_conv}

In this subsection, let us discuss what it means to apply Tensorion to a convolutional kernel tensor.
Let $X \in \mathbb{R}^{\text{c}_{\text{in}} \times n \times n}$ be the input to a convolutional layer and 
let $K \in \mathbb{R}^{\text{c}_{\text{out}} \times \text{c}_{\text{in}} \times h \times w}$ be its kernel. 
Let $Y \in \mathbb{R}^{\text{c}_{\text{out}} \times n \times n}$ denote the output tensor. 
The convolution operation can be viewed as a linear mapping
\begin{equation}
    \mathrm{vec}(Y) = T_K \, \mathrm{vec}(X), 
    \quad 
    T_K \in \mathbb{R}^{\text{c}_{\text{out}} n^2 \times \text{c}_{\text{in}} n^2}
\end{equation}
where $\mathrm{vec}(\cdot)$ denotes the vectorization operator and $T_K$ is a matrix of the layer determined by the kernel $K$.
Note that $T_K$ is a large matrix with additional structure, \ie, it is a sparse block matrix where each block represents a two-level Toeplitz structure~\citep{grishinatight}. It is used solely for theoretical analysis purposes and is never formed explicitly. 
The spectral properties of this convolution operator have been shown to affect generalization and training stability \citep{singla2019fantastic, agarwal2019improving}. According to \cite{grishinatight}, the spectral norm 
of the linear map $T_K$ is bounded by the spectral tensor norm as stated in Theorem~\ref{theorem:grishinatight}.

\begin{theorem}[\citet{grishinatight}]\label{theorem:grishinatight}
Let $T_K \in \mathbb{R}^{\text{c}_{\text{out}} n^2 \times \text{c}_{\text{in}} n^2}$ be the Jacobian matrix of a convolutional layer with stride $s=1$ and zero padding with parameter $p \ge 0$, or circular padding. 
Let $K \in \mathbb{R}^{\text{c}_{\text{out}} \times \text{c}_{\text{in}} \times h \times w}$ be the convolution kernel. Then
\begin{equation}
    \|K\|_\sigma \le \|T_K\|_2 \le \sqrt{hw}\,\|K\|_\sigma .
\end{equation}
\end{theorem}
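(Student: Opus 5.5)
We prove the two inequalities separately. In both, we model the convolution as a bilinear pairing between the kernel and rank-one test tensors, and we work in the circular-padding case, since zero padding only restricts the operator.

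\emph{Lower bound $\|K\|_\sigma \le \|T_K\|_2$.} By Eq.~\eqref{eq:tensor_spectral_norm}, pick unit vectors $a\in\mathbb{R}^{\text{c}_{\text{out}}}$, $b\in\mathbb{R}^{\text{c}_{\text{in}}}$, $p\in\mathbb{R}^h$, $q\in\mathbb{R}^w$ attaining $\|K\|_\sigma=\langle K, a\otimes b\otimes p\otimes q\rangle$. We then build unit-norm test signals $x = b\otimes \xi$ and $y = a\otimes \eta$, where $\xi,\eta\in\mathbb{R}^{n\times n}$ are spatial patterns. We choose them so that the bilinear form is
\[
y^\top T_K x=\sum_{o,c,i,j} K_{o,c,i,j}\, a_o b_c \langle \eta, S_{ij}\xi\rangle ,
\]
where $S_{ij}$ is the shift by $(i,j)$, and so that $\langle\eta,S_{ij}\xi\rangle$ reproduces (or approximates) $p_iq_j$. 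A concrete choice is to make $\xi$ a single delta at a point, and to make $\eta$ the pattern $p\otimes q$ placed on the $h\times w$ window that the delta reaches. This gives $\langle \eta,S_{ij}\xi\rangle = p_iq_j$ exactly, and both signals have unit norm. For zero padding with $n\ge h,w$ we place the delta in the interior, so no wrap-around is needed. Hence $\|T_K\|_2\ge y^\top T_K x=\|K\|_\sigma$.

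\emph{Upper bound $\|T_K\|_2\le\sqrt{hw}\,\|K\|_\sigma$.} We write $T_K=\sum_{i,j} K_{:,:,i,j}\otimes S_{ij}$, a sum of $hw$ terms. For unit $x,y$, viewed as $\text{c}_{\text{in}}\times n^2$ and $\text{c}_{\text{out}}\times n^2$ matrices, the bilinear form becomes
\[
y^\top T_Kx=\sum_{i,j}\operatorname{tr}\!\big(Y^\top K_{ij} X S_{ij}^\top\big),
\qquad K_{ij}:=K_{:,:,i,j}.
\]
The plan is to bound this by a tensor-spectral pairing with an extra $\sqrt{hw}$ factor. We form the $hw$-tuple of shifted signals $X_{ij}=XS_{ij}^\top$, each of unit Frobenius norm, and interpret the sum as $\langle K, \mathcal{Z}\rangle$. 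Here $\mathcal{Z}_{o,c,i,j}=\langle Y_{o,:},(X_{ij})_{c,:}\rangle$, which is a sum over spatial positions $t$ of rank-one terms $Y_{o,t}(X_{ij})_{c,t}$ in the channel modes.

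For each fixed $t$, the slice $\mathcal{Z}^{(t)}_{o,c,i,j}=Y_{o,t}\,x_{c,t-(i,j)}$ is rank one in $o$ versus the triple $(c,i,j)$. We therefore bound
\[
|\langle K,\mathcal{Z}^{(t)}\rangle| \le \|K_{(\{1\})}\text{-type pairing}\|\cdot \|Y_{:,t}\|\,\|x_{:,t-\cdot}\|_F .
\]
We then sum over $t$ with Cauchy--Schwarz, using $\sum_t\|Y_{:,t}\|^2=1$ and $\sum_t\sum_{ij}\|x_{:,t-(i,j)}\|^2=hw$. This produces the factor $\sqrt{hw}$.

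The main obstacle is that the triple $(c,i,j)$ slice is not rank one across $c,i,j$, so this naive step only yields a mode-1 unfolding spectral norm rather than $\|K\|_\sigma$. To get the true spectral norm, I would first try the Fourier (circular) diagonalization, $T_K \cong \bigoplus_{\omega} \hat K(\omega)$ with $\hat K(\omega)=\sum_{ij}K_{ij}e^{-\mathrm{i}\langle\omega,(i,j)\rangle}$. This gives $\|T_K\|_2=\max_\omega\|\hat K(\omega)\|_2 = \max_\omega\max_{a,b}|\langle K,a\otimes b\otimes \phi_\omega\rangle|$, where $\phi_\omega$ is an $h\times w$ rank-one complex exponential with $\|\phi_\omega\|_F=\sqrt{hw}$. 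Since $\phi_\omega=\phi^{(1)}\otimes\phi^{(2)}$ factorizes, this is at most $\sqrt{hw}$ times the complex tensor spectral norm. The remaining delicate point is comparing the complex spectral norm to the real one, which I expect can cost at most a constant, or nothing after separating real and imaginary parts. The zero-padding case then follows because that operator is a compression of the circular one of a larger size.
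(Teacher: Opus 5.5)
The paper gives no proof of this statement; it only cites \citet{grishinatight}, so your argument has to stand on its own. Your lower bound is fine; for circular padding it needs $n\ge\max(h,w)$ so that the $hw$ shifts of the delta do not alias. Your Fourier reduction is also correct. However, what it actually proves is $\|T_K\|_2\le\sqrt{hw}\,\|K\|_{\sigma,\mathbb{C}}$, where $\|K\|_{\sigma,\mathbb{C}}$ denotes the supremum in Eq.~\eqref{eq:tensor_spectral_norm} taken over \emph{complex} unit vectors.

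\textbf{The gap.} The gap is the step you call delicate, and it cannot be closed. For real order-$4$ tensors the complex spectral norm can exceed the real one. With $\|\cdot\|_\sigma$ defined over $\mathbb{R}$, as in Eq.~\eqref{eq:tensor_spectral_norm}, the stated upper bound is false.

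\emph{Counterexample.} Take $c_{\mathrm{out}}=c_{\mathrm{in}}=h=w=2$ and $K_{o,c,i,j}=\cos\bigl(\tfrac{\pi}{2}(o+c+i+j)\bigr)$ for $o,c,i,j\in\{0,1\}$. Equivalently, $K=\operatorname{Re}(\alpha\otimes\alpha\otimes\alpha\otimes\alpha)$ with $\alpha=(1,\mathrm{i})^\top$.
\begin{itemize}
\item For real unit vectors, $\langle K,a\otimes b\otimes p\otimes q\rangle=\operatorname{Re}\bigl(\alpha^\top a\cdot\alpha^\top b\cdot\alpha^\top p\cdot\alpha^\top q\bigr)$. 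Since $|\alpha^\top u|=\|u\|_2=1$ for every unit $u\in\mathbb{R}^2$, this is the real part of a product of unimodular numbers, so $\|K\|_\sigma=1$.
\item At $\omega=(\pi/2,\pi/2)$, your own formula gives $\hat K(\omega)=2\alpha\alpha^\top$, whose spectral norm is $2\|\alpha\|_2^2=4$.
\item Hence, with circular padding and $4\mid n$, $\|T_K\|_2\ge 4>2=\sqrt{hw}\,\|K\|_\sigma$. With zero padding, $\|T_K\|_2\to 4$ as $n\to\infty$.
\end{itemize}

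\emph{What splitting into real and imaginary parts actually gives.} Write $\hat K(\omega)=A+\mathrm{i}B$, and expand $\cos(\omega_1 i+\omega_2 j)$ and $\sin(\omega_1 i+\omega_2 j)$ each as a sum of two rank-one $h\times w$ terms. Cauchy--Schwarz then yields $\|A\|_2,\|B\|_2\le\sqrt{hw}\,\|K\|_\sigma$, hence $\|T_K\|_2\le 2\sqrt{hw}\,\|K\|_\sigma$. The example above shows that this factor $2$ is sharp.

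\textbf{What your argument does establish.} Your approach proves the theorem in either of two corrected forms:
\begin{itemize}
\item with $\|K\|_\sigma$ read over $\mathbb{C}$, where the lower bound also holds because $T_K$ is real;
\item over $\mathbb{R}$, with constant $2\sqrt{hw}$ in place of $\sqrt{hw}$.
\end{itemize}
It cannot prove the statement as transcribed here, because that statement is false.
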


Let
$\mathcal{A}\colon K \mapsto T_K$ denote the linear map that associates a
convolution kernel with its corresponding convolution matrix. We define a
differentiable function
\[
\ell \colon
\mathbb{R}^{c_{\mathrm{out}}n^2 \times c_{\mathrm{in}}n^2}
\to \mathbb{R}
\]
such that the loss $\mathcal{L}$ of the convolution kernel can be expressed
in terms of its corresponding convolution matrix as
\begin{equation}
    \mathcal{L}\colon
    \mathbb{R}^{c_{\mathrm{out}}\times c_{\mathrm{in}}\times h\times w}
    \to \mathbb{R},
    \qquad
    \mathcal{L}(K)
    = \ell(T_K)
    = \ell(\mathcal{A}K).
\end{equation}
By the chain rule,
\begin{equation}
    \nabla_K \mathcal{L}(K)
    =
    \mathcal{A}^\ast \nabla_{T_K}\ell(T_K),
\end{equation}

where $\nabla_{T_K}\ell(T_K)$ denotes the gradient of $\ell$ with respect to $T_K$, projected onto the subspace of convolution matrices $\operatorname{Im}(\mathcal{A})$. Consequently, $\nabla_{T_K}\ell(T_K)$ has the same Toeplitz structure as $T_K$.

Hence, for any kernel perturbation
$H \in
\mathbb{R}^{\mathrm{c}_{\mathrm{out}}
\times \mathrm{c}_{\mathrm{in}}
\times h \times w}$,
\begin{equation*}
    \left\langle
        \nabla_{T_K}\ell(T_K), T_H
    \right\rangle
    =
    \left\langle
        \nabla_{T_K}\ell(T_K), \mathcal{A}H
    \right\rangle
    =
    \left\langle
        \nabla_K\mathcal{L}(K), H
    \right\rangle.
\end{equation*}
Therefore, applying the Muon LMO to the convolution operator while
restricting the update to preserve the convolutional structure is exactly
equivalent to the following LMO:
\begin{equation}
\label{eq:structured_conv_lmo}
\begin{split}
    &\max_{\substack{
        T_H \in \operatorname{Im}(\mathcal{A})\\
        \|T_H\|_2 \le 1
    }}
    \left\langle
        \nabla_{T_K}\ell(T_K), T_H
    \right\rangle =
    \\
    &\max_{\|T_H\|_2 \le 1}
    \left\langle
        \nabla_K\mathcal{L}(K), H
    \right\rangle =
    \\
    &\max_{\|\mathcal{A}H\|_2 \le 1}
    \left\langle
        \nabla_K\mathcal{L}(K), H
    \right\rangle .
\end{split}
\end{equation}

Since $T_H=\mathcal{A}H$ is the convolution matrix associated with the
kernel perturbation $H$, Theorem~\ref{theorem:grishinatight} gives
\begin{equation}
    \|H\|_\sigma
    \le
    \|T_H\|_2
    \le
    \sqrt{hw}\,\|H\|_\sigma.
\end{equation}
Consequently, denoting
\begin{equation*}
    \mathbb{B}_{\sigma}(r)
    := \{H : \|H\|_{\sigma} \le r\},
    \quad
    \mathbb{B}_{T}(r)
    := \{H : \|T_H\|_2 \le r\},
\end{equation*}
Theorem~\ref{theorem:grishinatight} yields the following inclusions between the corresponding
feasible sets:
\begin{equation}
\label{eq:conv_tensor_ball_inclusions}
    \mathbb{B}_{\sigma}\!\left(\frac{1}{\sqrt{hw}}\right)
    \subseteq
    \mathbb{B}_{T}(1)
    \subseteq
    \mathbb{B}_{\sigma}(1).
\end{equation}
Thus, the LMO over the unit tensor spectral norm ball $\mathbb{B}_{\sigma}(1)$ is an outer
relaxation of the structured Muon LMO in
Eq.~\eqref{eq:structured_conv_lmo}, while the tensor spectral norm ball $\mathbb{B}_{\sigma}\!\left({1}/{\sqrt{hw}}\right)$ provides an inner approximation. Since the LMOs
over these two tensor spectral norm balls differ only by a constant
rescaling of the feasible set, their optimal update directions coincide
up to scale. This establishes a direct connection between Muon applied
to the actual linear convolution operator and an LMO formulated directly
over the convolution kernel using the tensor spectral norm.

Combining Proposition~\ref{proposition:tensorion_and_spectral_norm_ineqs}
with Eq.~\eqref{eq:conv_tensor_ball_inclusions}, we obtain
\begin{equation*}
    \left\{
        H : \|H\|_{\Sigma} \leq \frac{1}{\sqrt{hw}}
    \right\}
    \subseteq
    \mathbb{B}_{\sigma}\!\left(\frac{1}{\sqrt{hw}}\right)
    \subseteq
    \mathbb{B}_{T}(1).
\end{equation*}
Therefore, the Tensorion LMO in Eq.~\eqref{eq:tensorion_norm_LMO},
with its feasible set scaled by $1/\sqrt{hw}$ and the maximizing unfolding
selected online according to Eq.~\eqref{eq:maximizing_indices}, provides
a tractable inner approximation to the structured Muon LMO in
Eq.~\eqref{eq:structured_conv_lmo}.

\section{Experiments}\label{sec:experiments}

To validate the derived heuristic, we first ablate the choice of tensor unfolding index set $\tau$ and study its effect on the optimization dynamics (Section~\ref{subsec:ablation-tau}). We then evaluate whether the resulting improvements translate into higher final accuracy on standard CNN (Section~\ref{subsec:tiny_imagenet}) and Vision Transformer (ViT) classification benchmarks (Section~\ref{subsec:vit_swin}). Additional details, including hyperparameters (Section~\ref{sec:appendix_hyperparameters}) and further experiments on the Laplacian eigenproblem (Section~\ref{subsec:laplacian-spectrum}) and CIFAR datasets (Section~\ref{appc:cifar}), are provided in the Supplementary Material.

\subsection{Ablation on an Unfolding Set}
\label{subsec:ablation-tau}

We isolate the impact of the unfolding index set $\tau$ and evaluate our heuristic choice $\tau^{\mathrm{opt}}$ from Eq.~\eqref{eq:tau_opt_unfolding}, referred to as the \emph{offline} strategy. As a brute-force reference, we also compare against an \emph{online} strategy that evaluates all admissible unfoldings at each iteration and selects the one maximizing Eq.~\eqref{eq:maximizing_indices}. While potentially more adaptive, this online strategy is substantially more expensive in practice.

For this ablation, we evaluate all seven nontrivial unfoldings of a 4D convolutional kernel up to the complement symmetry $\tau \sim \tau^c$. Static baselines use one unfolding across all layers, while the offline strategy selects $\tau^{\mathrm{opt}}$ per layer. To isolate unfolding selection from approximation effects, we use exact SVDs instead of NS at line 7 in Algorithm~\ref{alg:one_step_of_tensorion} and train ResNet-18 on CIFAR-10 for 25 epochs with identical settings and independently tuned constant learning rates; see Supplementary Materials, Section~\ref{subsec:appendix_tau_ablation}. Figure~\ref{fig:tau-ablation} and Table~\ref{tab:tau-ablation-epoch25} show that the choice of unfolding has a substantial effect on performance. The tested unfoldings separate into two accuracy regimes: the lower-performing group consists of $\tau=\{3\}$, $\tau=\{4\}$, and $\tau=\{1,2\}$, whereas the remaining choices form a higher-performing group. The differences within each group are not statistically significant under this protocol, and therefore we do not attempt to rank individual unfoldings inside the high-accuracy group.

The online strategy, which explicitly maximizes Eq.~\eqref{eq:maximizing_indices} at every iteration, achieves the highest mean accuracy, as expected from its additional adaptivity. However, this comes at a substantially higher computational cost, and the observed gap to the other high-accuracy methods is not statistically resolved in this ablation. In contrast, the offline heuristic $\tau^{\mathrm{opt}}$ is inexpensive and still falls into the high-accuracy regime. \emph{This suggests that the heuristic succeeds at avoiding unfavorable unfoldings and recovers a competitive fixed-per-layer choice without requiring brute-force online selection.} Moreover, the elapsed time for Tensorion with offline $\tau$-selection strategy is comparable to the Muon optimizer.

\begin{figure}[h!tb]
    \centering

    \begin{subfigure}[t]{\linewidth}
        \centering
        \includegraphics[width=\linewidth]{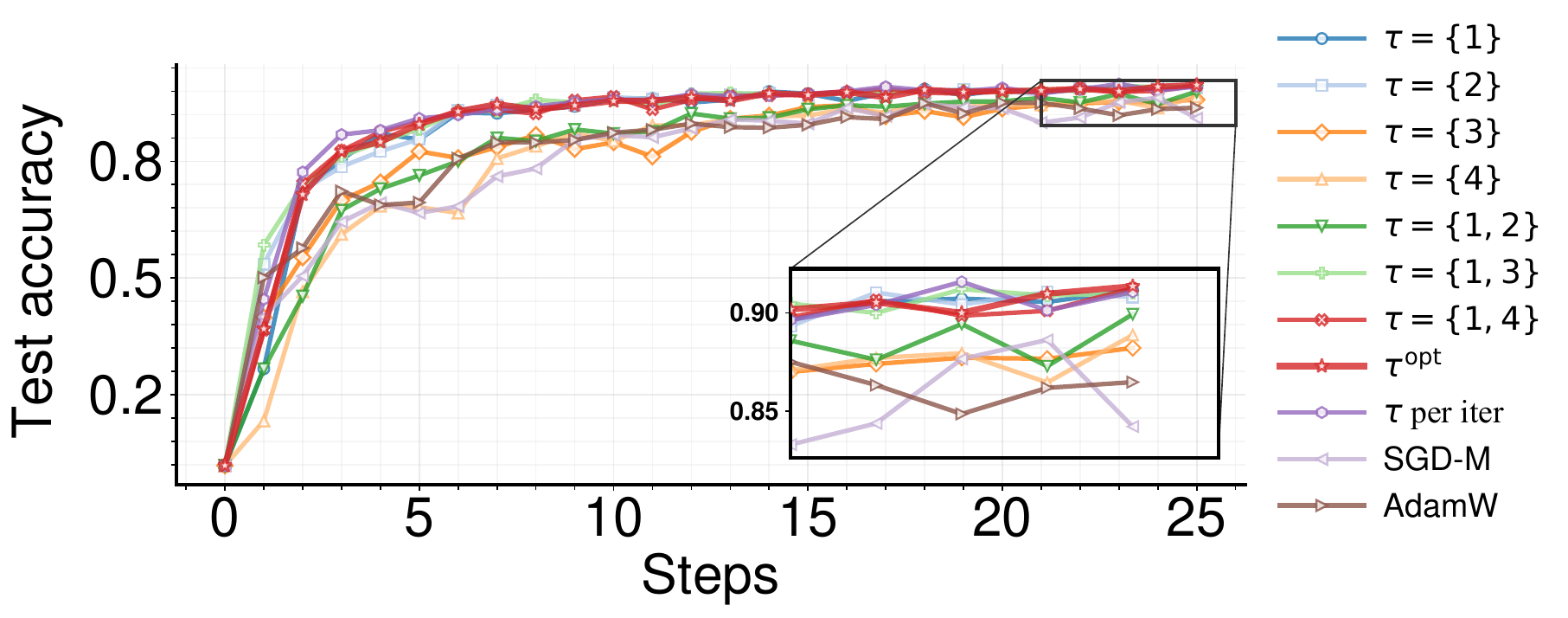}
    \end{subfigure}
    \begin{subfigure}[t]{\linewidth}
        \centering
        \includegraphics[width=\linewidth]{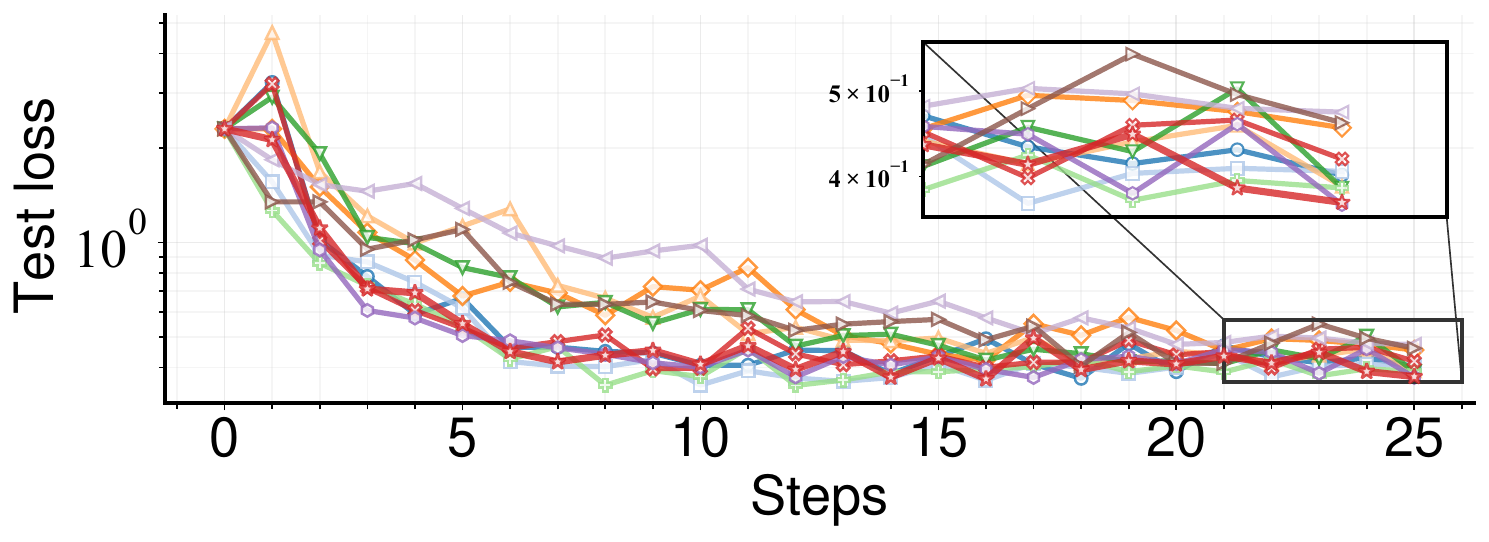}
    \end{subfigure}\hfill
    \begin{subfigure}[t]{\linewidth}
        \centering
        \includegraphics[width=\linewidth]{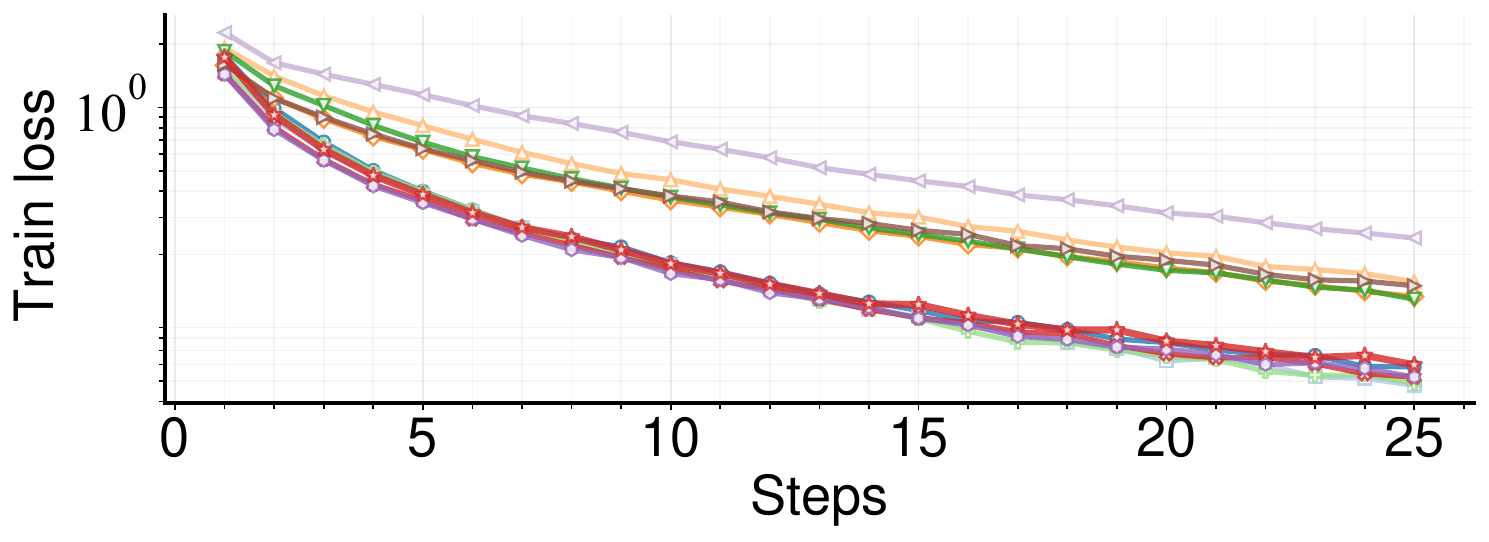}
    \end{subfigure}

    \caption{Ablation on the unfolding set \(\tau\) for ResNet-18 on CIFAR-10 (25 epochs; constant LR, no scheduler) using exact SVD for orthogonalization. Curves labeled ``\(\tau=\{\cdot\}\)'' use a fixed unfolding index set \(\tau\subset\{1,2,3,4\}\).``\(\tau^{\mathrm{opt}}\)'' selects \(\tau\) once per layer via the offline heuristic in Eq.~\eqref{eq:tau_opt_unfolding}. ``\(\tau\) per iteration'' performs online selection by explicitly maximizing \(\|X_{(\tau)}\|_*\) over \(\tau\) at each iteration as in Eq.~\eqref{eq:maximizing_indices}. SGD-M: SGD+Momentum.}
    \label{fig:tau-ablation}
\end{figure}

We include the fixed unfolding $\tau=\{1\}$ as the Muon comparison, since this choice recovers the original Muon update in our framework. The main conclusion of the ablation is not a fine-grained ordering among the competitive methods, but rather that Tensorion is sensitive to the unfolding choice and that the proposed heuristic provides a practical way to avoid poor unfoldings without per-iteration search. SGD-M and AdamW~\citep{loshchilov2017decoupled} are included as baselines and achieve lower test accuracy under the same constant-learning-rate protocol. To evaluate the applicability of Tensorion in a larger-scale image classification setting, we next provide a set of experiments on various computer vision problems.
\begin{table}[h!tp]
\centering
\caption{Ablation on the unfolding set $\tau$ for ResNet-18 on CIFAR-10 after 25 epochs. Reported values are mean $\pm$ standard deviation over 10 runs; SGD-M: SGD+Momentum.}
\label{tab:tau-ablation-epoch25}
\setlength{\tabcolsep}{4pt}
\renewcommand{\arraystretch}{1.25}
\small
\begin{tabular}{lcc}
\toprule
\textbf{Method} 
& \textbf{Test acc. $\uparrow$} 
& \textbf{Test loss $\downarrow$} \\
\midrule
$\tau=\{1\}$ 
& $90.67_{\pm\,0.32}$ 
& $0.424_{\pm\,0.026}$ \\

$\tau=\{2\}$ 
& $90.62_{\pm\,0.44}$ 
& $\mathbf{0.404_{\pm\,0.024}}$ \\

$\tau=\{3\}$ 
& $87.40_{\pm\,1.00}$ 
& $0.485_{\pm\,0.051}$ \\

$\tau=\{4\}$ 
& $88.05_{\pm\,0.69}$ 
& $0.423_{\pm\,0.028}$ \\

$\tau=\{1,2\}$ 
& $88.83_{\pm\,0.39}$ 
& $0.430_{\pm\,0.022}$ \\

$\tau=\{1,3\}$ 
& $90.30_{\pm\,0.51}$ 
& $0.426_{\pm\,0.031}$ \\

$\tau=\{2,4\}$ 
& $90.77_{\pm\,0.45}$ 
& $0.432_{\pm\,0.038}$ \\

$\tau^{\mathrm{opt}}$ 
& $90.44_{\pm\,0.36}$ 
& $0.441_{\pm\,0.024}$ \\

online $\tau$ 
& $\mathbf{91.10_{\pm\,0.39}}$ 
& $0.416_{\pm\,0.026}$ \\

SGD-M 
& $86.10_{\pm\,1.82}$ 
& $0.527_{\pm\,0.095}$ \\

AdamW 
& $86.04_{\pm\,1.05}$ 
& $0.507_{\pm\,0.041}$ \\
\bottomrule
\end{tabular}
\end{table}

\subsection{Tiny ImageNet with ResNet50}\label{subsec:tiny_imagenet}

We train a ResNet-50 model on Tiny ImageNet \citep{tiny-imagenet} for 200 epochs, comparing Tensorion against SGD and AdamW baselines. Tensorion is applied only to tensor-valued parameters of order $d \geq 3$ (\ie, convolutional kernels), while all remaining parameters ($d \leq 2$) are optimized with SGD. We report mean $\pm$ std over multiple seeds.

\begin{figure}[h]
    \centering
    \begin{subfigure}{\linewidth}
        \centering
        \hspace{-0.5cm} \includegraphics[width=\linewidth]{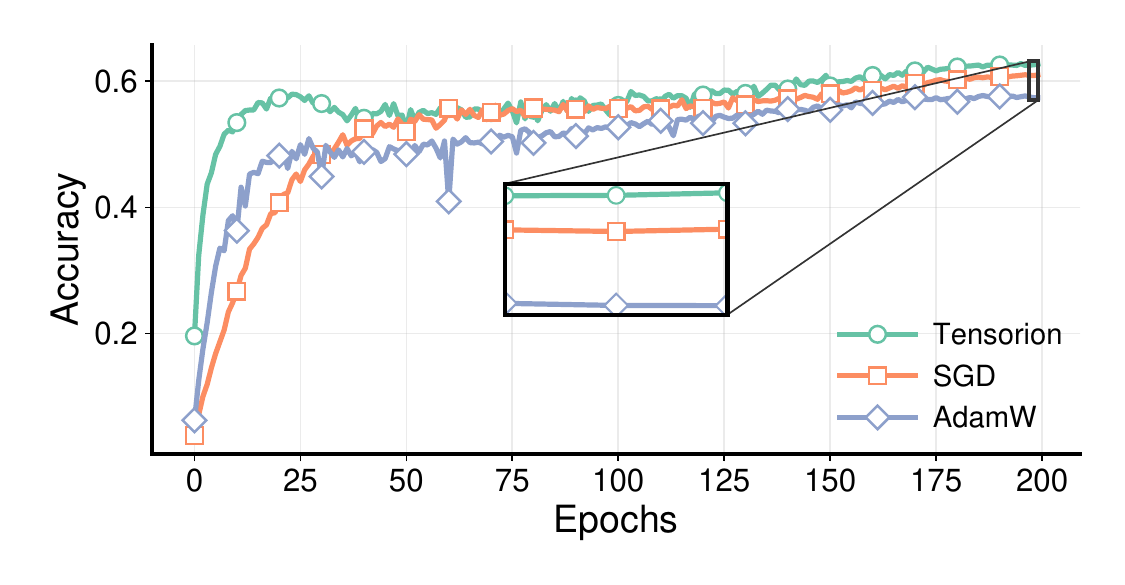}
        \caption{Test accuracy vs Epoch}
        \label{fig:timagenet-acc}
    \end{subfigure} %
    \begin{subfigure}{\linewidth}
        \centering
        \includegraphics[width=\linewidth]{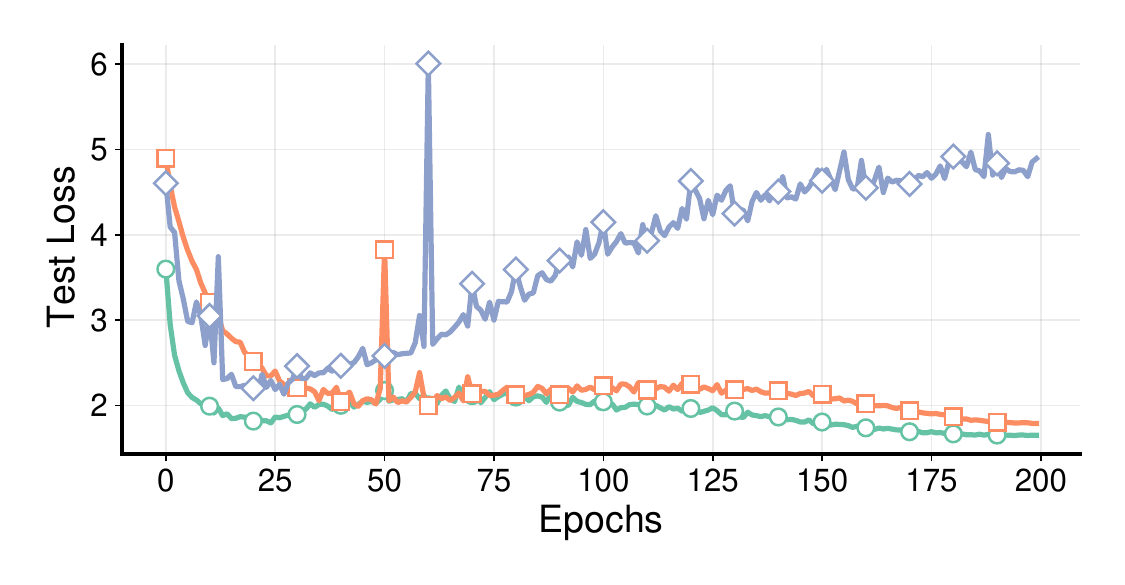}
        \caption{Test loss vs Epoch}
        \label{fig:timagenet-test-loss}
    \end{subfigure}

    \caption{Tiny ImageNet results for ResNet-50 trained for $200$ epochs. We report mean \(\pm\) std over multiple seeds for test accuracy 
    and test loss 
    Tensorion is used only for tensor-valued blocks with order \(d\ge 3\) (\eg, convolutional kernels), while all remaining parameters (\(d\le 2\)) are optimized with a first-order method (SGD).}
    \label{fig:timagenet}
\end{figure}

\begin{table}[h!tp]
\centering
\caption{Main results. Final test loss and test accuracy. C10: CIFAR-10; C100: CIFAR-100; Tiny-IN: Tiny ImageNet; RN: ResNet; SGD-M: SGD+Momentum; T-SGD: Tensorion+SGD; T-AdamW: Tensorion+AdamW.}
\label{tab:main}
\footnotesize
\setlength{\tabcolsep}{3pt}
\renewcommand{\arraystretch}{0.95}
\begin{tabular}{llccc}
\toprule
\textbf{Dataset} 
& \textbf{Model} 
& \textbf{Opt.} 
& \textbf{Loss} $\downarrow$ 
& \textbf{Acc.} $\uparrow$ \\
\midrule

\multirow{4}{*}{C10}
& \multirow{4}{*}{RN-18}
& SGD-M 
& $0.25_{\pm0.02}$ 
& $94.1_{\pm0.13}$ \\
& & AdamW 
& $0.44_{\pm0.02}$ 
& $93.7_{\pm0.3}$ \\
& & T-SGD 
& $\mathbf{0.21_{\pm0.01}}$ 
& $\mathbf{95.3_{\pm0.1}}$ \\
& & T-AdamW 
& $0.59_{\pm0.08}$ 
& $94.34_{\pm0.3}$ \\

\midrule

\multirow{4}{*}{C100}
& \multirow{4}{*}{RN-34}
& SGD-M 
& $\mathbf{0.89_{\pm0.13}}$ 
& $78.2_{\pm0.2}$ \\
& & AdamW 
& $1.33_{\pm0.1}$ 
& $74.2_{\pm0.1}$ \\
& & T-SGD 
& $0.98_{\pm0.09}$ 
& $\mathbf{79.1_{\pm0.1}}$ \\
& & T-AdamW 
& $4.89_{\pm0.11}$ 
& $73.8_{\pm0.2}$ \\

\midrule

\multirow{3}{*}{Tiny-IN}
& \multirow{3}{*}{RN-50}
& SGD-M 
& $1.79_{\pm0.06}$ 
& $61.01_{\pm0.02}$ \\
& & AdamW 
& $\mathbf{1.33_{\pm0.05}}$ 
& $57.4_{\pm0.1}$ \\
& & T-SGD 
& $1.65_{\pm0.06}$ 
& $\mathbf{62.7_{\pm0.1}}$ \\

\bottomrule
\end{tabular}
\end{table}

As shown in Figure~\ref{fig:timagenet}, Tensorion consistently outperforms both baselines in test accuracy throughout training. The advantage is even more pronounced in terms of test loss: AdamW exhibits clear overfitting, with its test loss increasing after roughly 50 steps, whereas Tensorion achieves the lowest and monotonically decreasing loss. These findings are consistent with our ResNet-18/-34 experiments on CIFAR-10/-100 reported in Appendix~\ref{appc:cifar} and summarized alongside all main results in Table~\ref{tab:main}.

\begin{figure}[h!tp]
    \centering
    \begin{subfigure}{0.9\linewidth}
        \centering        \includegraphics[width=\linewidth]{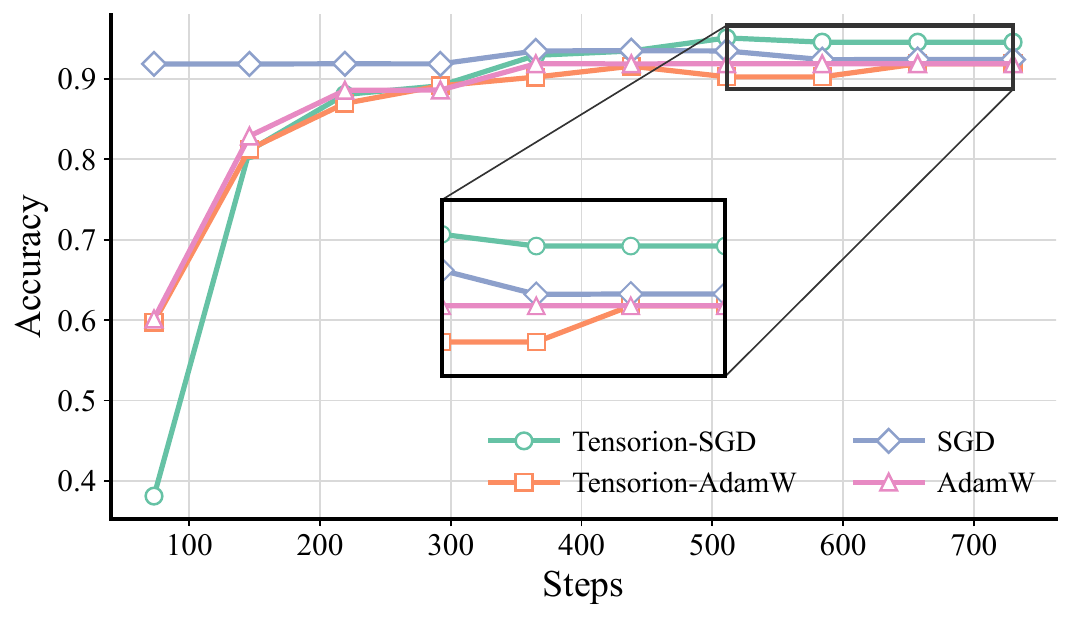}
        \caption{Swin Base}
        \label{fig:swin_acc}
    \end{subfigure}
    \begin{subfigure}{0.9\linewidth}
        \centering
        \includegraphics[width=\linewidth]{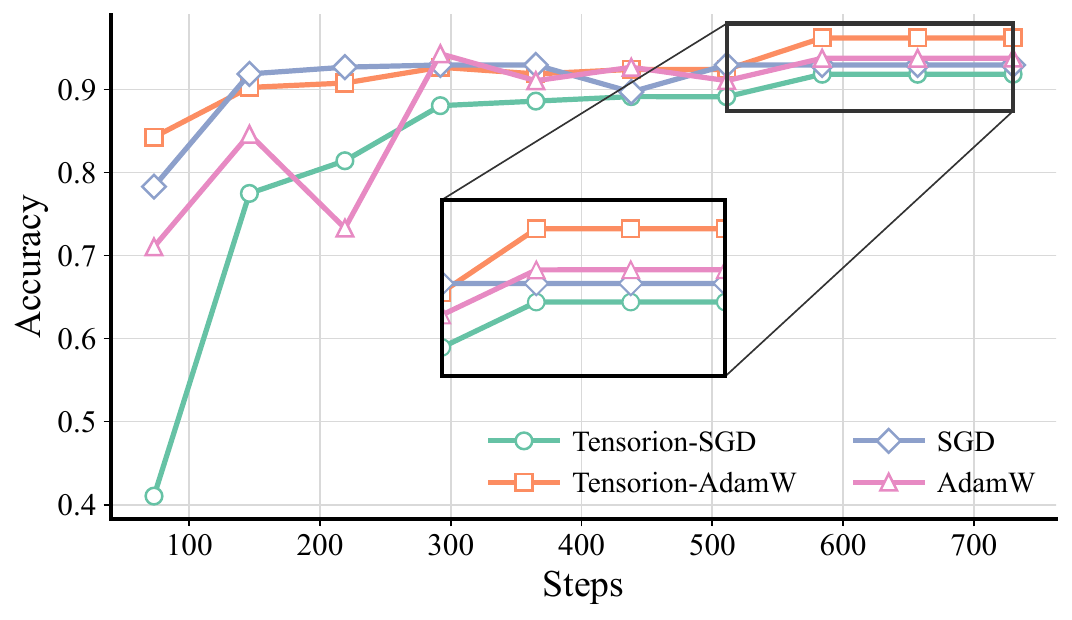}
        \caption{ViT Base}
        \label{fig:vit_acc}
    \end{subfigure}

    \caption{Imagenette results for Swin and ViT models trained for 5 epochs. Tensorion is used only for tensor-valued blocks with order \(d\ge 3\) (\eg, convolutional kernels), while all remaining parameters (\(d\le 2\)) are optimized with either SGD or AdamW.}
    \label{fig:Imagenette2}
\end{figure}

\subsection{Tensorion consistently improves performance of Vision Transformers.}\label{subsec:vit_swin}

Although Vision Transformers mostly consist of common matrix layers, the convolutional patch-embedding module is crucial for optimization stability and model generalization~\citep{xiao2021early}. We aim to show that applying Tensorion to this module alone is sufficient to improve performance across multiple scales and architectures. Specifically, we initialize ViT~\citep{dosovitskiy2020image} (Small, Base, Medium) and Swin Transformer~\citep{liu2021swin} (Tiny, Small, Base) models from pretrained checkpoints in the \texttt{timm} library, fine-tune them on the Imagenette dataset~\citep{Howard_Imagenette_2019} for 5 epochs with a fixed learning-rate scheduler and weight decay across optimizers, and evaluate performance at the end of each epoch. Tensorion is applied only to the convolutional patch-embedding layer. All remaining parameters are optimized with AdamW or SGD, respectively. Optimal learning rates are tuned via grid search for each setup. We report the best performance across algorithms in Fig.~\ref{fig:Imagenette2}. Detailed results are reported in Section~\ref{sec:details_vit_swin}.

\section{Conclusion}\label{sec:conclusion}
We propose Tensorion, a novel optimizer that generalizes Muon to tensors via a norm that lower bounds the spectral norm of any tensor unfolding, leading to a practical relaxation of the tensor spectral‑norm LMO problem. A heuristic offline method for choosing the unfolding achieves performance comparable to the online optimal choice. Experiments across multiple CV tasks and architectures show consistent improvements over conventional optimizers.

\section*{Limitations}\label{sec:limitations}
Our study is limited primarily to computer vision models. We do not evaluate video models or architectures operating on sequential visual domains. We also do not assess whether similar behavior appears for convolutional or any other tensor-valued layers in other domains beyond vision. Finally, our experiments are limited in scale: while the observed trends are encouraging, we cannot determine from the present evidence whether the method scales as reliably as Muon or other established optimizers in substantially larger training regimes.

\section*{Acknowledgements}\label{sec:acknowledgements}
The work was supported by the grant for research centers in the field of AI provided by the Ministry of Economic Development of the Russian Federation in accordance with the agreement 000000C313925P4E0002 and the agreement with HSE University № 139-15-2025-009.
This research was supported in part through computational resources of HPC facilities at HSE University \cite{kostenetskiy2021hpc}.

\bibliographystyle{icml2026}
\bibliography{tensorion}

@string(ECCV  = {Eur. Conf. Comput. Vis.})

@string(NeurIPS = {Adv. Neural Inform. Process. Syst.})

@string(ICIP  = {IEEE Int. Conf. Image Process.})

@string(ECCV  = {ECCV})

@string(NeurIPS = {NeurIPS})

@string(ICIP  = {ICIP})

@article{dosovitskiy2020image,
	title        = {An image is worth 16x16 words: Transformers for image recognition at scale},
	author       = {Dosovitskiy, Alexey and Beyer, Lucas and Kolesnikov, Alexander and Weissenborn, Dirk and Zhai, Xiaohua and Unterthiner, Thomas and Dehghani, Mostafa and Minderer, Matthias and Heigold, Georg and Gelly, Sylvain and others},
	year         = 2020,
	journal      = {arXiv preprint arXiv:2010.11929}
}

@book{girsanov2012lectures,
	title        = {Lectures on mathematical theory of extremum problems},
	author       = {Girsanov, Igor Vladimirovich},
	year         = 2012,
	publisher    = {Springer Science \& Business Media}
}

@article{limblind,
	title        = {Blind Multilinear Identification},
	author       = {Lim, Lek-Heng and Comon, Pierre},
	year         = 2014,
	journal      = {IEEE Transactions on Information Theory},
	volume       = 60,
	number       = 2,
	pages        = {1260--1280},
	doi          = {10.1109/TIT.2013.2291876}
}

@article{riabinin2025gluon,
	title        = {Gluon: Making muon \& scion great again!(bridging theory and practice of lmo-based optimizers for llms)},
	author       = {Riabinin, Artem and Shulgin, Egor and Gruntkowska, Kaja and Richt{\'a}rik, Peter},
	year         = 2025,
	journal      = {arXiv preprint arXiv:2505.13416}
}

@inproceedings{rigamonti2013learning,
	title        = {Learning separable filters},
	author       = {Rigamonti, Roberto and Sironi, Amos and Lepetit, Vincent and Fua, Pascal},
	year         = 2013,
	booktitle    = {Proceedings of the IEEE conference on computer vision and pattern recognition},
	pages        = {2754--2761}
}

@article{ahn2025dion,
	title        = {Dion: Distributed orthonormalized updates},
	author       = {Ahn, Kwangjun and Xu, Byron and Abreu, Natalie and Fan, Ying and Magakyan, Gagik and Sharma, Pratyusha and Zhan, Zheng and Langford, John},
	year         = 2025,
	journal      = {arXiv preprint arXiv:2504.05295}
}

@article{chen2025muon,
	title        = {Muon optimizes under spectral norm constraints},
	author       = {Chen, Lizhang and Li, Jonathan and Liu, Qiang},
	year         = 2025,
	journal      = {arXiv preprint arXiv:2506.15054}
}

@inproceedings{phan2020stable,
	title        = {Stable low-rank tensor decomposition for compression of convolutional neural network},
	author       = {Phan, Anh-Huy and Sobolev, Konstantin and Sozykin, Konstantin and Ermilov, Dmitry and Gusak, Julia and Tichavsk{\`y}, Petr and Glukhov, Valeriy and Oseledets, Ivan and Cichocki, Andrzej},
	year         = 2020,
	booktitle    = {European Conference on Computer Vision},
	pages        = {522--539},
	organization = {Springer}
}

@article{bernstein2025manifolds,
	title        = {Modular Manifolds},
	author       = {Jeremy Bernstein},
	year         = 2025,
	journal      = {Thinking Machines Lab: Connectionism},
	doi          = {10.64434/tml.20250926},
	note         = {https://thinkingmachines.ai/blog/modular-manifolds/}
}

@inproceedings{bernstein2018signsgd,
	title        = {signSGD: Compressed optimisation for non-convex problems},
	author       = {Bernstein, Jeremy and Wang, Yu-Xiang and Azizzadenesheli, Kamyar and Anandkumar, Animashree},
	year         = 2018,
	booktitle    = {International conference on machine learning},
	pages        = {560--569},
	organization = {PMLR}
}

@article{novikov2015tensorizing,
	title        = {Tensorizing neural networks},
	author       = {Novikov, Alexander and Podoprikhin, Dmitrii and Osokin, Anton and Vetrov, Dmitry P},
	year         = 2015,
	journal      = {Advances in neural information processing systems},
	volume       = 28,
	doi          = {10.5555/2969239.2969289}
}

@article{wang2017operator,
	title        = {Operator norm inequalities between tensor unfoldings on the partition lattice},
	author       = {Wang, Miaoyan and Duc, Khanh Dao and Fischer, Jonathan and Song, Yun S},
	year         = 2017,
	journal      = {Linear algebra and its applications},
	publisher    = {Elsevier},
	volume       = 520,
	pages        = {44--66},
	doi          = {10.1016/j.laa.2017.01.017}
}

@article{singla2019fantastic,
	title        = {Fantastic four: Differentiable bounds on singular values of convolution layers},
	author       = {Singla, Sahil and Feizi, Soheil},
	year         = 2019,
	journal      = {arXiv preprint arXiv:1911.10258}
}

@inproceedings{agarwal2019improving,
	title        = {Improving robustness to adversarial examples by encouraging discriminative features},
	author       = {Agarwal, Chirag and Nguyen, Anh and Schonfeld, Dan},
	year         = 2019,
	booktitle    = {2019 IEEE international conference on image processing (ICIP)},
	pages        = {3801--3805},
	doi          = {10.1109/ICIP.2019.8803601},
	organization = {IEEE}
}

@inproceedings{grishinatight,
	title        = {Tight and Efficient Upper Bound on Spectral Norm of Convolutional Layers},
	author       = {Grishina, Ekaterina and Gorbunov, Mikhail and Rakhuba, Maxim},
	year         = 2025,
	booktitle    = {Computer Vision -- ECCV 2024},
	publisher    = {Springer Nature Switzerland},
	address      = {Cham},
	pages        = {19--34},
	doi          = {10.1007/978-3-031-73024-5_2},
	isbn         = {978-3-031-73024-5},
	editor       = {Leonardis, Ale{\v{s}} and Ricci, Elisa and Roth, Stefan and Russakovsky, Olga and Sattler, Torsten and Varol, G{\"u}l}
}

@article{liu2025muon,
	title        = {Muon is scalable for llm training},
	author       = {Liu, Jingyuan and Su, Jianlin and Yao, Xingcheng and Jiang, Zhejun and Lai, Guokun and Du, Yulun and Qin, Yidao and Xu, Weixin and Lu, Enzhe and Yan, Junjie and others},
	year         = 2025,
	journal      = {arXiv preprint arXiv:2502.16982}
}

@article{friedland2018nuclear,
	title        = {Nuclear norm of higher-order tensors},
	author       = {Friedland, Shmuel and Lim, Lek-Heng},
	year         = 2018,
	journal      = {Mathematics of Computation},
	volume       = 87,
	number       = 311,
	pages        = {1255--1281},
	doi          = {10.1090/mcom/3239}
}

@book{rockafellar1997convex,
	title        = {Convex analysis},
	author       = {Rockafellar, R Tyrrell},
	year         = 1997,
	publisher    = {Princeton university press},
	volume       = 28
}

@misc{kimiteam2026kimik2openagentic,
	title        = {Kimi K2: Open Agentic Intelligence},
	author       = {Kimi Team and Yifan Bai and Yiping Bao and Y. Charles and Cheng Chen and Guanduo Chen and Haiting Chen and Huarong Chen and Jiahao Chen and Ningxin Chen and Ruijue Chen and Yanru Chen and Yuankun Chen and Yutian Chen and Zhuofu Chen and Jialei Cui and Hao Ding and Mengnan Dong and Angang Du and Chenzhuang Du and Dikang Du and Yulun Du and Yu Fan and Yichen Feng and Kelin Fu and Bofei Gao and Chenxiao Gao and Hongcheng Gao and Peizhong Gao and Tong Gao and Yuyao Ge and Shangyi Geng and Qizheng Gu and Xinran Gu and Longyu Guan and Haiqing Guo and Jianhang Guo and Xiaoru Hao and Tianhong He and Weiran He and Wenyang He and Yunjia He and Chao Hong and Hao Hu and Yangyang Hu and Zhenxing Hu and Weixiao Huang and Zhiqi Huang and Zihao Huang and Tao Jiang and Zhejun Jiang and Xinyi Jin and Yongsheng Kang and Guokun Lai and Cheng Li and Fang Li and Haoyang Li and Ming Li and Wentao Li and Yang Li and Yanhao Li and Yiwei Li and Zhaowei Li and Zheming Li and Hongzhan Lin and Xiaohan Lin and Zongyu Lin and Chengyin Liu and Chenyu Liu and Hongzhang Liu and Jingyuan Liu and Junqi Liu and Liang Liu and Shaowei Liu and T. Y. Liu and Tianwei Liu and Weizhou Liu and Yangyang Liu and Yibo Liu and Yiping Liu and Yue Liu and Zhengying Liu and Enzhe Lu and Haoyu Lu and Lijun Lu and Yashuo Luo and Shengling Ma and Xinyu Ma and Yingwei Ma and Shaoguang Mao and Jie Mei and Xin Men and Yibo Miao and Siyuan Pan and Yebo Peng and Ruoyu Qin and Zeyu Qin and Bowen Qu and Zeyu Shang and Lidong Shi and Shengyuan Shi and Feifan Song and Jianlin Su and Zhengyuan Su and Lin Sui and Xinjie Sun and Flood Sung and Yunpeng Tai and Heyi Tang and Jiawen Tao and Qifeng Teng and Chaoran Tian and Chensi Wang and Dinglu Wang and Feng Wang and Hailong Wang and Haiming Wang and Jianzhou Wang and Jiaxing Wang and Jinhong Wang and Shengjie Wang and Shuyi Wang and Si Wang and Xinyuan Wang and Yao Wang and Yejie Wang and Yiqin Wang and Yuxin Wang and Yuzhi Wang and Zhaoji Wang and Zhengtao Wang and Zhengtao Wang and Zhexu Wang and Chu Wei and Qianqian Wei and Haoning Wu and Wenhao Wu and Xingzhe Wu and Yuxin Wu and Chenjun Xiao and Jin Xie and Xiaotong Xie and Weimin Xiong and Boyu Xu and Jinjing Xu and L. H. Xu and Lin Xu and Suting Xu and Weixin Xu and Xinran Xu and Yangchuan Xu and Ziyao Xu and Jing Xu and Jing Xu and Junjie Yan and Yuzi Yan and Hao Yang and Xiaofei Yang and Yi Yang and Ying Yang and Zhen Yang and Zhilin Yang and Zonghan Yang and Haotian Yao and Xingcheng Yao and Wenjie Ye and Zhuorui Ye and Bohong Yin and Longhui Yu and Enming Yuan and Hongbang Yuan and Mengjie Yuan and Siyu Yuan and Haobing Zhan and Dehao Zhang and Hao Zhang and Wanlu Zhang and Xiaobin Zhang and Yadong Zhang and Yangkun Zhang and Yichi Zhang and Yizhi Zhang and Yongting Zhang and Yu Zhang and Yutao Zhang and Yutong Zhang and Zheng Zhang and Haotian Zhao and Yikai Zhao and Zijia Zhao and Huabin Zheng and Shaojie Zheng and Longguang Zhong and Jianren Zhou and Xinyu Zhou and Zaida Zhou and Jinguo Zhu and Zhen Zhu and Weiyu Zhuang and Xinxing Zu},
	year         = 2026,
	url          = {https://arxiv.org/abs/2507.20534},
	eprint       = {2507.20534},
	archiveprefix = {arXiv},
	primaryclass = {cs.LG}
}

@misc{zhang2026teontensorizedorthonormalizationlayerwise,
	title        = {TEON: Tensorized Orthonormalization Beyond Layer-Wise Muon for Large Language Model Pre-Training},
	author       = {Ruijie Zhang and Yequan Zhao and Ziyue Liu and Zhengyang Wang and Dongyang Li and Yupeng Su and Sijia Liu and Zheng Zhang},
	year         = 2026,
	url          = {https://arxiv.org/abs/2601.23261},
	eprint       = {2601.23261},
	archiveprefix = {arXiv},
	primaryclass = {cs.LG}
}

@article{Kingma2014AdamAM,
	title        = {Adam: A Method for Stochastic Optimization},
	author       = {Diederik P. Kingma and Jimmy Ba},
	year         = 2014,
	journal      = {CoRR},
	volume       = {abs/1412.6980},
	url          = {https://doi.org/10.48550/arXiv.1412.6980}
}

@article{hillar2013most,
	title        = {Most Tensor Problems Are NP-Hard},
	author       = {Christopher J. Hillar and Lek-Heng Lim},
	year         = 2009,
	journal      = {ArXiv},
	volume       = {abs/0911.1393},
	doi          = {10.1145/2512329}
}

@misc{jordan2024muon,
	title        = {Muon: An optimizer for hidden layers in neural networks},
	author       = {Keller Jordan and Yuchen Jin and Vlado Boza and Jiacheng You and Franz Cesista and Laker Newhouse and Jeremy Bernstein},
	year         = 2024,
	url          = {https://kellerjordan.github.io/posts/muon/}
}

@misc{bernstein2025deriving,
	title        = {Deriving Muon},
	author       = {Jeremy Bernstein},
	year         = 2025,
	url          = {https://jeremybernste.in/writing/deriving-muon}
}

@misc{2023mmpretrain,
	title        = {OpenMMLab's Pre-training Toolbox and Benchmark},
	author       = {MMPreTrain Contributors},
	year         = 2023,
	howpublished = {\url{https://github.com/open-mmlab/mmpretrain}}
}

@inproceedings{oldfield2024multilinearmoe,
	title        = {Multilinear mixture of experts: scalable expert specialization through factorization},
	author       = {Oldfield, James and Georgopoulos, Markos and Chrysos, Grigorios G. and Tzelepis, Christos and Panagakis, Yannis and Nicolaou, Mihalis A. and Deng, Jiankang and Patras, Ioannis},
	year         = 2024,
	booktitle    = {Proceedings of the 38th International Conference on Neural Information Processing Systems},
	location     = {Vancouver, BC, Canada},
	publisher    = {Curran Associates Inc.},
	address      = {Red Hook, NY, USA},
	series       = {NIPS '24},
	isbn         = 9798331314385,
	articleno    = 1680,
	numpages     = 42
}

@article{Krizhevsky09learningmultiple,
	title        = {Learning multiple layers of features from tiny images},
	author       = {Krizhevsky, Alex and Hinton, Geoffrey and others},
	year         = 2009,
	publisher    = {Toronto, ON, Canada}
}

@article{loshchilov2017decoupled,
	title        = {Decoupled weight decay regularization},
	author       = {Loshchilov, Ilya and Hutter, Frank},
	year         = 2017,
	journal      = {arXiv preprint arXiv:1711.05101}
}

@inproceedings{he2016deep,
	title        = {Deep residual learning for image recognition},
	author       = {He, Kaiming and Zhang, Xiangyu and Ren, Shaoqing and Sun, Jian},
	year         = 2016,
	booktitle    = {Proceedings of the IEEE conference on computer vision and pattern recognition},
	pages        = {770--778}
}

@inproceedings{tiny-imagenet,
    title={Squeeze, Recover and Relabel: Dataset Condensation at ImageNet Scale From A New Perspective},
    author={Yin, Zeyuan and Xing, Eric and Shen, Zhiqiang},
    booktitle={Proceedings of the Advances in Neural Information Processing Systems (NeurIPS)},
    year={2023}
}

@software{Howard_Imagenette_2019,
    title={Imagenette: A smaller subset of 10 easily classified classes from Imagenet},
    author={Jeremy Howard},
    year={2019},
    month={March},
    publisher = {GitHub},
    url = {https://github.com/fastai/imagenette}
}

@inproceedings{liu2021swin,
	title        = {Swin transformer: Hierarchical vision transformer using shifted windows},
	author       = {Liu, Ze and Lin, Yutong and Cao, Yue and Hu, Han and Wei, Yixuan and Zhang, Zheng and Lin, Stephen and Guo, Baining},
	year         = 2021,
	booktitle    = {Proceedings of the IEEE/CVF international conference on computer vision},
	pages        = {10012--10022}
}

@inproceedings{devlin2019bert,
	title        = {Bert: Pre-training of deep bidirectional transformers for language understanding},
	author       = {Devlin, Jacob and Chang, Ming-Wei and Lee, Kenton and Toutanova, Kristina},
	year         = 2019,
	booktitle    = {Proceedings of the 2019 conference of the North American chapter of the association for computational linguistics: human language technologies, volume 1 (long and short papers)},
	pages        = {4171--4186}
}

@inproceedings{martens2015optimizing,
	title        = {Optimizing neural networks with kronecker-factored approximate curvature},
	author       = {Martens, James and Grosse, Roger},
	year         = 2015,
	booktitle    = {International conference on machine learning},
	pages        = {2408--2417},
	organization = {PMLR}
}

@inproceedings{grosse2016kronecker,
	title        = {A kronecker-factored approximate fisher matrix for convolution layers},
	author       = {Grosse, Roger and Martens, James},
	year         = 2016,
	booktitle    = {International Conference on Machine Learning},
	pages        = {573--582},
	organization = {PMLR}
}

@article{eschenhagen2023kronecker,
	title        = {Kronecker-factored approximate curvature for modern neural network architectures},
	author       = {Eschenhagen, Runa and Immer, Alexander and Turner, Richard and Schneider, Frank and Hennig, Philipp},
	year         = 2023,
	journal      = {Advances in Neural Information Processing Systems},
	volume       = 36,
	pages        = {33624--33655}
}

@inproceedings{gupta2018shampoo,
	title        = {Shampoo: Preconditioned stochastic tensor optimization},
	author       = {Gupta, Vineet and Koren, Tomer and Singer, Yoram},
	year         = 2018,
	booktitle    = {International Conference on Machine Learning},
	pages        = {1842--1850},
	organization = {PMLR}
}

@article{vyas2024soap,
	title        = {Soap: Improving and stabilizing shampoo using adam},
	author       = {Vyas, Nikhil and Morwani, Depen and Zhao, Rosie and Kwun, Mujin and Shapira, Itai and Brandfonbrener, David and Janson, Lucas and Kakade, Sham},
	year         = 2024,
	journal      = {arXiv preprint arXiv:2409.11321}
}

@article{morwani2024new,
	title        = {A New Perspective on Shampoo's Preconditioner},
	author       = {Morwani, Depen and Shapira, Itai and Vyas, Nikhil and Malach, Eran and Kakade, Sham and Janson, Lucas},
	year         = 2024,
	journal      = {arXiv preprint arXiv:2406.17748}
}

@article{bernstein2024old,
	title        = {Old optimizer, new norm: An anthology},
	author       = {Bernstein, Jeremy and Newhouse, Laker},
	year         = 2024,
	journal      = {arXiv preprint arXiv:2409.20325}
}

@inproceedings{li2026lestd,
	title        = {LeSTD: LLM Compression via Learning-based Sparse Tensor Decomposition},
	author       = {Li, Yi and Guo, Zhichun and Yin, Miao and Li, Bingzhe},
	year         = 2026,
	booktitle    = {The Fourteenth International Conference on Learning Representations}
}

@inproceedings{gu2025tensorllm,
	title        = {TensorLLM: Tensorising Multi-Head Attention for Enhanced Reasoning and Compression in LLMs},
	author       = {Gu, Yuxuan and Zhou, Wuyang and Iacovides, Giorgos and Mandic, Danilo},
	year         = 2025,
	booktitle    = {2025 International Joint Conference on Neural Networks (IJCNN)},
	pages        = {1--8},
	organization = {IEEE}
}

@inproceedings{yuebintd,
	title        = {{TD}-MoE: Tensor Decomposition for MoE Models},
	author       = {Yuebin XU and YANHONG WANG and Xuemei Peng and Hui Zang and Chen Minghao and Pengfei Xia and Zeyi Wen},
	year         = 2026,
	booktitle    = {The Fourteenth International Conference on Learning Representations},
	url          = {https://openreview.net/forum?id=D9cnZNZfxX}
}

@inproceedings{peshekhonov2024training,
	title        = {Training a Tucker model with shared factors: a Riemannian optimization approach},
	author       = {Peshekhonov, Ivan and Arzhantsev, Aleksey and Rakhuba, Maxim},
	year         = 2024,
	booktitle    = {International Conference on Artificial Intelligence and Statistics},
	pages        = {3304--3312},
	organization = {PMLR}
}

@inproceedings{bogachev2026lora,
	title        = {Lo{RA} meets Riemannion: Muon Optimizer for Parametrization-independent Low-Rank Adapters},
	author       = {Vladimir Bogachev and Vladimir Aletov and Alexander Molozhavenko and Denis Bobkov and Vera Soboleva and Aibek Alanov and Maxim Rakhuba},
	year         = 2026,
	booktitle    = {The Fourteenth International Conference on Learning Representations},
	url          = {https://openreview.net/forum?id=WtbXgc9GVA}
}

@inproceedings{mo2025parameter,
	title        = {Parameter and memory efficient pretraining via low-rank riemannian optimization},
	author       = {Mo, Zhanfeng and Huang, Long-Kai and Pan, Sinno Jialin},
	year         = 2025,
	booktitle    = {The Thirteenth International Conference on Learning Representations}
}

@article{pethick2025training,
	title        = {Training Deep Learning Models with Norm-Constrained LMOs},
	author       = {T. Pethick and Wanyun Xie and Kimon Antonakopoulos and Zhenyu Zhu and Antonio José Silveti-Falls and V. Cevher},
	year         = 2025,
	journal      = {International Conference on Machine Learning},
	doi          = {10.48550/arXiv.2502.07529},
	bibsource    = {Semantic Scholar https://www.semanticscholar.org/paper/dd3fabfc7b9c1e866661e777325674a4e96b2466}
}

@inproceedings{xiao2021early,
author = {Xiao, Tete and Singh, Mannat and Mintun, Eric and Darrell, Trevor and Doll\'{a}r, Piotr and Girshick, Ross},
title = {Early convolutions help transformers see better},
year = {2021},
isbn = {9781713845393},
publisher = {Curran Associates Inc.},
address = {Red Hook, NY, USA},
booktitle = {Proceedings of the 35th International Conference on Neural Information Processing Systems},
articleno = {2325},
numpages = {9},
series = {NIPS '21}
}

@inproceedings{kostenetskiy2021hpc,
  title={HPC resources of the higher school of economics},
  author={Kostenetskiy, PS and Chulkevich, RA and Kozyrev, VI},
  booktitle={Journal of Physics: Conference Series},
  volume={1740},
  number={1},
  pages={012050},
  year={2021},
  organization={IOP Publishing}
}

@article{molozhavenkosfett,
	title        = {Optimization on the extended tensor-train manifold with shared factors},
	author       = {Alexander Molozhavenko and Maxim V. Rakhuba},
	year         = 2025,
	journal      = {Computational and Applied Mathematics},
	volume       = 45,
	url          = {https://api.semanticscholar.org/CorpusID:280950046}
}

@article{gotze2021concentration,
author = {Friedrich Gotze and Holger Sambale and Arthur Sinulis},
title = {{Concentration inequalities for polynomials in α-sub-exponential random variables}},
volume = {26},
journal = {Electronic Journal of Probability},
number = {none},
publisher = {Institute of Mathematical Statistics and Bernoulli Society},
pages = {1 -- 22},
year = {2021},
doi = {10.1214/21-EJP606},
URL = {https://doi.org/10.1214/21-EJP606}
}
\newpage
\appendix
\onecolumn

\renewcommand{\theproposition}{\Alph{section}.\arabic{proposition}}
\section{Primal norm}\label{sec:appendix_primal_norm}

\begin{theorem}[Primal relaxed spectral norm]\label{theorem:primal_relaxed_spectral_norm}
    Let $X \in \mathbb{R}^{n_1 \times \ldots \times n_d}$ then 
    \begin{equation}\label{eq:primal-prob}
        \|X\|_\Sigma = \min_{\substack{(\Xi^\tau)_{\tau \in \Tau} \\
                \sum\limits_{\tau \in \Tau} \Xi^\tau = X}}
        \;\;
        \sum_{\tau \in \Tau} \left\| \Xi^\tau_{(\tau)} \right\|_2,
    \end{equation}
    where $\Xi^\tau \in \mathbb{R}^{n_1 \times \dotsc \times n_d}, \tau \in \Tau$ are auxiliary variables.
\end{theorem}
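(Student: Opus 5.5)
We prove the identity by viewing the right-hand side of Eq.~\eqref{eq:primal-prob} as an infimal convolution of the seminorms $p_\tau(X)\colonequals\|X_{(\tau)}\|_2$, $\tau\in\Tau$, and then applying the standard convex-duality fact that the conjugate of an infimal convolution is the sum of the conjugates. Denote the right-hand side by $\|X\|_\square$. Each $p_\tau$ is a norm on the tensor space, because unfolding is a linear isometric bijection for the Euclidean inner product~\eqref{eq:tensor_euclidean_inner_prod}. Its dual norm is therefore $p_\tau^\dagger(M)=\|M_{(\tau)}\|_*$, since $\langle M,X\rangle=\langle M_{(\tau)},X_{(\tau)}\rangle$.

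\textbf{Step 1 (basic properties of $\|\cdot\|_\square$).} I first check that the minimum in Eq.~\eqref{eq:primal-prob} exists and that $\|\cdot\|_\square$ is a norm. For existence, the objective is coercive on the affine feasible set, since it is a sum of norms of the independent variables $\Xi^\tau$, and it is continuous, so a minimum is attained. Positive homogeneity follows by scaling a minimizer. For the triangle inequality, add optimal decompositions of $X$ and $Y$ to obtain a feasible decomposition of $X+Y$. Definiteness comes from the lower bound in Step 2.

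\textbf{Step 2 (dual norm of $\|\cdot\|_\square$).} I compute $\|M\|_\square^\dagger=\max_{\|X\|_\square\le1}\langle M,X\rangle$ and aim to show it equals $\max_{\tau}\|M_{(\tau)}\|_*=\|M\|_\Sigma^\dagger$.

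For the inequality $\le$, take any feasible decomposition $X=\sum_\tau\Xi^\tau$. Then
\[
\langle M,X\rangle=\sum_\tau\langle M_{(\tau)},\Xi^\tau_{(\tau)}\rangle\le\sum_\tau\|M_{(\tau)}\|_*\|\Xi^\tau_{(\tau)}\|_2\le\|M\|_\Sigma^\dagger\sum_\tau\|\Xi^\tau_{(\tau)}\|_2 .
\]
Minimizing the right-hand side over decompositions gives $\langle M,X\rangle\le\|M\|_\Sigma^\dagger\|X\|_\square$.

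For the inequality $\ge$, pick $\tau^\ast$ attaining the maximum, and let $X=\mathrm{fold}_{\tau^\ast}(UV^\top)$ for a compact SVD $M_{(\tau^\ast)}=U\Sigma V^\top$. Using the decomposition with $\Xi^{\tau^\ast}=X$ and all other $\Xi^\tau=0$ shows $\|X\|_\square\le\|UV^\top\|_2\le 1$. Moreover, $\langle M,X\rangle=\|M_{(\tau^\ast)}\|_*$. Hence $\|\cdot\|_\square^\dagger=\|\cdot\|_\Sigma^\dagger$.

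Step 2 also gives definiteness. Since $\|\cdot\|_\Sigma^\dagger$ dominates the nuclear norm of an unfolding, which is a genuine norm, we have $\langle M,X\rangle\le\|M\|_\Sigma^\dagger\|X\|_\square$ for all $M$. If $X\neq0$, choosing $M=X$ gives $0<\|X\|_F^2\le\|X\|_\Sigma^\dagger\|X\|_\square$, so $\|X\|_\square>0$.

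\textbf{Step 3 (bidual).} In finite dimensions every norm equals its bidual, which follows from Hahn--Banach or from closedness of the convex unit ball~\citep{rockafellar1997convex}. Therefore
\[
\|X\|_\square=(\|X\|_\square^\dagger)^\dagger=(\|X\|_\Sigma^\dagger)^\dagger=\|X\|_\Sigma ,
\]
which is the claim.

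The main point needing care is Step 1: we must establish that $\|\cdot\|_\square$ is a genuine norm, with the minimum attained and definiteness holding, so that the bidual theorem applies. Completeness of $\Tau$ plays no role; we only need $\Tau$ nonempty, so that feasible decompositions exist. An alternative route is to identify the unit ball of $\|\cdot\|_\square$ with the convex hull of the union of the unfolding spectral balls, and the polar of that set is the unit ball of the maximum of the polar gauges. Either route rests on the same duality computation as Step 2.
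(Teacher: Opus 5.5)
Your proof is correct, but it takes a different route from the paper's. The paper derives the formula by Lagrangian duality. It writes $\|X\|_\Sigma$ as the optimal value of $\max\{\langle M,X\rangle : M=A^\tau,\ \|A^\tau_{(\tau)}\|_*\le 1\ \forall\tau\in\Tau\}$ and invokes Slater's condition for strong duality. It then computes the dual function explicitly: minimizing the Lagrangian over $M$ forces $\sum_{\tau}\Xi^\tau=X$, and minimizing over $A^\tau$ forces $\lambda^\tau\ge\|\Xi^\tau_{(\tau)}\|_2$. The minimization formula thus appears as the Lagrange dual problem.

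You instead posit the right-hand side as an infimal convolution and verify that it is a norm. You compute its dual norm by a direct sandwich: termwise H\"older gives the upper bound, and the single-unfolding decomposition built from $\mathrm{fold}_{\tau^\ast}(UV^\top)$ gives the lower bound. You then conclude with the finite-dimensional bidual theorem.

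Your argument has several advantages. It is more elementary, since it replaces the strong-duality theorem and the case analysis of the Lagrangian with the identity $\|\cdot\|^{\dagger\dagger}=\|\cdot\|$. It proves attainment of the minimum directly via coercivity, whereas the paper writes ``min'' and relies implicitly on the dual-attainment part of Slater's theorem. It also isolates the structural fact that an infimal convolution of norms is dual to the pointwise maximum of their dual norms. Your lower-bound decomposition is exactly the one the paper reuses in Proposition~\ref{proposition:tensorion_and_spectral_norm_ineqs_proof}.

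The paper's computation has its own advantage: it does not require guessing the answer in advance. It follows a standard convex-programming template in which the $\Xi^\tau$ arise as multipliers of the duplicated constraints $M=A^\tau$, which makes it easy to adapt to other constraint structures.
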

\begin{proof}
Since $\Tau$ is finite, the constraint
\begin{equation}
    \|M\|_{\Sigma}^{\dagger}=\max_{\tau\in\Tau}\|M_{(\tau)}\|_*\le 1
\end{equation}
is equivalent to the collection of constraints
\begin{equation}
    \|M_{(\tau)}\|_* \le 1
    \qquad
    \forall \tau\in\Tau.
\end{equation}
Hence, by introducing auxiliary tensors $A^\tau$ that duplicate $M$, we can rewrite the definition of the dual norm as
\begin{equation}
    \|X\|_{\Sigma}
=
\max_{M,(A^\tau)_{\tau\in\Tau}}
\left\{
\langle M,X\rangle
:\;
M=A^\tau,\ \|A^\tau_{(\tau)}\|_*\le 1\ \forall \tau\in\Tau
\right\}.
\end{equation}
Equivalently, we consider the minimization problem
\begin{equation}
    \min_{M,(A^\tau)_{\tau\in\Tau}}
    \left\{
    -\langle M,X\rangle
    \colon
    M=A^\tau,\ \|A^\tau_{(\tau)}\|_*\le 1\ \forall \tau\in\Tau
    \right\}.
\end{equation}
Its optimal value is $-\|X\|_{\Sigma}$.

This is a finite-dimensional convex problem, and Slater's condition holds (for instance, $M=0$ and $A^\tau=0$ strictly satisfy all inequality constraints). Therefore, strong duality applies.

Introduce dual variables $\Xi^\tau$ for the constraints $M=A^\tau$ and nonnegative multipliers $\lambda^\tau$ for the constraints $\|A^\tau_{(\tau)}\|_*\le 1$. The Lagrangian is
\begin{equation}
     \mathcal L(M,A,\lambda,\Xi)
    =
    -\langle M,X\rangle
    +
    \sum_{\tau\in\Tau}
    \Bigl[
    \lambda^\tau\bigl(\|A^\tau_{(\tau)}\|_* -1\bigr)
    +
    \langle \Xi^\tau, M-A^\tau\rangle
    \Bigr].   
\end{equation}
Rearranging terms gives
\begin{equation}
    \mathcal L(M,A,\lambda,\Xi)
    =
    \left\langle -X+\sum_{\tau\in\Tau}\Xi^\tau,\; M\right\rangle
    +
    \sum_{\tau\in\Tau}
    \Bigl[
    \lambda^\tau \|A^\tau_{(\tau)}\|_*
    -
    \langle \Xi^\tau, A^\tau\rangle
    -
    \lambda^\tau
    \Bigr].
\end{equation}
We now minimize over the primal variables. First, if
\begin{equation}
    \sum_{\tau\in\Tau}\Xi^\tau \neq X,
\end{equation}
then the term linear in $M$ is nonzero, and minimizing over $M$ yields $-\infty$. Thus the dual function is finite only if
\begin{equation}
    \sum_{\tau\in\Tau}\Xi^\tau = X.
\end{equation}

Assume from now on that this constraint holds. Then the minimization over $A^\tau$ separates across $\tau$. For each $\tau$, we use that unfolding preserves the Frobenius inner product:
\begin{equation}
    \langle \Xi^\tau, A^\tau\rangle
    =
    \langle \Xi^\tau_{(\tau)}, A^\tau_{(\tau)}\rangle.
\end{equation}
Next, recall the standard duality between the nuclear and spectral norms:
\begin{equation}
    \langle U,V\rangle \le \|U\|_2\,\|V\|_*. 
\end{equation}
Applying this with $U=\Xi^\tau_{(\tau)}$ and $V=A^\tau_{(\tau)}$ gives
\begin{equation}
    \langle \Xi^\tau, A^\tau\rangle
    \le
    \|\Xi^\tau_{(\tau)}\|_2\,\|A^\tau_{(\tau)}\|_*.
\end{equation}

Therefore,
\begin{equation}
    \lambda^\tau \|A^\tau_{(\tau)}\|_*
    -
    \langle \Xi^\tau, A^\tau\rangle
    -
    \lambda^\tau
    \ge
    \bigl(\lambda^\tau-\|\Xi^\tau_{(\tau)}\|_2\bigr)\|A^\tau_{(\tau)}\|_*
    -\lambda^\tau.
\end{equation}

If $\lambda^\tau \ge \|\Xi^\tau_{(\tau)}\|_2$,
the right-hand side is bounded below by $-\lambda^\tau$, and this bound is attained at $A^\tau=0$. Hence the minimum over $A^\tau$ equals $-\lambda^\tau$.

If instead $\lambda^\tau < \|\Xi^\tau_{(\tau)}\|_2$,
then the coefficient in front of $\|A^\tau_{(\tau)}\|_*\!$ is negative. Using the fact that the spectral norm is dual to the nuclear norm, one can choose a matrix $B^\tau$ with
\begin{equation}
    \|B^\tau\|_*=1,
    \qquad
    \langle \Xi^\tau_{(\tau)},B^\tau\rangle=\|\Xi^\tau_{(\tau)}\|_2.
\end{equation}

Let $\widetilde A^\tau$ be the tensor whose $\tau$-matricization is $B^\tau$. Evaluating the Lagrangian at $t\widetilde A^\tau$ and sending $t\to\infty$ shows that the infimum is then $-\infty$.

Thus the dual function is
\begin{equation}
   \mathcal G(\lambda,\Xi)
    =
    \begin{cases}
    -\sum\limits_{\tau\in\Tau}\lambda^\tau,
    &
    \sum\limits_{\tau\in\Tau}\Xi^\tau=X
    \ \text{and}\
    \lambda^\tau\ge \|\Xi^\tau_{(\tau)}\|_2
    \ \forall\tau\in\Tau,
    \\[0.6em]
    -\infty,
    & \text{otherwise}.
    \end{cases} 
\end{equation}

The dual problem is therefore
\begin{equation}
    \max_{\lambda,\Xi}\mathcal G(\lambda,\Xi)
    =
    -
    \min_{\substack{\sum_{\tau\in\Tau}\Xi^\tau=X\\
    \lambda^\tau\ge \|\Xi^\tau_{(\tau)}\|_2,\ \forall\tau\in\Tau}}
    \sum_{\tau\in\Tau}\lambda^\tau.
\end{equation}
For fixed $(\Xi^\tau)_{\tau\in\Tau}$, the minimum over $\lambda^\tau$ is attained at
\begin{equation}
\lambda^\tau=\|\Xi^\tau_{(\tau)}\|_2.
\end{equation}

Hence the dual optimal value is
\begin{equation}
    -
    \min_{\substack{(\Xi^\tau)_{\tau\in\Tau}\\
    \sum_{\tau\in\Tau}\Xi^\tau=X}}
    \sum_{\tau\in\Tau}\|\Xi^\tau_{(\tau)}\|_2.
\end{equation}

Finally, by strong duality,
\begin{equation}
    -\|X\|_{\Sigma}
    =
    -
    \min_{\substack{(\Xi^\tau)_{\tau\in\Tau}\\
    \sum_{\tau\in\Tau}\Xi^\tau=X}}
    \sum_{\tau\in\Tau}\|\Xi^\tau_{(\tau)}\|_2,
\end{equation}
and multiplying by $-1$ yields
\begin{equation}
    \|X\|_{\Sigma}
=
\min_{\substack{(\Xi^\tau)_{\tau\in\Tau}\\
\sum_{\tau\in\Tau}\Xi^\tau=X}}
\sum_{\tau\in\Tau}\|\Xi^\tau_{(\tau)}\|_2.
\end{equation}
This proves the claim.
\end{proof}
\begin{lemma}\label{lemma:dual_ineqs}
    Let $\|\cdot\|_A, \|\cdot\|_B$ be any norms defined on the same finite-dimensional linear space $V$ and $\|\cdot\|_A^\dagger, \|\cdot\|_B^\dagger$ are their duals. Let also
    \begin{equation}
        \|v\|_A \leq \|v\|_B, \quad v \in V,
        \label{eq:lemma:aux1}
    \end{equation}
    then
    \begin{equation}
        \|u\|^\dagger_A \geq \|u\|^\dagger_B, \quad u \in V^*.
    \end{equation}
\end{lemma}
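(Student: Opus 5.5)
The plan is to argue directly from the definition of the dual norm, $\|u\|^\dagger = \max_{\|v\|\le 1}\langle u, v\rangle$, by comparing the two unit balls over which the maximum is taken. No earlier result of the paper is needed beyond this definition and the fact that the maximum is attained, which holds because unit balls in finite dimensions are compact.

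\emph{Step 1: the balls are nested.} Write $\mathbb{B}_A=\{v\in V : \|v\|_A\le 1\}$ and $\mathbb{B}_B=\{v\in V : \|v\|_B\le 1\}$. If $v\in\mathbb{B}_B$, then hypothesis \eqref{eq:lemma:aux1} gives $\|v\|_A\le\|v\|_B\le 1$. Hence $v\in\mathbb{B}_A$, so $\mathbb{B}_B\subseteq\mathbb{B}_A$.

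\emph{Step 2: compare the maxima.} Fix $u\in V^*$, and let $\langle u,v\rangle$ denote the dual pairing (the Euclidean inner product, in the setting of the paper). A maximum of the same linear functional over a larger set can only increase, so
\begin{equation*}
\|u\|_B^\dagger=\max_{v\in\mathbb{B}_B}\langle u,v\rangle\le\max_{v\in\mathbb{B}_A}\langle u,v\rangle=\|u\|_A^\dagger .
\end{equation*}
An equivalent check: take a maximizer $v^\star\in\mathbb{B}_B$ for $\|u\|_B^\dagger$. It is feasible for the $A$-problem, so $\|u\|_A^\dagger\ge\langle u,v^\star\rangle=\|u\|_B^\dagger$.

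There is no real obstacle here; the only point needing care is that the inequality reverses. The smaller primal norm has the larger unit ball and therefore the larger dual norm. This lemma is then used twice, in both directions, for Proposition~\ref{proposition:tensorion_and_spectral_norm_ineqs}, combined with the bidual identity $(\|\cdot\|^\dagger)^\dagger=\|\cdot\|$, which holds in finite dimensions.
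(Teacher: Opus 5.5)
Your proposal is correct and follows the paper's proof: the hypothesis gives the nesting $B_B(0,1)\subseteq B_A(0,1)$, and maximizing the same linear functional over the larger ball yields $\|u\|_A^\dagger\ge\|u\|_B^\dagger$. Your closing remark is also accurate: the paper applies the lemma twice in Proposition~\ref{proposition:tensorion_and_spectral_norm_ineqs_proof}, and the second application implicitly uses the bidual identity.
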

\begin{proof}
    By definition of the dual norm:
    \begin{equation}
        \|u\|^\dagger_A = \max\limits_{v\in B_A(0,1)} \langle v, u\rangle, \quad 
         \|u\|^\dagger_B = \max\limits_{v\in B_B(0,1)} \langle v, u\rangle, 
        \label{eq:lemma:aux2}
    \end{equation}
    where
    \begin{equation}
        B_A(0, 1) \colonequals \left\{v \in V \, | \, \|v\|_A \leq 1\right\}, \quad 
        B_B(0, 1) \colonequals \left\{v \in V \, | \, \|v\|_B \leq 1\right\}.
    \end{equation}
    From Eq.~\eqref{eq:lemma:aux1} it follows that $B_B(0,1) \subseteq B_A(0,1)$, therefore maximum over $B_A(0,1)$ in Eq.~\eqref{eq:lemma:aux2} is not smaller than maximum over $B_B(0,1)$ yielding the proposed inequality. 
\end{proof}

\begin{proposition}\label{proposition:tensorion_and_spectral_norm_ineqs_proof}
  Let $X\in \mathbb{R}^{n_1 \times \dots \times n_d}$ be any tensor then $\|X\|_\Sigma^\dagger \leq \|X\|_\sigma^\dagger$ and
  \begin{equation}
    \|X\|_\sigma \leq \|X\|_\Sigma \le \min_{\tau \in \Tau} \|X_{(\tau)}\|_2.
    \label{eq:tensorion_dual_bounds_nuclear_proof}
  \end{equation}
\end{proposition}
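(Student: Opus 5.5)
The plan is to establish the dual inequality $\|X\|_\Sigma^\dagger \le \|X\|_\sigma^\dagger$ first. Then we transfer it to the primal side with Lemma~\ref{lemma:dual_ineqs}. Finally, we obtain the upper bound by $\min_\tau\|X_{(\tau)}\|_2$, either from Theorem~\ref{theorem:primal_relaxed_spectral_norm} or again via Lemma~\ref{lemma:dual_ineqs}.

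\emph{Step 1 (dual lower bound).} Fix $\tau\in\Tau$. The known bound $\|Y\|_\sigma\le\|Y_{(\tau)}\|_2$ holds for every tensor $Y$. Unfolding is an isometry for the Euclidean inner product, so the matrix spectral norm of the $\tau$-unfolding, $Y\mapsto \|Y_{(\tau)}\|_2$, is a norm on tensors. Its dual is $X\mapsto\|X_{(\tau)}\|_*$, because
\[
\max_{\|Y_{(\tau)}\|_2\le 1}\langle X,Y\rangle=\max_{\|B\|_2\le1}\langle X_{(\tau)},B\rangle=\|X_{(\tau)}\|_* .
\]
Applying Lemma~\ref{lemma:dual_ineqs} with $\|\cdot\|_A=\|\cdot\|_\sigma$ and $\|\cdot\|_B=\|(\cdot)_{(\tau)}\|_2$ gives $\|X_{(\tau)}\|_*\le\|X\|_\sigma^\dagger$ for each $\tau$. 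Taking the maximum over $\tau\in\Tau$ yields $\|X\|_\Sigma^\dagger\le\|X\|_\sigma^\dagger$. Alternatively, one can argue directly: the rank-one tensor $u^{(1)}\otimes\dots\otimes u^{(d)}$ has unfolding $a b^\top$ with $\|ab^\top\|_2=1$, so the constraint set for $\|\cdot\|_\sigma$ contains the unfolding-spectral ball.

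\emph{Step 2 (primal lower bound).} Apply Lemma~\ref{lemma:dual_ineqs} to the pair of norms $\|\cdot\|_\Sigma^\dagger\le\|\cdot\|_\sigma^\dagger$. This gives $(\|X\|_\Sigma^\dagger)^\dagger\ge(\|X\|_\sigma^\dagger)^\dagger$. In finite dimensions the bidual of a norm is the norm itself, so this reads $\|X\|_\Sigma\ge\|X\|_\sigma$. Here we must check that $\|\cdot\|_\Sigma^\dagger$ is indeed a norm. This holds because it is a maximum of norms.

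\emph{Step 3 (upper bound).} Fix $\tau_0\in\Tau$. In Theorem~\ref{theorem:primal_relaxed_spectral_norm}, take the feasible decomposition $\Xi^{\tau_0}=X$ and $\Xi^\tau=0$ for $\tau\ne\tau_0$. This gives $\|X\|_\Sigma\le\|X_{(\tau_0)}\|_2$, and minimizing over $\tau_0$ finishes the proof. As a sanity alternative, $\|X\|_\Sigma^\dagger\ge\|X_{(\tau_0)}\|_*$ together with Lemma~\ref{lemma:dual_ineqs} gives the same bound.

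The only mildly delicate point is the identification of duals of unfolding norms and the bidual identity used in Step 2. Both are standard finite-dimensional facts, so I expect no real obstacle.
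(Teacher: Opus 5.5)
Your proposal is correct and follows the paper's proof essentially step for step: Lemma~\ref{lemma:dual_ineqs} applied to $\|X\|_\sigma\le\|X_{(\tau)}\|_2$ gives the dual bound, the same lemma plus biduality gives $\|X\|_\sigma\le\|X\|_\Sigma$, and the one-term decomposition $\Xi^{\tau_0}=X$ in Theorem~\ref{theorem:primal_relaxed_spectral_norm} gives the upper bound. The only (minor) differences are that you conclude directly with $\|X\|_\sigma^\dagger$ rather than detouring through the tensor nuclear norm and the Lim--Comon duality result, and that your alternative Step~3 via Lemma~\ref{lemma:dual_ineqs} avoids relying on the strong-duality argument behind Theorem~\ref{theorem:primal_relaxed_spectral_norm}.
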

\begin{proof}
    We begin by comparing the dual norms. By the definition of $\|\cdot\|_\Sigma^\dagger$ (see Eq.~\eqref{eq:tensorion_norm}),
    \begin{equation}
        \|M\|_\Sigma^\dagger = \max_{\tau \in \Tau} \|M_{(\tau)}\|_* .
    \end{equation}
    Applying Lemma~\ref{lemma:dual_ineqs} to $\|X_{(\tau)}\|_2 \ge \|X\|_\sigma$ for each unfolding yields
    \begin{equation}
        \|M_{(\tau)}\|_* \le \|M\|_*
        \quad \forall \tau \in \Tau.
    \end{equation}
    Therefore,
    \begin{equation}
        \|M\|_\Sigma^\dagger
        =
        \max_{\tau \in \Tau} \|M_{(\tau)}\|_*
        \le
        \|M\|_*.
    \end{equation}
    Finally, the tensor nuclear norm is dual to the tensor spectral norm \citep[Lemma $21$]{limblind}, so $\|M\|_* = \|M\|_\sigma^\dagger$.
    Hence $\|M\|_\Sigma^\dagger \le \|M\|_\sigma^\dagger$.
    
    We now turn to the lower bound on $\|X\|_\Sigma$. Applying Lemma~\ref{lemma:dual_ineqs} to 
    \begin{equation}
        \|M\|_\Sigma^\dagger \le \|M\|_\sigma^\dagger
    \end{equation}
    yields
    \begin{equation}
        \|X\|_\sigma \le \|X\|_\Sigma
    \end{equation}
    
    It remains to establish the upper bound. By the variational representation of $\|\cdot\|_\Sigma$ (see Eq.~\eqref{eq:primal-prob}),
    \begin{equation}
        \|X\|_\Sigma
        =
        \min_{\substack{(\Xi^\tau)_{\tau \in \Tau}\\ \sum_{\tau \in \Tau} \Xi^\tau = X}}
        \sum_{\tau \in \Tau} \|\Xi^\tau_{(\tau)}\|_2.
    \end{equation}
    Fix any $\tau' \in \Tau$, and consider the feasible decomposition
    \begin{equation}
        \Xi^\tau =
        \begin{cases}
        X, & \tau = \tau',\\
        0, & \tau \ne \tau'.
        \end{cases}
    \end{equation}
    Substituting this choice into the objective yields
    \begin{equation}
        \|X\|_\Sigma
        \le
        \sum_{\tau \in \Tau} \|\Xi^\tau_{(\tau)}\|_2
        =
        \|\Xi^{\tau'}_{(\tau')}\|_2
        =
        \|X_{(\tau')}\|_2.
    \end{equation}
    Since $\tau'$ was arbitrary, we conclude that
    \begin{equation}
        \|X\|_\Sigma \le \min_{\tau \in \Tau} \|X_{(\tau)}\|_2.
    \end{equation}
    This completes the proof.
\end{proof}

\begin{proposition}\label{proposition:tensorion_norm_LMO_solution_proof}
    The solution set of the LMO in Eq.~\eqref{eq:tensorion_norm_LMO} is the subdifferential of the relaxed spectral dual norm at point $M$:
  \begin{equation}
    X^{\text{opt}}\in \partial \|M\|_\Sigma^\dagger = \conv_{\tau \in I}\,\left( \mathrm{fold}_\tau \left( \partial \|M_{(\tau)} \|_* \right) \right),
    \label{eq:subdiff_dual_norm_sol_proof}
  \end{equation}
  where  $\conv{(\cdot)}$ denotes a convex hull and
  \begin{equation}
    I = \left\{ \tau \,\middle|\, 
    \|M_{(\tau)}\|_* = \|M\|_\Sigma^\dagger \right\} \subseteq 2^{\{1, \dotsc, d\}}
    \label{eq:maximizing_indices_proof}
  \end{equation}
  is a nonempty set of unfolding indices attaining the maximal nuclear norm.
\end{proposition}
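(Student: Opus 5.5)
The proof splits into two parts. First, the LMO solution set equals $\partial\|M\|_\Sigma^\dagger$. Second, that subdifferential has the stated convex-hull form.

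\textbf{First part.} Apply Eq.~\eqref{eq:muon_lmo_dual_connection} with the primal norm $\|\cdot\|_\Sigma$. Since $\|\cdot\|_\Sigma = (\|\cdot\|_\Sigma^\dagger)^\dagger$ and we are in finite dimensions, biduality gives $(\|\cdot\|_\Sigma)^\dagger = \|\cdot\|_\Sigma^\dagger$. Hence
\[
\Argmax_{\|X\|_\Sigma\le 1}\langle M,X\rangle=\partial\|M\|_\Sigma^\dagger .
\]
We must first check that $\|\cdot\|_\Sigma^\dagger$ is a genuine norm. It is a maximum of seminorms $X\mapsto\|X_{(\tau)}\|_*$, and each of these is already a norm, because unfolding is a linear isometric bijection. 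So the maximum is a norm.

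\textbf{Second part.} Write $f=\max_{\tau\in\Tau} f_\tau$ with $f_\tau(M)=\|M_{(\tau)}\|_*$. Each $f_\tau$ is convex and finite everywhere. The set $I$ of active indices is nonempty because $\Tau$ is finite and the maximum is attained.

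By the Dubovitskii--Milyutin / Danskin max-rule for a finite family of finite convex functions (see \citet{rockafellar1997convex} or \citet{girsanov2012lectures}),
\[
\partial f(M)=\conv\Bigl(\bigcup_{\tau\in I}\partial f_\tau(M)\Bigr).
\]

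It then remains to compute $\partial f_\tau(M)$. Write $f_\tau=\|\cdot\|_*\circ L_\tau$, where $L_\tau\colon X\mapsto X_{(\tau)}$ is a linear bijection preserving the Frobenius inner product. Its adjoint is therefore $L_\tau^*=L_\tau^{-1}=\mathrm{fold}_\tau$. The chain rule for composition with a linear map (exact here since $L_\tau$ is surjective) gives
\[
\partial f_\tau(M)=\mathrm{fold}_\tau\bigl(\partial\|M_{(\tau)}\|_*\bigr).
\]
This can also be checked directly from the definition of the subdifferential: the inequality $\|Y_{(\tau)}\|_*\ge\|M_{(\tau)}\|_*+\langle G,Y-M\rangle$ holds for all $Y$ exactly when $G_{(\tau)}\in\partial\|M_{(\tau)}\|_*$, using the inner-product invariance $\langle G,Y\rangle=\langle G_{(\tau)},Y_{(\tau)}\rangle$. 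Substituting this into the max-rule yields Eq.~\eqref{eq:subdiff_dual_norm_sol_proof}.

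\textbf{Main obstacle.} The delicate step is the max-rule, in particular the inclusion $\partial f(M)\subseteq\conv(\cdots)$.

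If citing it is not acceptable, I would prove it via directional derivatives. We have
\[
f'(M;H)=\max_{\tau\in I}f_\tau'(M;H)=\max_{\tau\in I}\ \max_{G\in\partial f_\tau(M)}\langle G,H\rangle ,
\]
where the first equality uses continuity of the $f_\tau$ and finiteness of $\Tau$, so inactive indices drop out for small steps. This is the support function of the convex hull $C=\conv\bigl(\bigcup_{\tau\in I}\partial f_\tau(M)\bigr)$, which is compact as the hull of finitely many compact sets. Since $\partial f(M)$ is the closed convex set whose support function is $f'(M;\cdot)$, equality of support functions gives $\partial f(M)=C$.

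The reverse inclusion $C\subseteq\partial f(M)$ is easy. For $\tau\in I$ and $G\in\partial f_\tau(M)$,
\[
f(Y)\ge f_\tau(Y)\ge f_\tau(M)+\langle G,Y-M\rangle=f(M)+\langle G,Y-M\rangle ,
\]
and subdifferentials are convex, so the whole hull is contained.
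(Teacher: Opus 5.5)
Your proof is correct and follows essentially the same route as the paper. Both apply the Dubovitskii--Milyutin max-rule to the finite family $f_\tau(M)=\|M_{(\tau)}\|_*$ and identify the LMO solution set with $\partial\|M\|_\Sigma^\dagger$ via norm duality; the paper cites Rockafellar's Theorem~23.5 directly, whereas you use Eq.~\eqref{eq:muon_lmo_dual_connection} plus biduality. You also make explicit, and get right, two steps the paper only asserts: the linear-isometry chain rule that gives $\mathrm{fold}_\tau(\partial\|M_{(\tau)}\|_*)$, and a self-contained directional-derivative proof of the max-rule.
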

\begin{proof}
    First, by definition of the relaxed spectral dual norm (see Eq.~\eqref{eq:tensorion_norm}),
    \begin{equation}
    \|M\|_\Sigma^\dagger
    =
    \max_{\tau \in \Tau} \|M_{(\tau)}\|_*.
    \end{equation}
    For every \(\tau \in \Tau\), define
    \begin{equation}
    f_\tau(M) := \|M_{(\tau)}\|_*.
    \end{equation}
    Since the unfolding map \(M \mapsto M_{(\tau)}\) is linear and the nuclear norm is convex, each function \(f_\tau\) is convex in \(M\). Because the index set \(\Tau\) is finite, we may apply the Dubovitskii-Milyutin subdifferential formula (see, \eg \cite{girsanov2012lectures}) for the maximum of finitely many convex functions:
    \begin{equation}
    \partial \left( \max_{\tau \in \Tau} f_\tau(M) \right)
    =
    \operatorname{conv}\left(
        \bigcup_{\tau \in I} \partial f_\tau(M)
    \right),
    \end{equation}
    where
    \begin{equation}
    I
    =
    \left\{
        \tau \in \Tau \;\middle|\; f_\tau(M) = \max_{\theta \in \Tau} f_\theta(M)
    \right\}.
    \end{equation}
    Substituting the definition of \(f_\tau\), we obtain
    \begin{equation}
    \partial \|M\|_\Sigma^\dagger
    =
    \conv_{\tau \in I}\,\left( \mathrm{fold}_\tau \left( \partial \|M_{(\tau)} \|_* \right) \right),
    \end{equation}
    with
    \begin{equation}
    I
    =
    \left\{
        \tau \in \Tau \;\middle|\; \|M_{(\tau)}\|_* = \|M\|_\Sigma^\dagger
    \right\}.
    \end{equation}
    
    It remains to identify this subdifferential with the solution set of the LMO in Eq.~\eqref{eq:tensorion_norm_LMO}. For any norm $\|\cdot\|_A$ and its dual norm $\|\cdot\|_A^\dagger$ \citep[Theorem~$23.5$]{rockafellar1997convex}
    \begin{equation}
    \partial \|x\|_A
    =
    \operatorname{Argmax}\left\{
        \langle z, x \rangle \;\middle|\; \|z\|_A^\dagger\le 1
    \right\}, \quad x \neq 0.
    \end{equation}
    Applying this general result with
    \begin{equation}
    \|\cdot\|_A = \|\cdot\|_\Sigma^\dagger,
    \qquad
    \|\cdot\|_A^\dagger = \|\cdot\|_\Sigma,
    \end{equation}
    we obtain
    \begin{equation}
    \partial \|M\|_\Sigma^\dagger
    =
    \operatorname{Argmax}\left\{
        \langle X, M \rangle \;\middle|\; \|X\|_\Sigma \le 1
    \right\}.
    \end{equation}
    Combining this identity with the representation established above completes the proof.
\end{proof}

\section{High-Probability Nuclear Norm Bound}\label{sec:proof_randomized_bound}

\begin{proposition}[High-Probability Nuclear Norm Bound]\label{prop:randomized_bound_app}
    Let the entries of $M\in\mathbb R^{n_1\times\cdots\times n_d}$ be i.i.d.\ copies of a
mean-zero random variable $Z$ with $\|Z\|_{\psi_1}\le K$, and let $N=\prod_{i=1}^{d} n_i$.
Consider any unfolding $M_{(\tau)} \in \mathbb{R}^{m_\tau \times n_\tau}$ with $m_\tau \leq n_\tau$
such that $m_{\tau}n_{\tau} = \prod_{i=1}^d n_i$.
Then there exists an absolute constant $C_0 > 0$ such that, with $C:=C_0K$, for any $\delta\in(0,1)$, with probability at least $1-\delta$,
simultaneously for every unfolding $\tau$,
\begin{equation}
\label{eq:prob_bound}
\|M_{(\tau)}\|_*\le C \Bigl(m_\tau \sqrt{n_\tau} + \sqrt{m_\tau} \Bigl(\sqrt{\log\tfrac2\delta} + \log \tfrac{2}{\delta}\Bigr)\Bigr).
\end{equation}
Moreover, for a fixed total dimension $m_{\tau}n_{\tau}$, the value of this upper bound is
maximized by the most balanced admissible unfolding, i.e.\ by the $\tau$ with the
largest $m_\tau$ subject to $m_\tau\le n_\tau$.
\end{proposition}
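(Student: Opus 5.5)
Since the rank of $M_{(\tau)}$ is at most $m_\tau$, the first step is the Cauchy--Schwarz bound already used in Section~\ref{subsec:mot_choice_unf}:
\begin{equation*}
\|M_{(\tau)}\|_* \le \sqrt{m_\tau}\,\|M_{(\tau)}\|_F = \sqrt{m_\tau}\,\|M\|_F .
\end{equation*}
This reduces the claim to one scalar concentration statement about $\|M\|_F$. That quantity does not depend on the unfolding, which is why the bound holds \emph{simultaneously} for every $\tau$ on a single event. No union bound over the $2^d$ unfoldings is needed.

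Next I bound $\|M\|_F^2=\sum_{i} Z_i^2$, a sum of $N$ i.i.d. squares of sub-exponential variables. Each $Z_i^2$ is only $\psi_{1/2}$, with $\|Z_i^2\|_{\psi_{1/2}}\lesssim K^2$, so I will invoke the concentration inequalities for polynomials in $\alpha$-sub-exponential variables of \citet{gotze2021concentration}. I would apply them either to the quadratic form $\sum_i Z_i^2$ or, more directly, to the $1$-Lipschitz (in Euclidean norm) function $\|M\|_F$. The target is a bound of the form
\begin{equation*}
\|M\|_F \le \mathbb{E}\|M\|_F + C_0K\Bigl(\sqrt{\log\tfrac2\delta}+\log\tfrac2\delta\Bigr)
\end{equation*}
with probability at least $1-\delta$. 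For the expectation, Jensen gives $\mathbb{E}\|M\|_F\le\sqrt{N\,\mathbb{E}Z^2}\le C_0K\sqrt{N}$, using $\mathbb{E}Z^2\lesssim K^2$. Multiplying by $\sqrt{m_\tau}$ and using $\sqrt{m_\tau N}=m_\tau\sqrt{n_\tau}$ yields exactly \eqref{eq:prob_bound} with $C=C_0K$.

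I expect the tail step to be the main obstacle. Lipschitz concentration for heavy-tailed ($\psi_1$) coordinates is not dimension-free in general. A naive Bernstein bound on $\sum_i(Z_i^2-\mathbb{E}Z^2)$ would produce a deviation term of order $\sqrt{N\log(1/\delta)}+\log^2(1/\delta)$ for the squared norm. After taking square roots this gives terms like $N^{1/4}(\log(1/\delta))^{1/4}+\log(1/\delta)$. My fallback is to use the $\alpha=1$ results of \citet{gotze2021concentration} for the Euclidean norm of a $\psi_1$ vector. If a dimension-free $\sqrt{\log}+\log$ tail is unavailable, I would bound the extra $N^{1/4}$-type term by $\sqrt{N}$ times a logarithmic factor and absorb it into the leading term. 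The constants would then depend on $\delta$ only through the stated $\log$ terms, adjusting $C_0$ as needed.

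Finally, for the maximization claim I fix $N=m_\tau n_\tau$ and write the bound as a function of $m=m_\tau\le\sqrt N$:
\begin{equation*}
g(m)=C\bigl(\sqrt{mN}+\sqrt m\,L\bigr),\qquad L=\sqrt{\log\tfrac2\delta}+\log\tfrac2\delta .
\end{equation*}
Here $m\sqrt{n_\tau}=\sqrt{mN}$. This function is strictly increasing in $m$, so over admissible unfoldings it is maximized by the largest $m_\tau$ subject to $m_\tau\le n_\tau$, namely the most balanced one, and by $m_\tau=n_\tau=\sqrt N$ when a square unfolding exists.
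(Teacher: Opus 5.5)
Your reduction $\|M_{(\tau)}\|_*\le\sqrt{m_\tau}\,\|M\|_F$, your remark that a single event then covers every unfolding, and your monotonicity argument for the balanced unfolding all match the paper. The gap is in the tail step, and neither of your two options closes it.

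Your primary target, $\|M\|_F\le\mathbb{E}\|M\|_F+C_0K\bigl(\sqrt{\log\tfrac2\delta}+\log\tfrac2\delta\bigr)$ with $C_0$ absolute, is not merely unproven. It is false in general when $\mathbb{E}Z^2\ll K^2$. Take sparse Rademacher entries: $Z=\pm1$ with probability $p/2$ each, and $Z=0$ otherwise. Then $K=\|Z\|_{\psi_1}=1/\log(1+1/p)\to0$ as $p\to0$. Yet $\|M\|_F=\sqrt{\mathrm{Bin}(N,p)}$ has fluctuations of order one (standard deviation about $1/2$) when $Np$ is held fixed and large.

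Your fallback fails too. Let $\sigma^2=\mathbb{E}Z^2$ and $L=\log\tfrac2\delta$. After Bernstein or Hanson--Wright you have $\|M\|_F\le\sigma\sqrt N+cK\bigl(N^{1/4}L^{1/4}+L\bigr)$. Bounding $N^{1/4}L^{1/4}$ by $\sqrt N$ times a logarithmic factor produces a term like $K\sqrt N\,L^{1/4}$. After multiplying by $\sqrt{m_\tau}$, its coefficient in front of $m_\tau\sqrt{n_\tau}$ grows as $\delta\to0$. So it cannot be absorbed into $C\,m_\tau\sqrt{n_\tau}$ with $C=C_0K$ independent of $\delta$.

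The missing idea is to split the cross term additively rather than multiplicatively. By AM--GM, $N^{1/4}L^{1/4}\le\tfrac12\bigl(\sqrt N+\sqrt L\bigr)$. Half of it then goes into the leading $\sqrt N$ term and half into the $\sqrt L$ term.

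This is exactly the paper's route:
\begin{enumerate}
\item Apply the Hanson--Wright inequality of \citet{gotze2021concentration} with $\alpha=1$ and $A=I_N$. This gives $\|M\|_F^2\le N\sigma^2+C_1K^2\bigl(\sqrt{NL}+L^2\bigr)$ with probability at least $1-\delta$.
\item Use $\sqrt{a+b}\le\sqrt a+\sqrt b$, then AM--GM, then $\sigma\le 2K$.
\item Conclude $\|M\|_F\le C_0K\bigl(\sqrt N+\sqrt L+L\bigr)$.
\end{enumerate}
No dimension-free concentration of $\|M\|_F$ around its mean is required. The claimed bound already tolerates deviations of size $K\sqrt N$.
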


\begin{proof}
    Let $N = \prod_{i=1}^d n_i$ and write $L := \log(2/\delta)$. Since unfolding only permutes
    the entries of the tensor,
    \begin{equation}
    \|M_{(\tau)}\|_F = \|M\|_F = \|\operatorname{vec}(M)\|_2
    \end{equation}
    for every admissible $\tau \in \mathcal T$.

    By the standard relation between the nuclear and Frobenius norms for $A \in \mathbb{R}^{m \times n}$
    with singular values $\sigma_1 \ge \cdots \ge \sigma_r > 0$, where $r = \mathrm{rank}(A) \le \min(m,n)$,
    $\|A\|_* = \sum_{i=1}^r \sigma_i$ and $\|A\|_F = ( \sum_{i=1}^r \sigma_i^2 )^{1/2}$,
    the Cauchy--Schwarz inequality gives
    \begin{equation}
    \|A\|_*
    =
    \sum_{i=1}^r \sigma_i \cdot 1
    \le
    \left( \sum_{i=1}^r \sigma_i^2 \right)^{1/2}
    \left( \sum_{i=1}^r 1^2 \right)^{1/2}
    =
    \sqrt{r}\,\|A\|_F
    \le
    \sqrt{\min(m,n)}\,\|A\|_F.
    \end{equation}
    In particular, since $m_\tau \le n_\tau$,
    \begin{equation}\label{eq:cs_step}
    \|M_{(\tau)}\|_* \le \sqrt{m_\tau}\,\|M_{(\tau)}\|_F = \sqrt{m_\tau}\,\|\operatorname{vec}(M)\|_2 .
    \end{equation}

    Therefore it suffices to control $\|\operatorname{vec}(M)\|_2$, and we can reduce the problem to
    controlling the norm of a random vector. Denote by $Z_1,\dots,Z_N$ the entries of $M$,
    and let $\sigma^2:=\mathbb E Z^2$ and $S:=\|\operatorname{vec}(M)\|_2^2=\sum_{j=1}^N Z_j^2$.
    As $x^2/2\le e^{x}$ for $x\ge 0$ and $\mathbb E\exp(|Z|/K)\le 2$ we get that $\sigma\le 2K$.
    Using the Hanson--Wright inequality for $\alpha$-sub-exponential random vectors,
    \citep[Cor.~1.4]{gotze2021concentration}, with $\alpha=1$, $x=\operatorname{vec}(M)$ and
    $A=I_N$, we obtain $\mathbb E\,x^\top I_N x = N\sigma^2$, and for every $u\ge0$:
    \begin{equation}\label{eq:hw}
    \mathbb P\bigl(S-N\sigma^{2}\ge u\bigr)
    \le
    2\exp\!\left(-c\min\!\left(\frac{u^{2}}{K^{4}N},\ \frac{\sqrt{u}}{K}\right)\right),
    \end{equation}
    for some constant $c > 0$.
    Now fix $\delta\in(0,1)$ and set
    \begin{equation}
    u_\delta := C_1K^{2}\bigl(\sqrt{NL}+L^{2}\bigr),
    \qquad C_1:=\max\bigl(c^{-1/2},c^{-2}\bigr),
    \end{equation}
    so that $u_\delta^{2}/(K^4N)\ge C_1^{2}L\ge c^{-1}L$ and
    $\sqrt{u_\delta}/K\ge\sqrt{C_1}\,L\ge c^{-1}L$, whence the right-hand side of
    \eqref{eq:hw} is at most $\delta$. Then, with probability at least $1-\delta$,
    $S\le N\sigma^2+u_\delta$, and, applying $\sqrt{a+b}\le\sqrt a+\sqrt b$ twice,
    \begin{equation}
    \|\operatorname{vec}(M)\|_2
    \le
    \sigma\sqrt N+\sqrt{C_1}\,K\bigl(N^{1/4}L^{1/4}+L\bigr).
    \end{equation}
    By AM--GM, $N^{1/4}L^{1/4} \le \tfrac{1}{2} (\sqrt N+\sqrt L)$, so that,
    using $\sigma\le 2K$,
    \begin{equation}\label{eq:vec_bound}
    \|\operatorname{vec}(M)\|_2\le C_0K\bigl(\sqrt N+\sqrt L+L\bigr),
    \qquad C_0:=2+\sqrt{C_1}.
    \end{equation}
    Since the event in \eqref{eq:vec_bound} does not depend on $\tau$, the bounds hold on it
    simultaneously for every admissible unfolding.

    Combining \eqref{eq:vec_bound} with \eqref{eq:cs_step} and using $N=m_\tau n_\tau$,
    so that $\sqrt{m_\tau N}=m_\tau\sqrt{n_\tau}$, we obtain
    \begin{equation}
    \|M_{(\tau)}\|_*
    \le
    \sqrt{m_\tau}\,\|\operatorname{vec}(M)\|_2
    \le
    C_0K\Bigl(m_\tau\sqrt{n_\tau}+\sqrt{m_\tau}\bigl(\sqrt{L}+L\bigr)\Bigr),
    \end{equation}
    which is \eqref{eq:prob_bound} with $C=C_0K$.

    Finally, let us justify the statement about the ``most balanced'' unfolding. For fixed total
    dimension $N$ and given the constraint $m_\tau\le n_\tau$, we have $n_\tau=N/m_\tau$, and hence
    \begin{equation}
    m_\tau \sqrt{n_\tau} = m_\tau \sqrt{\frac{N}{m_\tau}} = \sqrt{N m_\tau}.
    \end{equation}
    Thus the leading term is monotone increasing in $m_\tau$, and the deviation term, being
    proportional to $\sqrt{m_\tau}$, is monotone increasing in $m_\tau$ as well; the constraint
    $m_\tau\le n_\tau$ is equivalent to $m_\tau\le\sqrt N$. Since admissible unfoldings are
    determined by partitions of the tensor modes, not every factorization of $N$ need be
    realizable. Therefore, among the admissible unfoldings, the largest upper bound is attained by
    the unfolding with the largest possible value of $m_\tau$ subject to $m_\tau \le n_\tau$,
    namely by the most balanced admissible unfolding.
\end{proof}

\section{Additional experiments}

\subsection{CIFAR-10 and CIFAR-100}
\label{appc:cifar}

This section presents additional training results for ResNet-18 and ResNet-34 in the CIFAR10 and CIFAR100 datasets respectively. Unlike previous experiments, here we also evaluate the Tensorion Algorithm~\ref{alg:one_step_of_tensorion} using the AdamW optimizer for the non-tensor blocks; this setting is denoted as $\mathtt{tensorion\_adamw}$ in Fig.~\ref{fig:cifar10-4} and~\ref{fig:cifar100-4}.

It is also important to analyze how the algorithm’s behavior depends on the optimizer chosen for the non-tensor parameters of the model. Indeed, the reported experiments show that the behavior strongly depends on this choice. When SGD is used, we observe improvements in both convergence and accuracy compared to applying SGD to all model parameters. For AdamW, we observe a roughly similar picture, where both algorithms achieve approximately the same performance.

\begin{figure*}[h]
    \centering

    \begin{subfigure}[t]{\linewidth}
        \centering
        \includegraphics[width=0.8\linewidth]{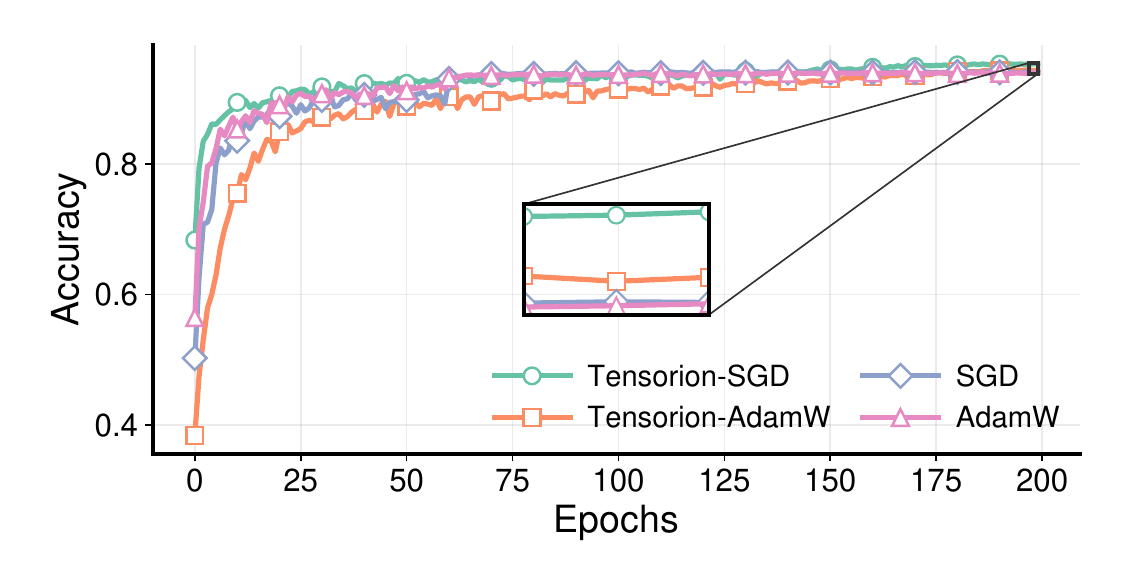}
        \caption{Test accuracy vs Epoch}
        \label{fig:cifar10-4-accuracy}
    \end{subfigure}

    \begin{subfigure}[t]{0.49\linewidth}
        \centering
        \includegraphics[width=\linewidth]{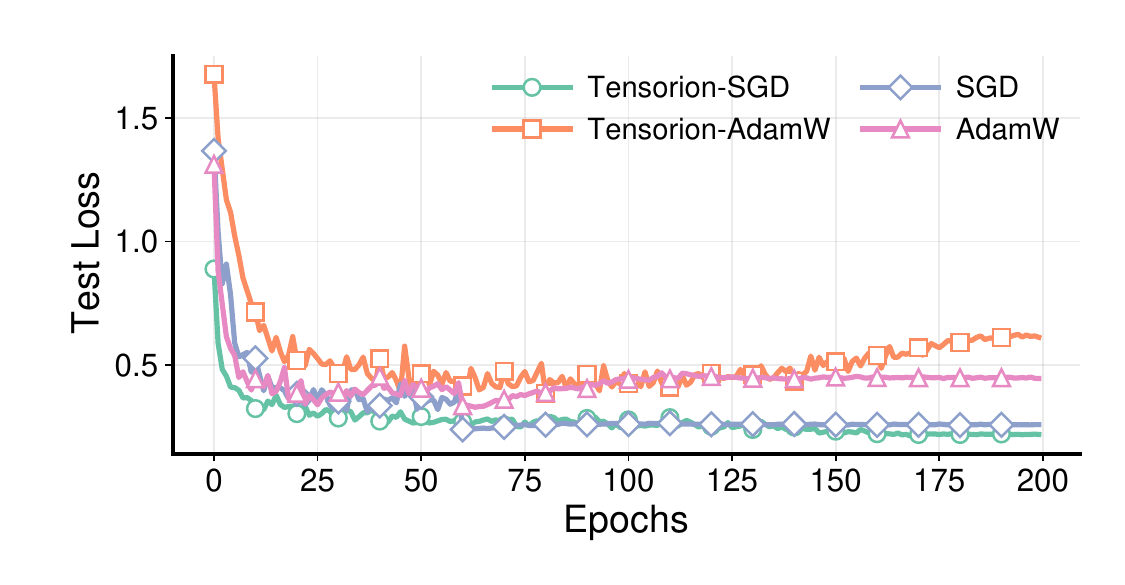}
        \caption{Test loss vs Epoch}
        \label{fig:cifar10-4-test-loss}
    \end{subfigure}\hfill
    \begin{subfigure}[t]{0.49\linewidth}
        \centering
        \includegraphics[width=\linewidth]{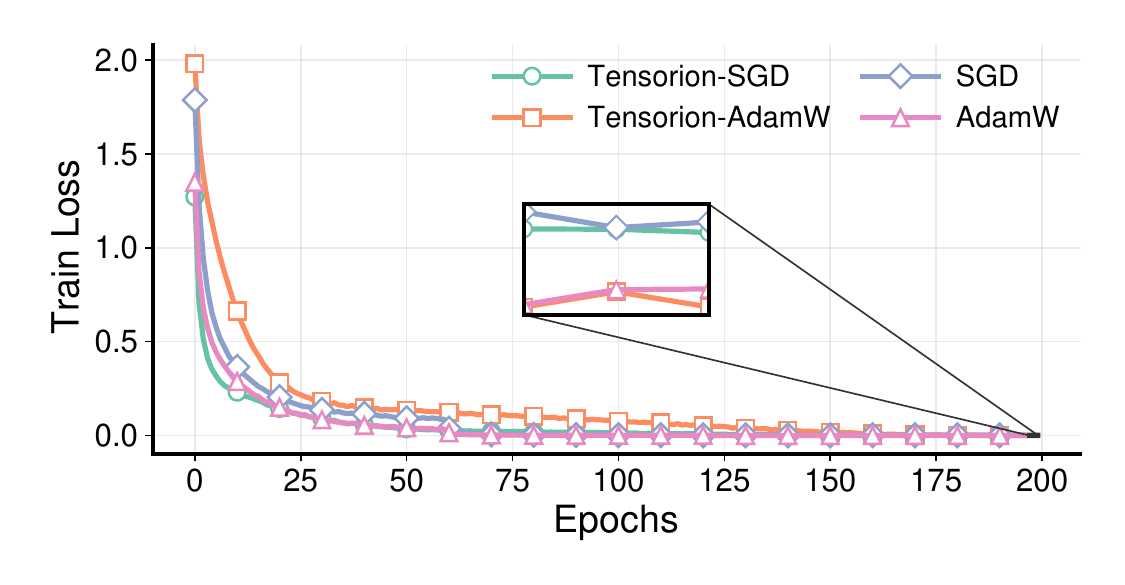}
        \caption{Train loss vs Epoch}
        \label{fig:cifar10-4-train-loss}
    \end{subfigure}

    \caption{CIFAR-10 results for ResNet-18 trained for $200$ epochs under the standard CIFAR pipeline of~\cite{2023mmpretrain}. We report mean \(\pm\) std over multiple seeds for test accuracy (Figure~\ref{fig:cifar10-4-accuracy}), test loss (Figure~\ref{fig:cifar10-4-test-loss}), and train loss (Figure~\ref{fig:cifar10-4-train-loss}). Tensorion is used only for tensor-valued blocks with order \(d\ge 3\) (e.g., convolutional kernels), while all remaining parameters (\(d\le 2\)) are optimized with a first-order method (SGD or AdamW).}
    \label{fig:cifar10-4}
\end{figure*}

\begin{figure*}[h]
    \centering

    \begin{subfigure}[t]{\linewidth}
        \centering
        \includegraphics[width=0.8\linewidth]{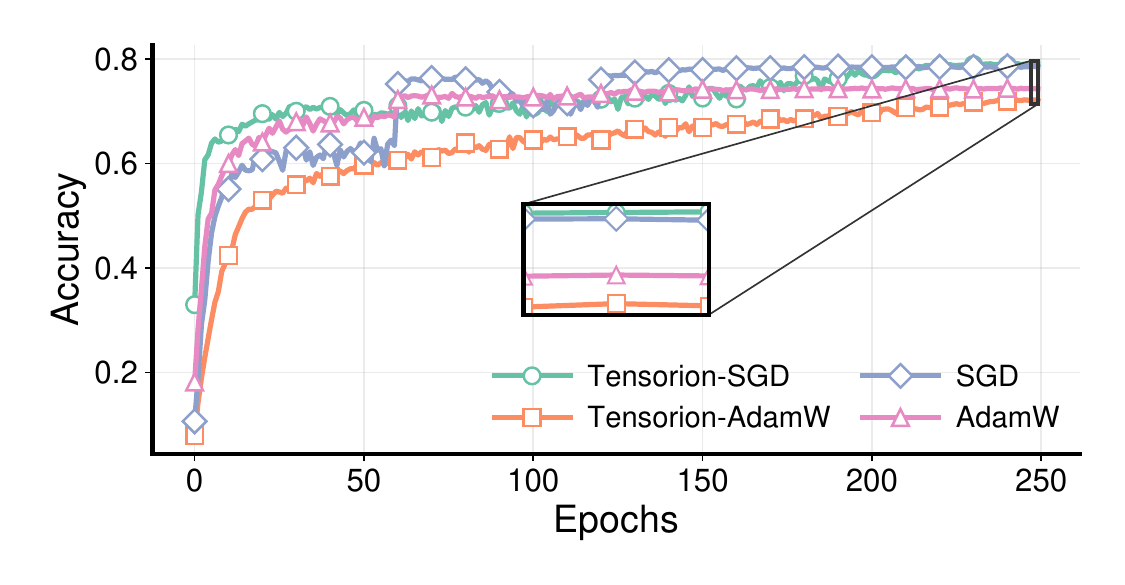}
        \caption{Test accuracy vs Epoch}
        \label{fig:cifar100-4-accuracy}
    \end{subfigure}

    \begin{subfigure}[t]{0.49\linewidth}
        \centering
        \includegraphics[width=\linewidth]{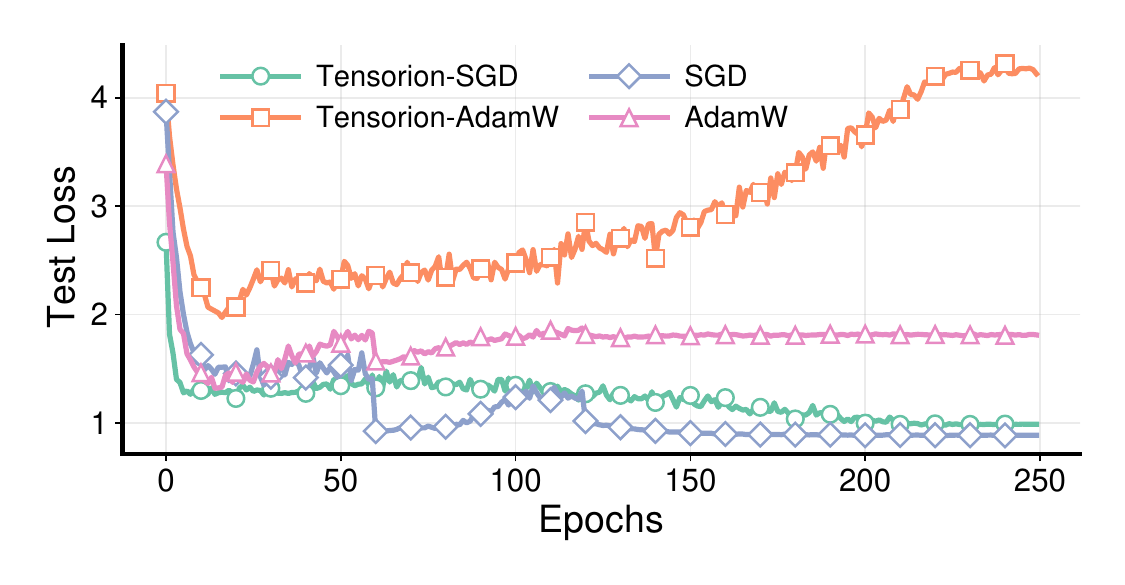}
        \caption{Test loss vs Epoch}
        \label{fig:cifar100-4-test-loss}
    \end{subfigure}\hfill
    \begin{subfigure}[t]{0.49\linewidth}
        \centering
        \includegraphics[width=\linewidth]{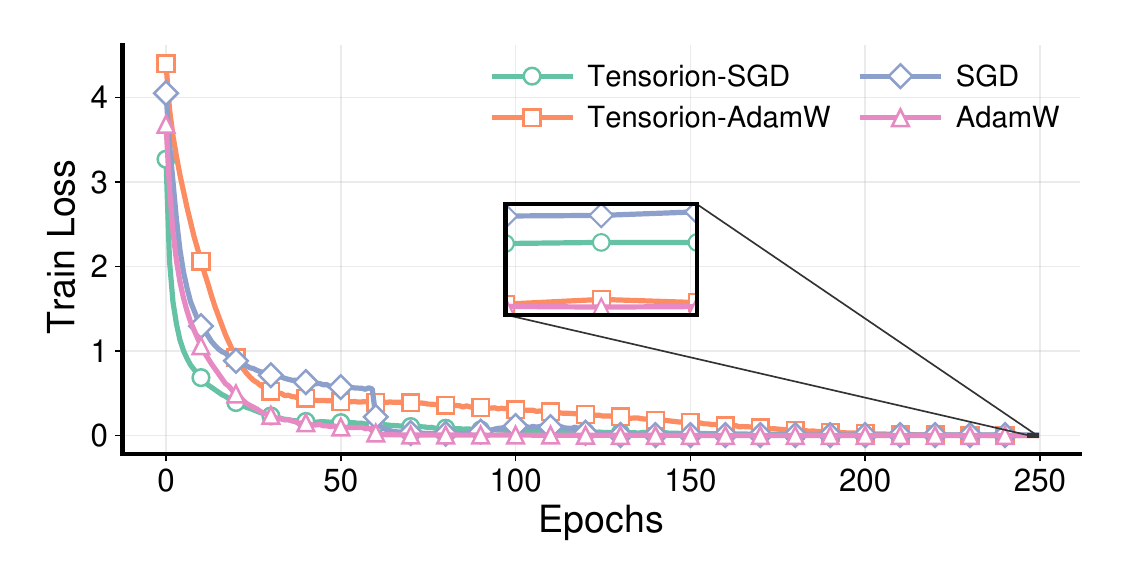}
        \caption{Train loss vs Epoch}
        \label{fig:cifar100-4-train-loss}
    \end{subfigure}

    \caption{CIFAR-100 results for ResNet-34 trained for 250 epochs under the standard CIFAR pipeline of~\cite{2023mmpretrain}. We report mean \(\pm\) std over multiple seeds for test accuracy (Figure~\ref{fig:cifar100-4-accuracy}), test loss (Figure~\ref{fig:cifar100-4-test-loss}), and train loss (Figure~\ref{fig:cifar100-4-train-loss}). Tensorion is used only for tensor-valued blocks with order \(d\ge 3\) (e.g., convolutional kernels), while all remaining parameters (\(d\le 2\)) are optimized with a first-order method (SGD or AdamW).}
    \label{fig:cifar100-4}
\end{figure*}

\subsection{ViT \& Swin models}
\label{sec:details_vit_swin}
Here we present some additional details on evaluations on transformer-based models. We initialize all the models from pretrained checkpoints available in the \texttt{timm} library and finetune the models for 5 epochs on Imagenette dataset. Tensorion is only applied to convolutional layers, while we revert to either AdamW or SGD for other layers to isolate the effects. Notably, applying Tensorion to weight matrices in Transformer blocks would recover Muon updates. We fix weight decay and LR scheduler parameters for this experiment, and only sweep over optimal learning rates for the respective choice of optimizers. We report best runs for each choice of the optimizer in \ref{fig:all_model_scores_swin} and \ref{fig:all_model_scores_vit}.

\begin{figure}[h!tp]
    \centering
    
    \begin{subfigure}{0.49\linewidth}
        \centering
        \includegraphics[width=\linewidth]{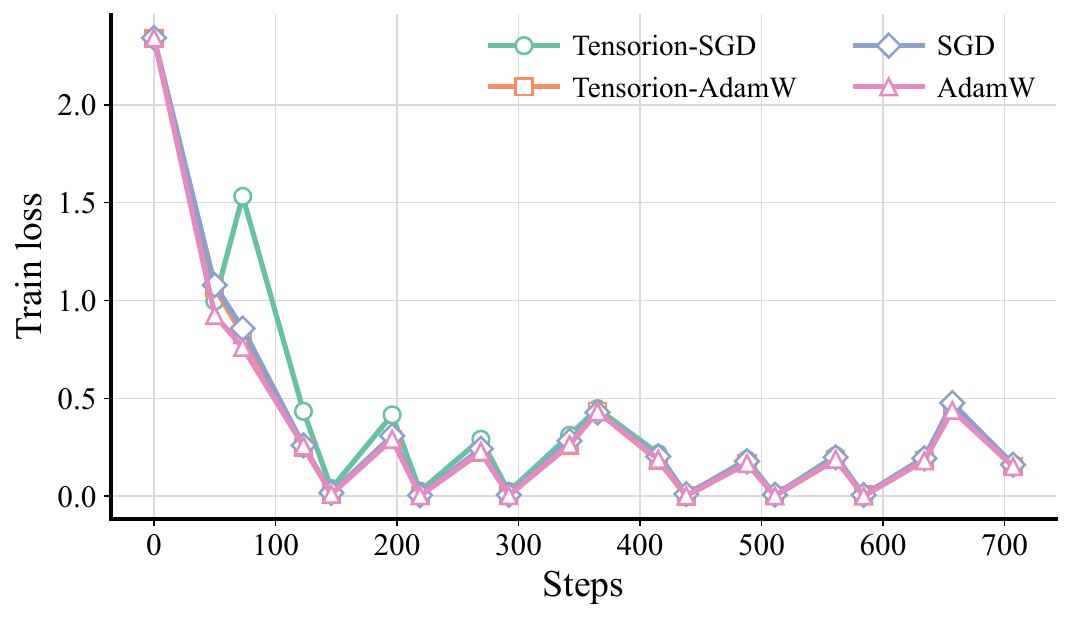}
        \caption{Swin Tiny Train Loss}
        \label{fig:swin_tiny_loss}
    \end{subfigure}
    \begin{subfigure}{0.49\linewidth}
        \centering
        \includegraphics[width=\linewidth]{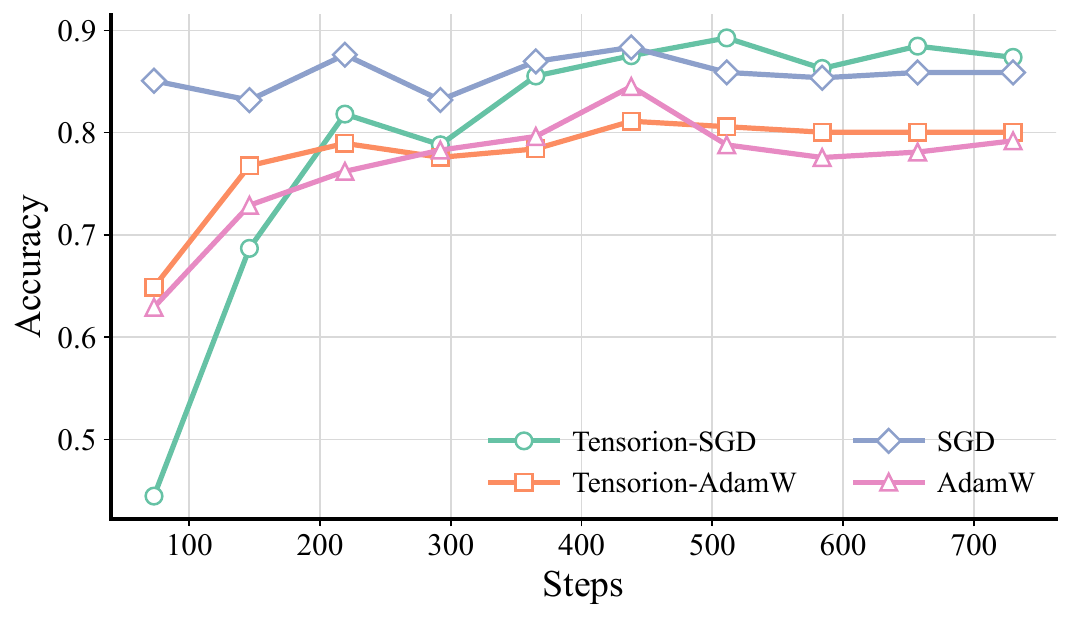}
        \caption{Swin Tiny Accuracy}
        \label{fig:swin_tiny_acc}
    \end{subfigure}

    \begin{subfigure}{0.49\linewidth}
        \centering
        \includegraphics[width=\linewidth]{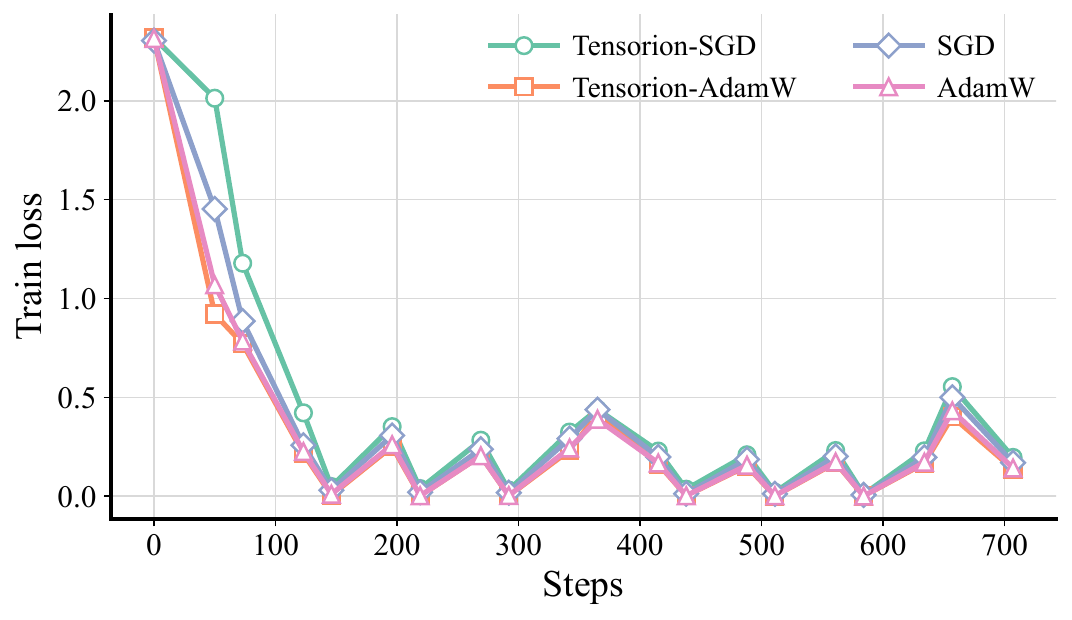}
        \caption{Swin Small Train Loss}
        \label{fig:swin_small_loss}
    \end{subfigure}
    \begin{subfigure}{0.49\linewidth}
        \centering
        \includegraphics[width=\linewidth]{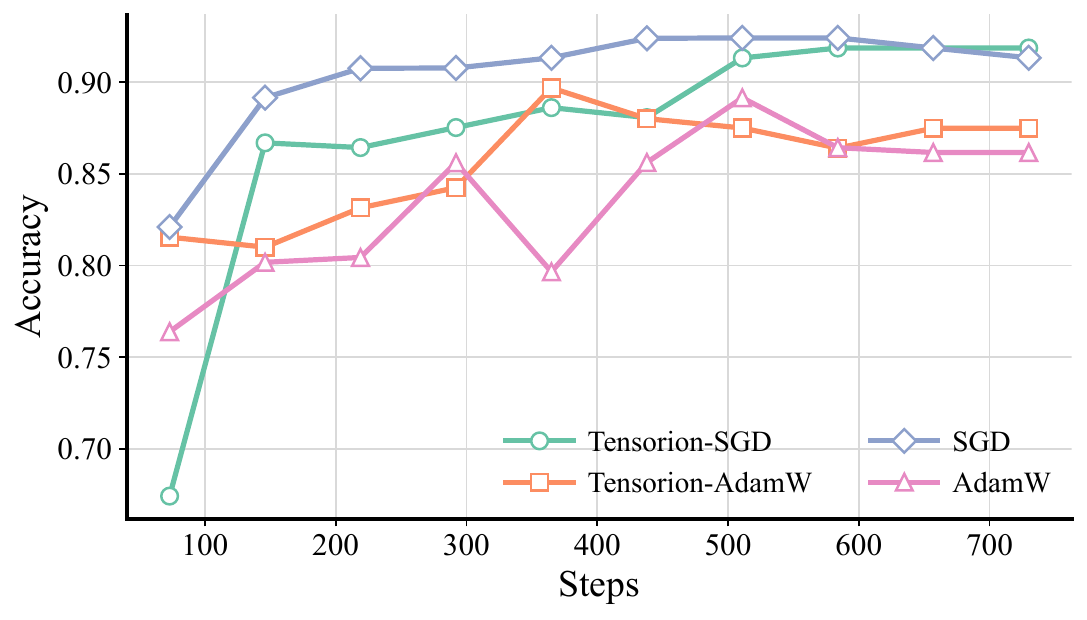}
        \caption{Swin Small Accuracy}
        \label{fig:swin_small_acc}
    \end{subfigure}
    
    \begin{subfigure}{0.49\linewidth}
        \centering
        \includegraphics[width=\linewidth]{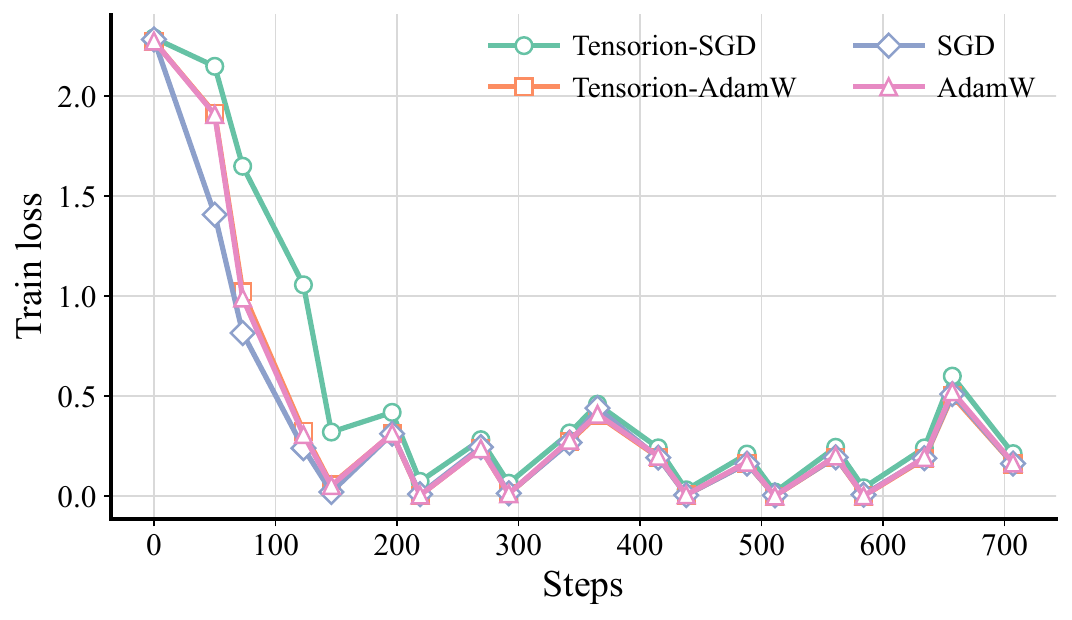}
        \caption{Swin Base Train Loss}
        \label{fig:swin_base_loss}
    \end{subfigure}
    \begin{subfigure}{0.49\linewidth}
        \centering
        \includegraphics[width=\linewidth]{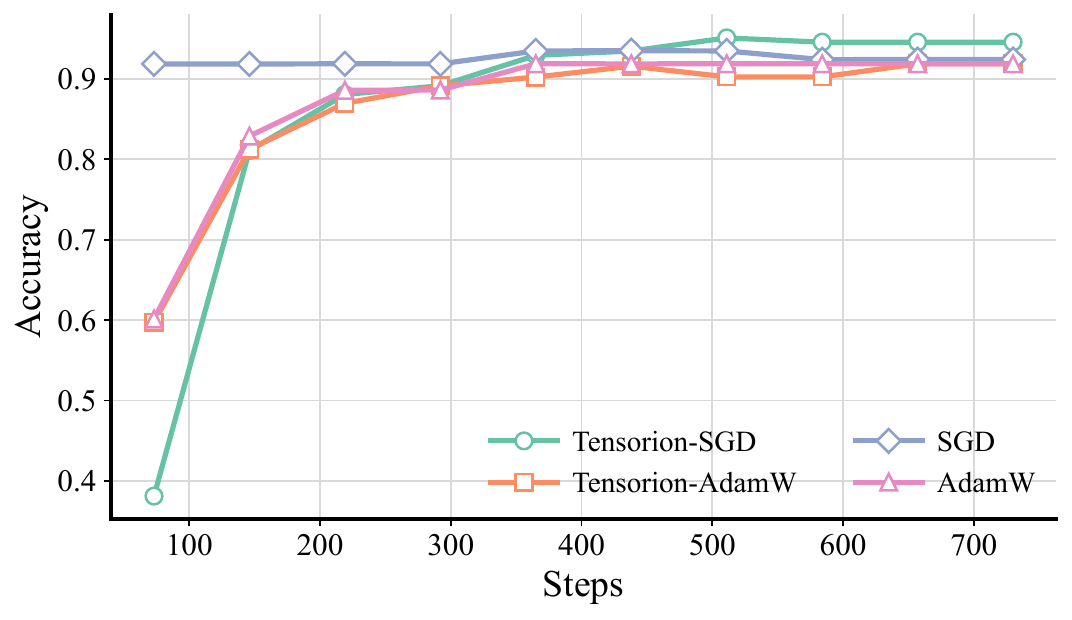}
        \caption{Swin Base Accuracy}
        \label{fig:swin_base_acc}
    \end{subfigure}

    \caption{Test accuracy and training loss across all Swin models.}
    \label{fig:all_model_scores_swin}
\end{figure}

\begin{figure}[h!tp]
    \centering

    \begin{subfigure}{0.49\linewidth}
        \centering
        \includegraphics[width=\linewidth]{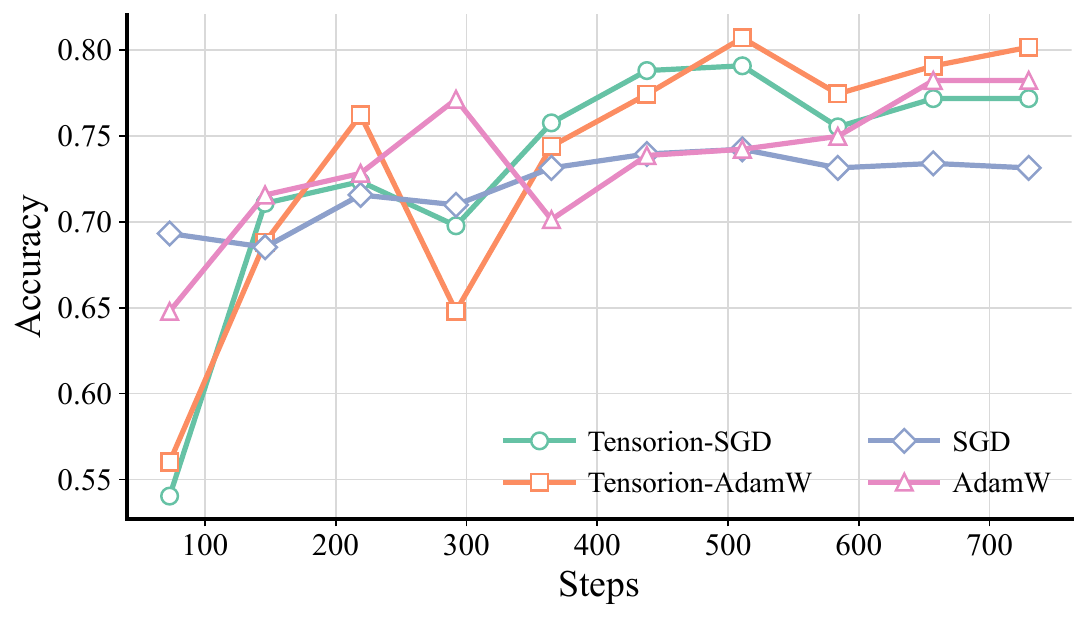}
        \caption{ViT Small Accuracy}
        \label{fig:vit_small_acc}
    \end{subfigure}
    \begin{subfigure}{0.49\linewidth}
        \centering
        \includegraphics[width=\linewidth]{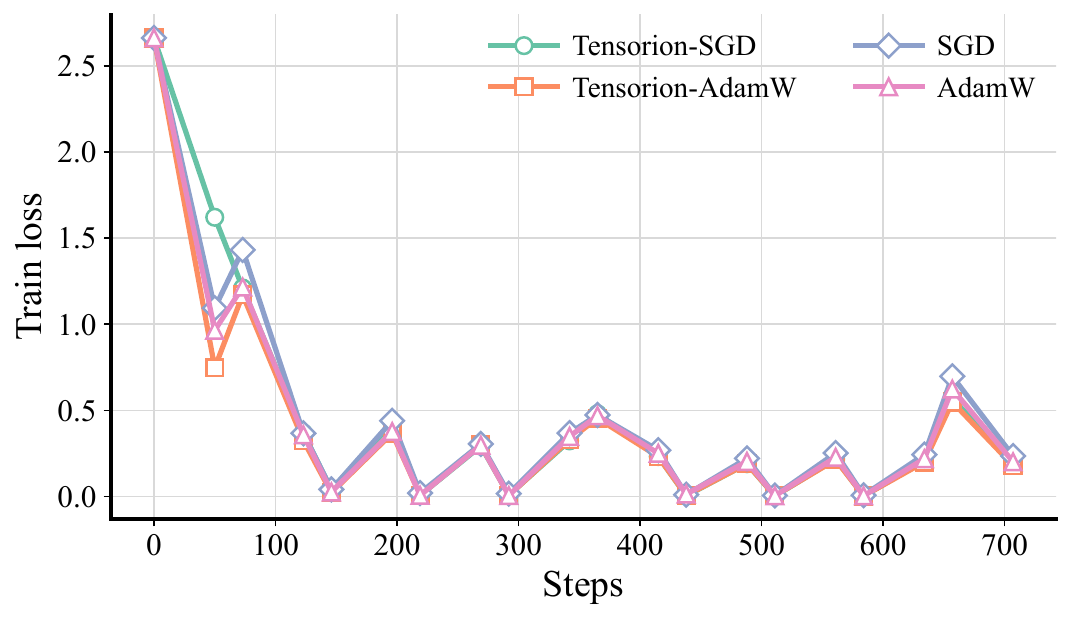}
        \caption{ViT Small Train Loss}
        \label{fig:vit_small_loss}
    \end{subfigure}

    \begin{subfigure}{0.49\linewidth}
        \centering
        \includegraphics[width=\linewidth]{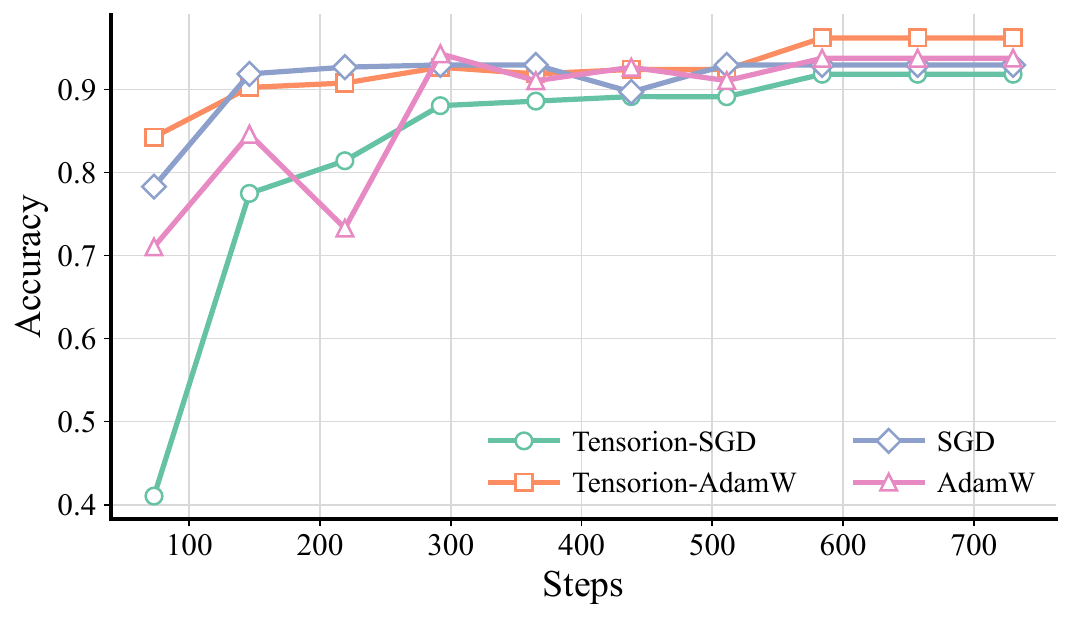}
        \caption{ViT Base Accuracy}
        \label{fig:vit_base_acc}
    \end{subfigure}
    \begin{subfigure}{0.49\linewidth}
        \centering
        \includegraphics[width=\linewidth]{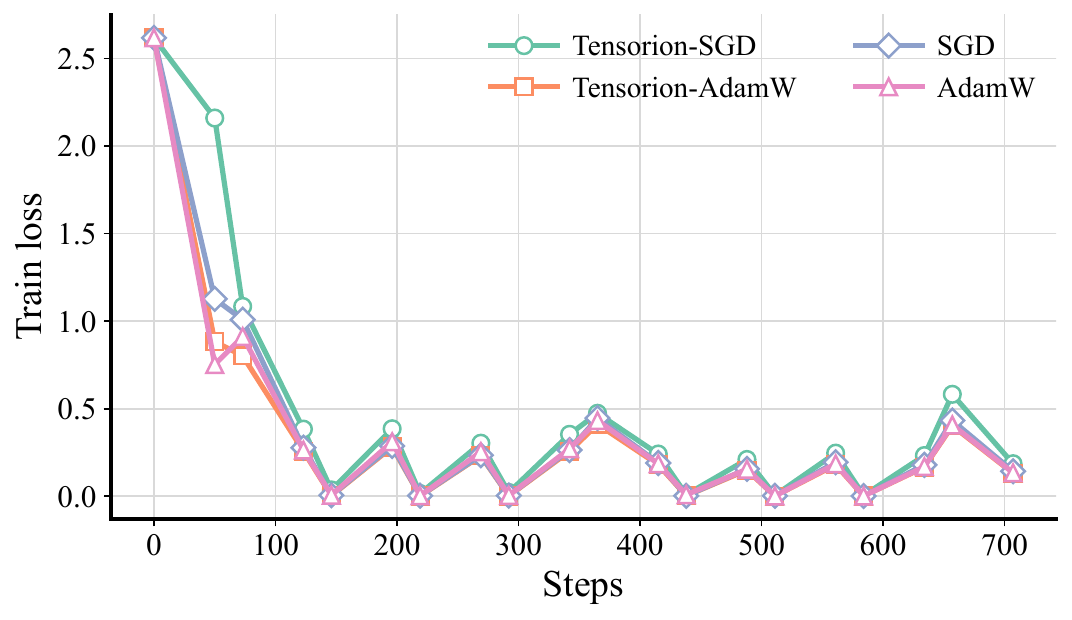}
        \caption{ViT Base Train Loss}
        \label{fig:vit_base_loss}
    \end{subfigure}

    \begin{subfigure}{0.49\linewidth}
        \centering
        \includegraphics[width=\linewidth]{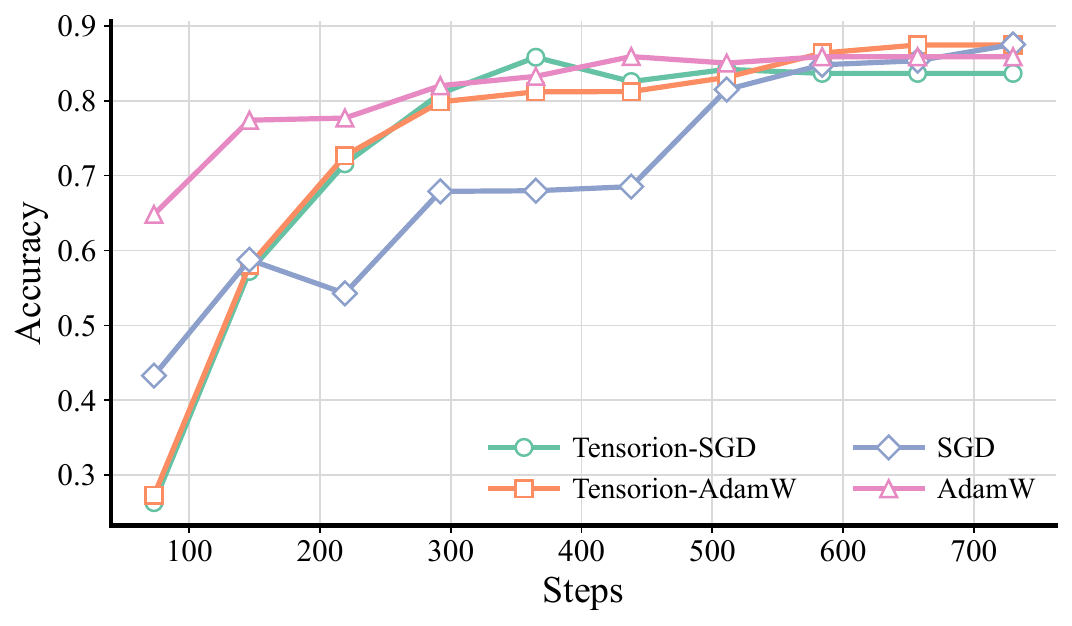}
        \caption{ViT Medium Accuracy}
        \label{fig:vit_medium_acc}
    \end{subfigure}
    \begin{subfigure}{0.49\linewidth}
        \centering
        \includegraphics[width=\linewidth]{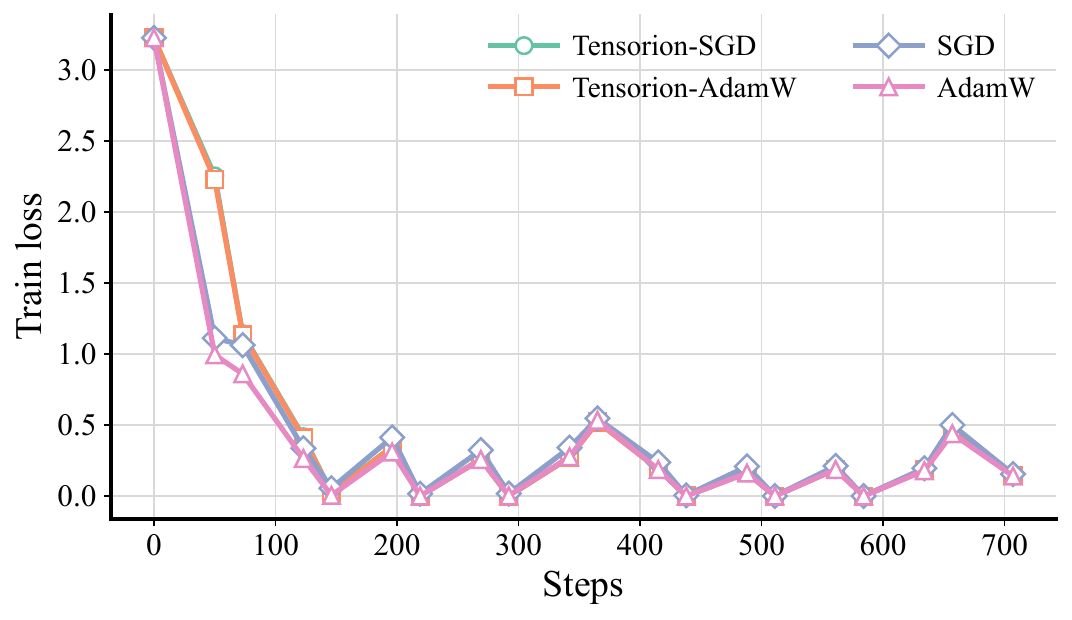}
        \caption{ViT Medium Train Loss}
        \label{fig:vit_medium_loss}
    \end{subfigure}

    \caption{Test accuracy and training loss across ViT models.}
    \label{fig:all_model_scores_vit}
\end{figure}

\subsection{Beyond CNNs: Laplacian Spectrum}
\label{subsec:laplacian-spectrum}

\paragraph{Tensor trains (TT).}
Tensor Train (TT) decomposition is a low-rank factorization for high-order tensors that represents a $d$-way tensor $x \in \mathbb{R}^{n_1 \times \cdots \times n_d}$ as a product of \emph{TT-cores}:
\begin{equation}
x(i_1,\dots,i_d)
=
G^{(1)}(i_1)\, G^{(2)}(i_2)\cdots G^{(d)}(i_d),
\label{eq:tt-decomp}
\end{equation}
where each core $G^{(k)}(i_k) \in \mathbb{R}^{r_{k-1}\times r_k}$ is a matrix indexed by $i_k \in \{1,\dots,n_k\}$, with boundary ranks $r_0=r_d=1$. The tuple $(r_1,\dots,r_{d-1})$ are the TT-ranks; small TT-ranks yield a compressed representation that enables scalable optimization in very high dimensions.

\paragraph{Discrete Laplacian on $[0,1]^d$.}
Consider a uniform tensor-product grid on $[0,1]^d$ with $n$ points per dimension and step size $h = \frac{1}{n+1}$ (Dirichlet boundary conditions). Let $L \in \mathbb{R}^{n\times n}$ be the standard 1D second-difference matrix
\begin{equation}
L = \frac{1}{h^2}\,\mathrm{tridiag}(1,-2,1),
\label{eq:1d-laplacian}
\end{equation}
where $\mathrm{tridiag}(1,-2,1)$ denotes tridiagonal matrix.

The $d$-dimensional discrete Laplacian $\Delta_d \in \mathbb{R}^{n^d \times n^d}$ on the tensor-product grid is given by the Kronecker sum
\begin{equation}
\Delta_d
=
\sum_{k=1}^{d}
I^{\otimes (k-1)} \otimes L \otimes I^{\otimes (d-k)},
\label{eq:kronecker-sum-laplacian}
\end{equation}
where $I \in \mathbb{R}^{n\times n}$ is the identity and $\otimes$ denotes the Kronecker product. We treat $\Delta_d$ as a linear operator acting on an order-$d$ tensor $x \in \mathbb{R}^{n\times \cdots \times n}$ via vectorization.

\paragraph{TT-operator form.}
Each term $I^{\otimes (k-1)} \otimes L \otimes I^{\otimes (d-k)}$ in \eqref{eq:kronecker-sum-laplacian} is a rank-1 Kronecker product of $d$ factors and thus admits an exact TT-operator representation with TT-rank $1$. Consequently, the full Laplacian $\Delta_d$, being a sum of $d$ such Kronecker terms, admits a compact TT-operator representation with small TT-ranks (bounded by a constant independent of $n^d$), which allows for efficient application $\Delta_d x$ and inner products $\langle x, \Delta_d x\rangle$ in TT format.

\paragraph{Low-rank structure of eigenvectors.}
Our goal is to compute the low end of the spectrum of $\Delta_d$ in a \emph{compressed} form. We restrict the search space to eigenvectors representable with small TT-ranks, i.e., $x$ is parameterized by TT-cores with ranks bounded by a prescribed $r$. This reflects the practical regime in high-dimensional PDE discretizations where one seeks approximate eigenmodes within a low-rank manifold.

\paragraph{Rayleigh quotient optimization.}
Let $A := -\Delta_d$ be the discrete positive semidefinite operator.
The smallest eigenpair of $A$ can be obtained by minimizing the Rayleigh quotient
\begin{equation}
\min_{x \neq 0} \ \mathcal{R}(x)
\;=\;
\min_{x \neq 0} \ \frac{\langle x, A x\rangle}{\langle x, x\rangle},
\label{eq:rayleigh-quotient}
\end{equation}
In our experiments, we optimize \eqref{eq:rayleigh-quotient} over the TT-parameterization of $x$ with bounded TT-ranks, comparing Tensorion to gradient descent baselines on this non-deep, structured optimization task.

\begin{figure*}[h]
    \centering

    \begin{subfigure}[t]{0.49\linewidth}
        \centering
        \includegraphics[width=\linewidth]{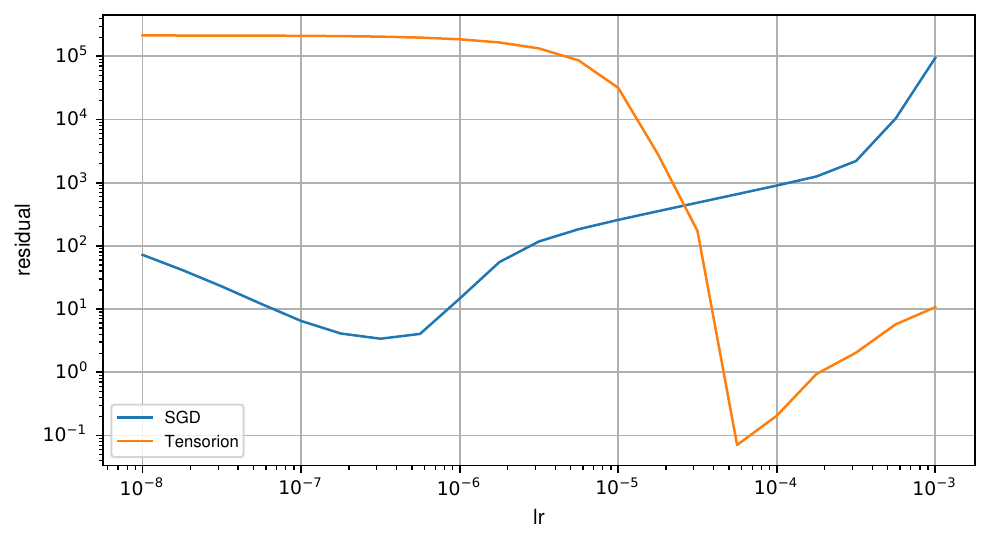}
        \caption{Residual vs learning rate}
        \label{fig:laplace-residual-vs-lr}
    \end{subfigure}\hfill
    \begin{subfigure}[t]{0.49\linewidth}
        \centering
        \includegraphics[width=\linewidth]{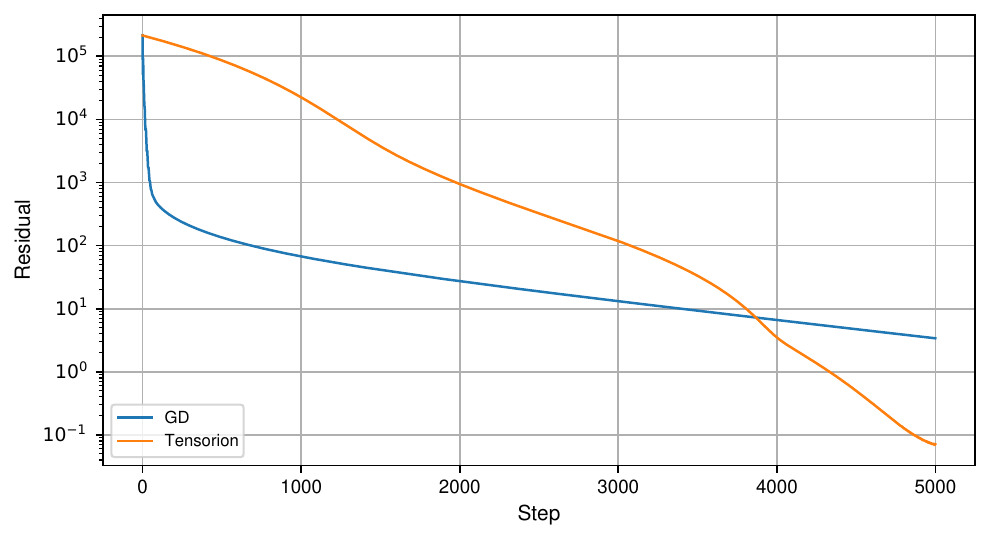}
        \caption{Residual vs step, optimal learning rate}
        \label{fig:laplace-residual-vs-step}
    \end{subfigure}

    \caption{Minimization of the Rayleigh quotient \eqref{eq:rayleigh-quotient} for the smallest eigenpair of \(A=-\Delta_d\) on a \(d\)-dimensional tensor-product grid, with the eigenvector \(x\) parameterized in TT format with bounded TT-ranks. Left: final functional-residual as a function of the learning rate. Right: functional-residual versus optimization step at the best-tuned learning rate for each method.}
    \label{fig:rayleigh-minimizing}
\end{figure*}

\section{Relaxation Tightness}\label{sec:appendix_relaxation_tightness}

To empirically assess the approximations underlying the practical Tensorion optimizer, we consider two complementary questions. First, we examine the tightness of the Tensorion norm relaxation relative to the tensor spectral norm along the LMO directions encountered during optimization. Second, we study the additional approximation introduced by the practical offline variant of Tensorion, which replaces the exact per-iteration search for an unfolding with the largest nuclear norm, used by online Tensorion, with the shape-based heuristic in Eq.~\eqref{eq:tau_opt_unfolding}. Both analyses are performed using optimization trajectories obtained by training ResNet-18 on CIFAR-10 with Tensorion-SGD. Along these trajectories, we collect the momentum tensors $M_{k,l}$ for each tensor-valued layer $l$ at optimization step $k$ and use the same collection of tensors in both experiments. This setup allows us to separately evaluate the quality of the norm relaxation and the accuracy of the offline approximation on tensors encountered during the actual optimization process.

\subsection{Tightness of the Tensorion Norm Relaxation}

We first examine the tightness of the relaxation introduced by replacing the tensor spectral norm with the Tensorion norm. Recall from Proposition~\ref{proposition:tensorion_and_spectral_norm_ineqs} that, for any tensor $X$,
\begin{equation}
    \| X \|_\sigma \leq \| X \|_\Sigma.
\end{equation}
For each collected momentum tensor $M_{k,l}$, we compute the online Tensorion update by solving the LMO in Eq.~\eqref{eq:tensorion_norm_LMO}. In particular, online Tensorion selects
\begin{equation}
    \tau_{k,l}^{\mathrm{online}}
    \in
    \arg\max_{\tau \in \Tau}
    \| (M_{k,l})_{(\tau)} \|_*,
\end{equation}
and uses the corresponding solution $X_{k,l}^{\Sigma}$ of the Tensorion LMO. Since $\| X_{k,l}^{\Sigma} \|_\Sigma = 1$, the inequality above implies
\begin{equation}
    \| X_{k,l}^{\Sigma} \|_\sigma \leq 1.
\end{equation}
We therefore use $\| X_{k,l}^{\Sigma} \|_\sigma$ as an empirical measure of the tightness of the Tensorion norm relaxation along the update directions encountered during optimization. Values close to one indicate that, along these directions, the boundary of the Tensorion unit ball is close to the boundary of the tensor spectral norm unit ball.

Figure~\ref{fig:relaxation-tightness} reports the tensor spectral norm $\| X_{k,l}^{\Sigma} \|_\sigma$ along the optimization trajectory, averaged across tensor-valued layers. The values remain consistently close to one, indicating that the Tensorion norm provides a tight relaxation of the tensor spectral norm for the update directions observed during training. Importantly, this experiment does not require solving the LMO over the tensor spectral norm ball, which is computationally intractable in general. Instead, it evaluates how close the exact online Tensorion LMO solutions are to the boundary of that ball.

\begin{figure}[t!]
    \centering
    \includegraphics[width=0.65\linewidth]{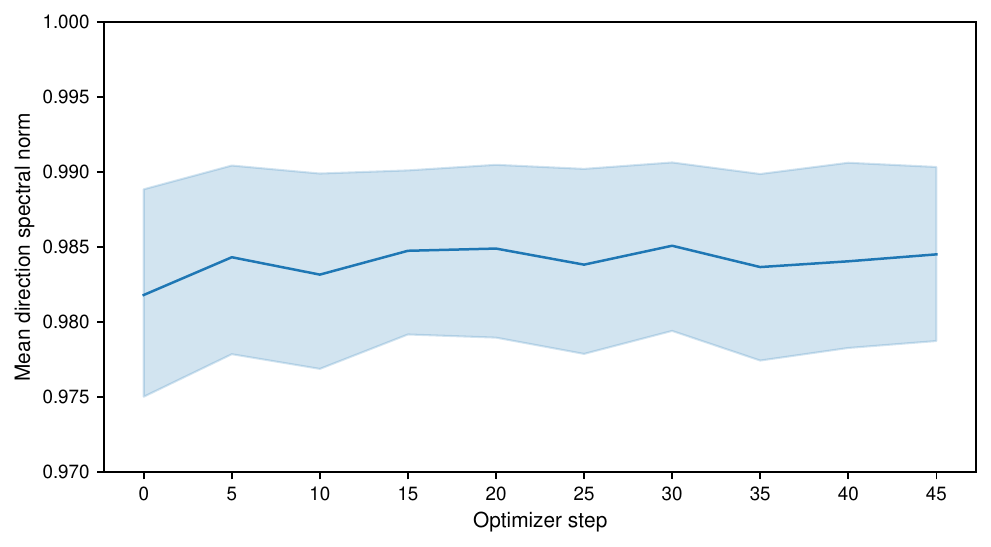}
    \caption{Tensor spectral norm $\| X_{k,l}^{\Sigma} \|_\sigma$ of the online Tensorion LMO solutions along the optimization trajectory of ResNet-18 on CIFAR-10. The values are averaged across tensor-valued layers.}
    \label{fig:relaxation-tightness}
\end{figure}

\subsection{Online vs.\ Offline Tensorion}

We next evaluate how closely the offline unfolding heuristic approximates the exact online Tensorion LMO. For each collected momentum tensor $M_{k,l}$, the online strategy selects
\begin{equation}
    \tau_{k,l}^{\mathrm{online}}
    \in
    \arg\max_{\tau \in \Tau}
    \| (M_{k,l})_{(\tau)} \|_*,
    \label{eq:online_tau_selection}
\end{equation}
whereas the offline strategy uses the unfolding $\tau_l^{\mathrm{opt}}$ selected once for layer $l$ according to Eq.~\eqref{eq:tau_opt_unfolding}. Thus, the online unfolding may change throughout optimization, while the offline unfolding depends only on the tensor shape of the corresponding layer.

We first compare the nuclear norm values attained by the two selection strategies. For the online strategy, this value is $\max_{\tau \in \Tau} \| (M_{k,l})_{(\tau)} \|_*$ whereas for the offline strategy it is $\| (M_{k,l})_{(\tau_l^{\mathrm{opt}})} \|_*$.

Figure~\ref{fig:online-offline-nuclear-norm} shows kernel density estimates of these quantities over all collected momentum tensors $M_{k,l}$. The two distributions almost completely overlap, indicating that the offline heuristic typically selects an unfolding whose nuclear norm is very close to the maximum obtained by the exact online search.

To quantify this discrepancy for each individual momentum tensor, we consider the relative nuclear norm gap
\begin{equation}
    \operatorname{gap}(M_{k,l})
    =
    \frac{
        \| (M_{k,l})_{(\tau_{k,l}^{\mathrm{online}})} \|_*
        -
        \| (M_{k,l})_{(\tau_l^{\mathrm{opt}})} \|_*
    }{
        \| (M_{k,l})_{(\tau_{k,l}^{\mathrm{online}})} \|_*
    }.
    \label{eq:online_offline_relative_gap}
\end{equation}
Since the maximal unfolding nuclear norm equals the optimal value of the Tensorion LMO, this quantity can also be interpreted as the relative LMO objective suboptimality introduced by the offline unfolding heuristic.

By construction, $\operatorname{gap}(M_{k,l}) \geq 0$, and it equals zero whenever the offline-selected unfolding attains the same nuclear norm as the online optimum. Across all collected momentum tensors, the mean relative gap is $0.0039$, while the maximum observed gap is $0.0732$. Hence, the nuclear norm attained by the offline heuristic is, on average, only $0.39\%$ below the online optimum, while the largest observed discrepancy is $7.32\%$. Together with the nearly coinciding empirical distributions, these results show that the shape-based offline heuristic typically achieves a nuclear norm value very close to that obtained by the exact per-iteration unfolding search of online Tensorion.

\begin{figure}[t!]
    \centering
    \includegraphics[width=0.65\linewidth]{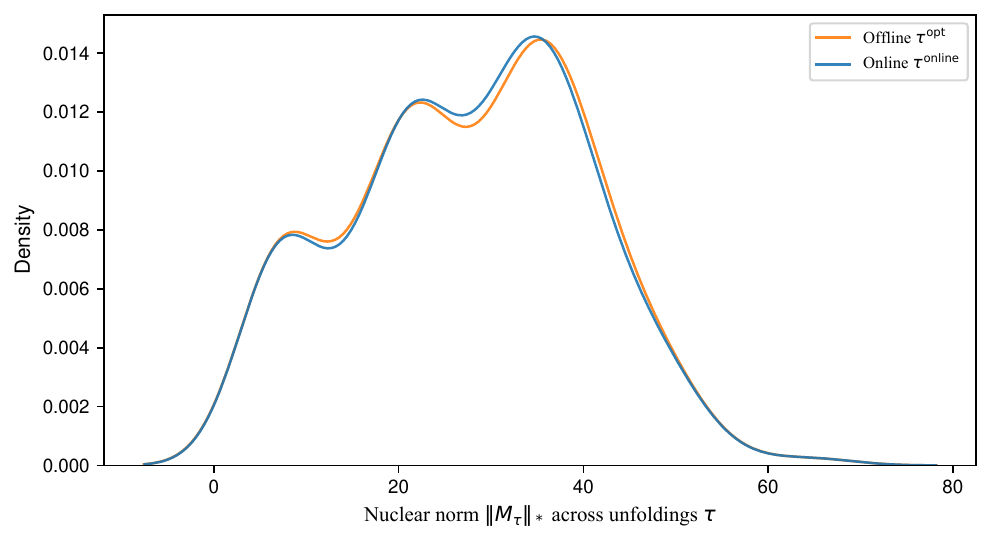}
    \caption{
        Kernel density estimates of the nuclear norms obtained by online and offline Tensorion over momentum tensors collected along the optimization trajectory of ResNet-18 on CIFAR-10. Online Tensorion uses
        $\max_{\tau \in \Tau} \| (M_{k,l})_{(\tau)} \|_*$,
        whereas offline Tensorion uses
        $\| (M_{k,l})_{(\tau_l^{\mathrm{opt}})} \|_*$.
    }
    \label{fig:online-offline-nuclear-norm}
\end{figure}

\section{Experimental Details}\label{sec:appendix_experimental_details}
\subsection{Tau ablation}\label{subsec:appendix_tau_ablation}

We enumerate all proper non-empty subsets $\tau \subsetneq \{1,2,3,4\}$.
Since unfolding with $\tau$ and with its complement $\tau^c$ produces matrices that are transposes of each other, and the nuclear norm is invariant to this operation, $\|X_{(\tau)}\|_* = \|X_{(\tau^c)}\|_*$, it suffices to consider $\tau$ up to the equivalence $\tau \sim \tau^c$, yielding $(2^4-2)/2=7$ distinct cases for $4$-dimensional convolution kernel. 

The convolutional kernel has shape $(C_{\text{out}}, C_{\text{in}}, k_h, k_w)$. In ResNet architectures, it typically holds that $C_{\text{out}}~\ge~C_{\text{in}}~\gg~k_h, k_w$.

In case of static $\tau$ selection, the unfolding is fixed across all layers, while the offline strategy selects $\tau^{\text{opt}}$ for each layer individually. To decouple the effect of $\tau$ selection from numerical approximation artifacts, we replace the Newton--Schulz routine (line $7$ in Algorithm~\ref{alg:one_step_of_tensorion}) with an exact SVD throughout this ablation. AdamW was used for layers with weight tensors of order $d \le 2$.

We train ResNet-18~\citep{he2016deep} on CIFAR-10~\citep{Krizhevsky09learningmultiple} for a fixed budget of 25 epochs (no early stopping). Across all optimizers, we keep the batch size, weight decay, and image preprocessing identical. We do not use an LR scheduler; instead, the learning rate is tuned \emph{independently for each method} via a grid search (see Table~\ref{tab:tau-ablation-lr}). 

\section{Hyperparameters}\label{sec:appendix_hyperparameters}
The optimal hyperparameter configurations reported  throughout the experiments are summarized in Table~\ref{tab:hyperparameters} and Table~\ref{tab:tau-ablation-lr}.

\begin{table}[h!t]
\centering
\caption{Hyperparameter configurations used across datasets and optimization setups.}
\label{tab:hyperparameters}
\begin{tabular}{lcccc}
\toprule
Setup / Method & SGD & AdamW & Tensorion & Tensorion\_AdamW \\
\midrule
CIFAR10 Scheduler  & \texttt{Multistep} & \texttt{Multistep} & \texttt{Onecycle} & \texttt{Onecycle} \\
CIFAR10 LR         & $0.075$ & $0.0001$ & $0.001$ & $0.075$ \\
CIFAR10 WD         & $0.0005$ & $0.0001$ & $0.0001$ & $0.0005$ \\
\midrule
CIFAR100 Scheduler & \texttt{Multistep} & \texttt{Multistep} & \texttt{Onecycle} & \texttt{Onecycle} \\
CIFAR100 LR        & $0.05$ & $0.00025$ & $0.0075$ & $0.00005$ \\
CIFAR100 WD        & $0.0005$ & $0.0005$ & $0.0005$ & $0.0005$ \\
\midrule
Tiny Scheduler     & \texttt{Onecycle} & \texttt{Onecycle} & \texttt{Onecycle} & -- \\
Tiny LR            & $0.01$ & $0.0005$ & $0.075$ & -- \\
Tiny WD            & $0.0005$ & $0.0001$ & $0.0005$ & -- \\
\midrule
Imagenette Scheduler     & \texttt{Cosine} & \texttt{Cosine} & \texttt{Cosine} & \texttt{Cosine} \\
Imagenette LR            & $0.005$ & $0.0001$ & $0.0005$ & $0.0001$ \\
Imagenette WD            & $0.0005$ & $0.0005$ & $0.0005$ & $0.0005$ \\
\bottomrule
\end{tabular}
\end{table}
\begin{table}[h]
\centering
\caption{Learning rates used in the ablation on the unfolding set $\tau$ for ResNet-18 on CIFAR-10 after 25 epochs. The learning rate is tuned per method via grid search.}
\label{tab:tau-ablation-lr}
\small
\setlength{\tabcolsep}{4pt}
\renewcommand{\arraystretch}{1.15}
\resizebox{\textwidth}{!}{%
\begin{tabular}{lccccccc c c cc}
\toprule
& \multicolumn{7}{c}{\textbf{Tensorion (static $\tau$)}} 
& \multicolumn{1}{c}{\textbf{Offline}} 
& \multicolumn{1}{c}{\textbf{Online}} 
& \multicolumn{2}{c}{\textbf{Baselines}} \\
\cmidrule(lr){2-8}
\cmidrule(lr){9-9}
\cmidrule(lr){10-10}
\cmidrule(lr){11-12}
\textbf{Hyperparameter} 
& $\{1\}$ 
& $\{2\}$ 
& $\{3\}$ 
& $\{4\}$ 
& $\{1,2\}$ 
& $\{1,3\}$ 
& $\{1,4\}$ 
& $\tau^{\mathrm{opt}}$ 
& online $\tau$ 
& SGD+M. 
& AdamW \\
\midrule
Learning rate
& $0.01$
& $0.001$
& $0.001$
& $0.005$
& $0.005$
& $0.001$
& $0.005$
& $0.01$
& $0.005$
& $0.05$
& $0.0005$ \\
\bottomrule
\end{tabular}%
}
\end{table}

\section{Compute resources}\label{sec:appendix_comp_res}

All experiments were conducted on a shared high-performance computing infrastructure. 
The experiments reported in Section~\ref{subsec:ablation-tau} required approximately 300 GPU-hours on NVIDIA V100-SXM2-32GB accelerators. 
All remaining experiments, including baseline training, ablation studies, and final model evaluations, consumed approximately 2{,}000 GPU-hours on NVIDIA A100-SXM-80GB accelerators. 

In total, the computational budget for the reported results amounts to roughly 2{,}300 GPU-hours. 

Hyperparameter search, preliminary runs, and failed attempts are included in these estimates. We confirm that the paper reports the full set of experiments conducted within this budget, and no selective reporting based on compute availability was applied. 

\end{document}